\newtheorem{theorem}{Theorem}
\newtheorem{proposition}[theorem]{Proposition}
\newcommand{\cf}{\emph{cf.}\xspace}
\newcommand{\bdmath}{\begin{dmath}}
\newcommand{\edmath}{\end{dmath}}
\newcommand{\beq}{\begin{equation}}
\newcommand{\eeq}{\end{equation}}
\newcommand{\bdm}{\begin{displaymath}}
\newcommand{\edm}{\end{displaymath}}
\newcommand{\bea}{\begin{eqnarray}}
\newcommand{\eea}{\end{eqnarray}}
\newcommand{\beal}{\beq \begin{array}{ll}}
\newcommand{\eeal}{\end{array} \eeq}
\newcommand{\beas}{\begin{eqnarray*}}
\newcommand{\eeas}{\end{eqnarray*}}
\newcommand{\ba}{\begin{array}}
\newcommand{\ea}{\end{array}}
\newcommand{\bit}{\begin{itemize}}
\newcommand{\eit}{\end{itemize}}
\newcommand{\ben}{\begin{enumerate}}
\newcommand{\een}{\end{enumerate}}
\newcommand{\calC}{{\cal C}}
\newcommand{\calE}{{\cal E}}
\newcommand{\calG}{{\cal G}}
\newcommand{\calI}{{\cal I}}
\newcommand{\calK}{{\cal K}}
\newcommand{\calN}{{\cal N}}
\newcommand{\calV}{{\cal V}}
\newcommand{\eg}{\emph{e.g.,}\xspace}
\newcommand{\ie}{\emph{i.e.,}\xspace}
\newcommand{\hide}[1]{}
\newcommand{\wrt}{w.r.t.\xspace}
\newcommand{\hiddenText}{{\color{gray} hidden text.}}
\newcommand{\hideWithText}[1]{\hiddenText}
\newcommand{\kron}{\otimes}
\newcommand{\subject}{\text{ subject to }}
\DeclareMathOperator*{\argmin}{arg\,min}
\newcommand{\norm}[1]{\left\| #1 \right\|}
\newcommand{\tran}{^{\mathsf{T}}}
\newcommand{\diag}[1]{\mathrm{diag}\left(#1\right)}
\newcommand{\trace}[1]{\mathrm{tr}\left(#1\right)}
\newcommand{\rank}[1]{\mathrm{rank}\left(#1\right)}
\newcommand{\inv}{^{-1}}
\newcommand{\zero}{{\mathbf 0}}
\newcommand{\eye}{{\mathbf I}}
\newcommand{\Real}[1]{ { {\mathbb R}^{#1} } }
\newcommand{\SEthree}{\ensuremath{\mathrm{SE}(3)}\xspace}
\newcommand{\SOthree}{\ensuremath{\mathrm{SO}(3)}\xspace}
\newcommand{\Othree}{\ensuremath{\mathrm{O}(3)}\xspace}
\newcommand{\scenario}[1]{{\smaller \sf#1}\xspace}
\newcommand{\gtwoo}{{\smaller\sf g2o}\xspace}
\newcommand{\gtsam}{{\smaller\sf gtsam}\xspace}
\newcommand{\blue}[1]{{\color{blue}#1}}
\newcommand{\linkToPdf}[1]{\href{#1}{\blue{(pdf)}}}
\newcommand{\linkToPpt}[1]{\href{#1}{\blue{(ppt)}}}
\newcommand{\linkToCode}[1]{\href{#1}{\blue{(code)}}}
\newcommand{\linkToWeb}[1]{\href{#1}{\blue{(web)}}}
\newcommand{\linkToVideo}[1]{\href{#1}{\blue{(video)}}}
\newcommand{\linkToMedia}[1]{\href{#1}{\blue{(media)}}}
\newcommand{\award}[1]{\xspace} 
\renewcommand{\norm}[1]{\left\lVert #1 \right\rVert}
\newcommand{\vectorize}[1]{\mathrm{vec}\parentheses{#1}}
\newcommand{\cbrace}[1]{\left\{#1\right\}}
\newcommand{\sym}[1]{\mathbb{S}^{#1}}
\newcommand{\bmat}{\left[ \begin{array}}
\newcommand{\emat}{\end{array}\right]}
\newcommand{\parentheses}[1]{\left(#1\right)}
\newcommand{\bracket}[1]{\left[#1\right]}
\newcommand{\tldp}{\widetilde{p}}
\newcommand{\hatp}{\widehat{p}}
\newcommand{\sesync}{\scenario{SE-Sync}}
\newcommand{\ransac}{\scenario{RANSAC}}
\newcommand{\colmap}{\scenario{COLMAP}}
\newcommand{\ceres}{\scenario{Ceres}}
\renewcommand{\gtsam}{\scenario{GTSAM}}
\newcommand{\SIMthree}{\mathrm{SIM}(3)}
\newcommand{\simsync}{\scenario{SIM-Sync}}
\newcommand{\sR}{\widetilde{R}}
\renewcommand{\gtwoo}{\scenario{g2o}}
\newcommand{\teaser}{\scenario{TEASER}}
\newcommand{\gnc}{\scenario{GNC}}
\newcommand{\tum}{\scenario{TUM}}
\newcommand{\sOthree}{\mathfrak{s}\mathrm{O}(3)}
\newcommand{\qcqp}{\scenario{QCQP}}
\newcommand{\sesyncgtwoo}{\scenario{SE-Sync+g2o}}
\newcommand{\bal}{\scenario{BAL}}
\newcommand{\simsyncgnc}{\scenario{SIM-Sync-GNC}}
\newcommand{\gncsimsync}{\scenario{GNC+SIM-Sync}}
\newcommand{\teasersimsync}{\scenario{TEASER+SIM-Sync}}
\newcommand{\teasersesync}{\scenario{TEASER+SE-Sync}}
\newcommand{\teasersimsyncgt}{\scenario{TEASER+SIM-Sync+GT}}
\newcommand{\sift}{\scenario{SIFT}}
\newcommand{\caps}{\scenario{CAPS}}
\newcommand{\midas}{\scenario{MiDaS-v3}}
\newcommand{\orbslam}{\scenario{ORB-SLAM3}}
\newcommand{\teasersimsyncgtdepth}{\scenario{TEASER+SIM-Sync+GTDepth}}
\title{\vspace{-20mm} SIM-Sync: From Certifiably Optimal Synchronization over the 3D Similarity Group to Scene Reconstruction with Learned Depth}
\date{}
\author{Xihang Yu\thanks{Work done during visit at the Harvard Computational Robotics Lab. Email: \texttt{xihangyu@umich.edu}} }
\affil{College of Literature, Science, and the Arts, University of Michigan. }
\author{Heng Yang\thanks{Email: \texttt{hankyang@seas.harvard.edu}}}
\affil{School of Engineering and Applied Sciences, Harvard University}
\begin{document}

\maketitle


\vspace{-8mm}

\begin{abstract}

    We present \simsync, a \emph{certifiably optimal} algorithm that estimates camera trajectory and 3D scene structure \emph{directly from multiview image keypoints}. \simsync fills the gap between pose graph optimization and bundle adjustment; the former admits efficient global optimization but requires relative pose measurements and the latter directly consumes image keypoints but is difficult to optimize globally (due to camera projective geometry). 
    
    The bridge to this gap is a \emph{pretrained} depth prediction network. Given a graph with nodes representing monocular images taken at unknown camera poses and edges containing pairwise image keypoint correspondences, \simsync first uses a pretrained depth prediction network to \emph{lift} the 2D keypoints into 3D \emph{scaled} point clouds, where the scaling of the per-image point cloud is unknown due to the scale ambiguity in monocular depth prediction. \simsync then seeks to \emph{synchronize} jointly the unknown camera poses and scaling factors (\ie over the 3D similarity group) by minimizing the sum of the Euclidean distances between edge-wise scaled point clouds. The \simsync formulation, despite nonconvex, allows designing an efficient certifiably optimal solver that is almost identical to the \sesync algorithm. Particularly, after solving the translations in closed-form, the remaining optimization over the rotations and scales can be written as a \emph{quadratically constrained quadratic program}, for which we apply Shor's semidefinite relaxation. We show how to add scale regularization in the semidefinite program to prevent contraction of the estimated scales.

    We demonstrate the tightness, robustness, and practical usefulness of \simsync in both simulated and real experiments. In simulation, we show (i) \simsync compares favorably with \sesync in scale-free synchronization, and (ii) \simsync can be used together with robust estimators to tolerate a high amount of outliers. In real experiments, we show (a) \simsync achieves similar performance as \ceres on bundle adjustment datasets, and (b) \simsync performs on par with \orbslam on the \tum dataset with zero-shot depth prediction.\footnote{Code available: \url{https://github.com/ComputationalRobotics/SIM-Sync}.}
\end{abstract}

\section{Introduction}
\label{sec:introduction}

3D scene reconstruction and camera trajectory estimation from image sequences remains one of the most fundamental and extensively studied problems in robotics and computer vision. At the heart of this problem lies (i) finding reliable \emph{feature correspondences}, \eg keypoints, across images (sometimes referred to as data association in robotics), and (ii) \emph{estimating camera poses} given the associated features. In this paper, we focus on the camera trajectory estimation problem and denote $x_i = (R_i,t_i) \in \SEthree, i=1,\dots,N$ as the set of camera poses to be estimated.

A long list of formulations and solutions have been developed for camera trajectory estimation, for which we refer to \cite[Chapter 9]{barfoot17book-state} and \cite[Chapter 11]{szeliski22book-computer}.
We motivate our work by describing two of the most popular formulations.

The first formulation, exemplified by \emph{pose graph optimization} (PGO)~\cite{rosen2019se} for simultaneous localization and mapping (SLAM) in robotics, first estimates \emph{relative} camera poses from image features, \ie estimating $\tilde{x}_{ij} \approx x_i \inv x_j$ for $(i,j) \in \calE$\footnote{In PGO, a pose graph is formulated, where the node set $\calV = \{1,\dots,N\}$ includes the unknown absolute camera poses, and the edge set $\calE$ contains all pairs of nodes such that relative poses can be measured.} with overlapping image features,\footnote{For example, relative camera poses can be estimated using \ransac~\cite{fischler81acm-ransac} plus the five-point algorithm~\cite{nister04pami-fivept}. More generally, such relative poses can be estimated not only from cameras images, but also from other sensor modalities such as IMU, GPS, and LiDAR with suitable algorithms.} and then solves an optimization problem that \emph{synchronizes} $\{ x_i \}_{i=1}^N$ from the relative measurements $\{\tilde{x}_{ij} \}_{(i,j) \in \calE}$. The synchronization problem, despite nonconvex, has an objective function that is \emph{polynomial} in the unknowns. Consequently, the seminal work \sesync~\cite{rosen2019se,carlone15iros-lagrangian,carlone16tro-planar} demonstrated efficiently solving the problem to \emph{certifiable global optimality} using semidefinite programming (SDP) relaxations and customized low-rank SDP solvers. Holmes and Barfoot~\cite{holmes23ral-efficient} derived similar SDP-based global optimality certificates for landmark-based SLAM, as long as the landmark measurements are in 3D, which preserves the polynomiality of the objective function.  

The second formulation, exemplified by \emph{bundle adjustment} (BA)~\cite{agarwal10eccv-ba,schoenberger16cvpr-sfm} for structure from motion (SfM) in computer vision, solves an optimization problem that jointly estimates camera poses and 3D keypoints by minimizing \emph{geometric reprojection errors}. The key challenge to solve this formulation is that the objective function is no longer polynomial in the camera poses and 3D keypoints, but rather a sum of \emph{rational} functions. Therefore, the most popular solution methods, \eg~\colmap~\cite{agarwal10eccv-ba}, \ceres~\cite{Agarwal22-ceres}, and \gtsam~\cite{gtsam}, rely on gradient-based local optimization techniques and can be sensitive to the quality of initialization. It is, however, not impossible to solve this formulation to global optimality. For example, by replacing the geometric reprojection error with the \emph{object space
error},\footnote{The object space error is essentially the point-to-line distance between the 3D keypoint and the bearing vector emanating from the camera center to the 2D image keypoint. This error function has been used by multiple authors as an approximation for the geometric reprojection error~\cite{kneip14eccv-upnp}.} \cite{schweighofer06bmvc-fast} recovers polynomiality and designed a globally convergent algorithm, albeit not based on SDP relaxations. It may also be possible to apply the SDP relaxation hierarchy designed for rational function optimization~\cite{bugarin16mpc-rational} to the BA formulation. However, the SDP relaxation hierarchy in~\cite{bugarin16mpc-rational} scales poorly (and worse than its polynomial counterpart) and such an attempt has never been made in the literature. 

In summary, given image feature correspondences, the BA-type formulation estimates camera poses in a single step by minimizing an objective function that is directly constructed from image keypoints.\footnote{We remark that BA algorithms often estimate relative camera poses to initialize the joint optimization in camera poses and 3D keypoints. Therefore, one can also consider them as two-step approaches.} The PGO-type formulation, however, takes a two-step approach, where the first step estimates relative camera poses and the second step performs pose synchronization. The BA-type formulation is more straightforward than the PGO-type formulation,\footnote{The BA-type formulation only require camera images, while the PGO-type formulation typically requires additional sensors such as IMU.} but more difficult to optimize globally. Therefore, we ask the question: \emph{can we design a camera trajectory estimation formulation that (i) directly consumes image features and (ii) admits efficient global optimization?}

{\bf Contributions}. Inspired by the recent trend in computer vision~\cite{luo2020consistent,kopf21robust,zhang2022structure, merrillfast} that leverages \emph{a learned depth prediction network}~\cite{Ranftl2022} for bundle adjustment, we propose a \emph{certifiably optimal} camera trajectory estimation algorithm that directly consumes image feature correspondences. The key insight is that, with the help of a pretrained depth prediction network, 2D keypoint correspondences can be effectively \emph{lifted} to 3D keypoint correspondences, leading to a formulation whose objective function is again polynomial in the unknown camera poses. Yet different from the formulation in {multiple point cloud registration} (MPCR)~\cite{chaudhury15siopt-mpcr,iglesias20cvpr-global}, the lifted 3D keypoints have an \emph{unknown scaling factor} (per camera frame) due to the scale ambiguity of depth prediction. As a result, our formulation seeks to estimate, for each camera frame $i \in [N]$, both the camera pose $(R_i,t_i) \in \SEthree$ and the scaling factor $s_i > 0$, \ie an element $x_i = (s_i,R_i,t_i)$ in the \emph{3D similarity group} $\SIMthree$. For this reason, we call our formulation \simsync, which performs synchronization over $\SIMthree$ using pairwise image keypoint correspondences.\footnote{An earlier work~\cite{strasdat10rss-scale} formulates a synchronization problem over $\SIMthree$ with relative pose measurements instead of direct image keypoints.} A graphical illustration of \simsync using the \tum dataset~\cite{sturm2012benchmark} as an example is provided in Fig.~\ref{SIM-Sync_vis}.

The nice property of our \simsync formulation is that it allows designing a certifiably optimal solver in a way that is almost identical to \sesync. Specifically, we first solve the unknown translations as a function of the rotations and scales, arriving at an optimization problem whose objective is a \emph{quartic} (\ie degree-four) polynomial in the rotations and scales. Then, by creating \emph{scaled rotations} $ \sR_i = s_i R_i, i=1,\dots,N$, the quartic polynomial becomes \emph{quadratic} and the problem becomes a quadratically constrained quadratic program (QCQP), for which we apply the standard Shor's semidefinite relaxation~\cite{bao11mp-qcqp}. Given the same number of camera frames $N$, our \simsync relaxation leads to an SDP having the same matrix size as that of \sesync ($3N \times 3N$), but with \emph{fewer} linear equality constraints (due to fewer quadratic equality constraints on $\sR_i$). Moreover, we show that it is possible to \emph{regularize} the scale estimation to be close to $1$ by adding $\sum_{i=1}^N(s_i^2 - 1)^2$ into the objective, which can be conveniently handled by the SDP relaxation with small positive semidefinite variables. This regularization effectively prevents \emph{contraction} (\ie $s_i$ tends to zero) of the estimated camera trajectory in certain test cases (\eg the 3D grid graph).

We then conduct a suite of simulated and real experiments to investigate the empirical performance of \simsync. Particularly, with simulations we show that (i) the \simsync relaxation is almost always \emph{exact} (tight) under small to medium noise corruption in the measurements, \ie globally optimal estimates can be computed and certified; (ii) \simsync achieves similar estimation accuracy as \sesync (and \sesync refined by \gtwoo); (iii) \simsync can be robustified against $40-80\%$ outlier correspondences by running \gnc~\cite{yang2020graduated} and \teaser~\cite{yang2020teaser} to preprocess pairwise correspondences. With real experiments we show that (a) \simsync compares favorably with \ceres on the \bal bundle adjustment dataset~\cite{agarwal10eccv-ba}, and (b) \simsync achieves similar performance as \orbslam~\cite{campos2021orb} on the \tum dataset~\cite{sturm2012benchmark} with zero-shot depth prediction.

{\bf Paper organization}. 
We present the \simsync formulation in Section~\ref{sec:problem-formulation}, where we also derive the simplified QCQP formulation. We introduce the semidefinite relaxation and scale regularization in Section~\ref{sec:semidefinite-relaxation}. We present experimental results in Section~\ref{sec:experiments} and conclude in Section~\ref{sec:conclusions}.

\begin{figure}[t]
    \centering
    \includegraphics[width= 1.0\columnwidth]{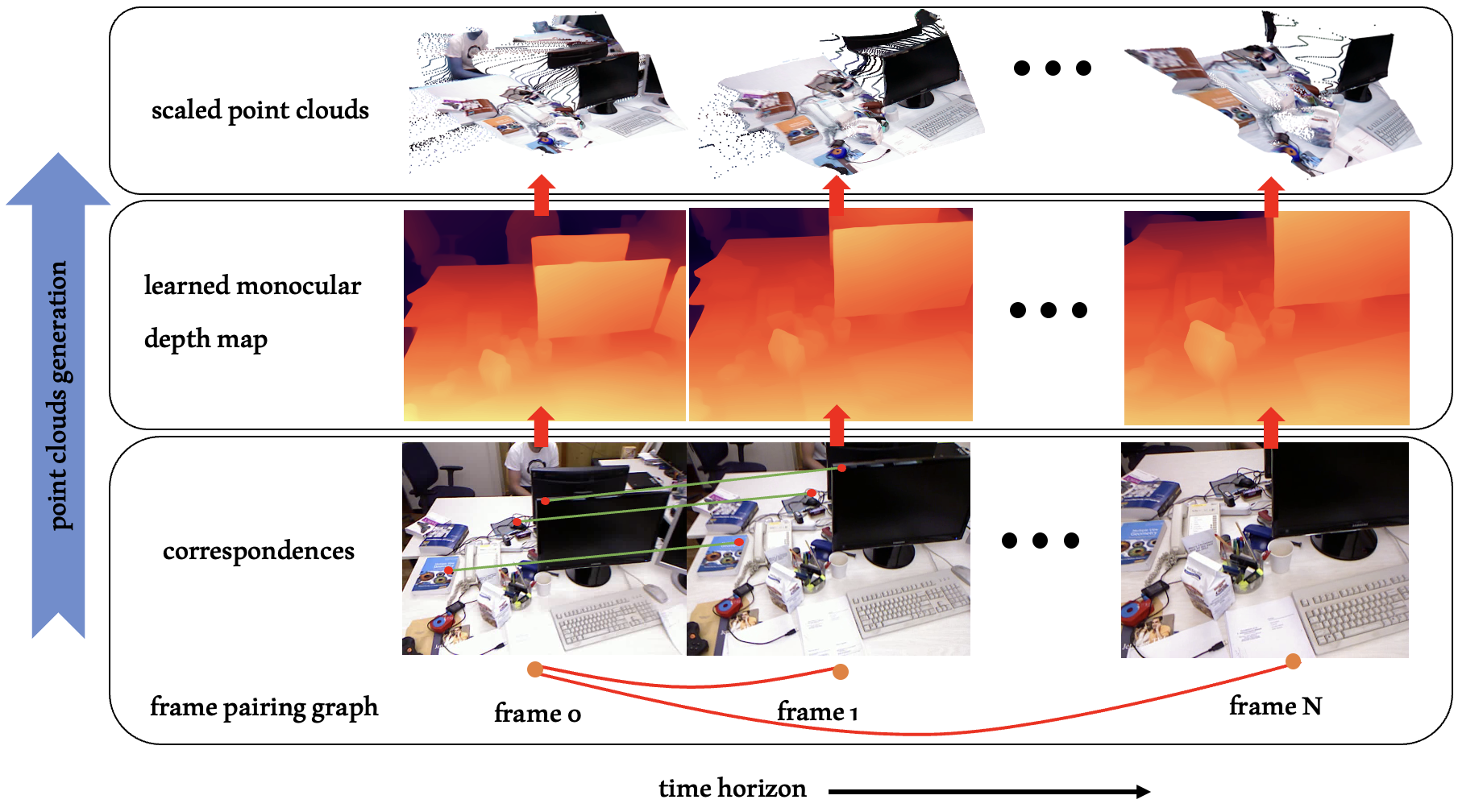}
    \caption{Illustration of \simsync on the \tum dataset~\cite{sturm2012benchmark}.}
    \label{SIM-Sync_vis}
\end{figure}

\section{Problem Formulation}
\label{sec:problem-formulation}

Consider a graph $\calG = (\calV,\calE)$, where each node $i \in \calV = [N]$ is associated with an RGB image $I_i \in \Real{H \times W \times 3}$ and an unknown camera pose $(R_i,t_i) \in \SEthree$, and each edge $(i,j) \in \calE$ contains a set of $n_{ij}$ dense pixel-to-pixel correspondences $\calC_{ij} = \{ p_{i,k} \leftrightarrow p_{j,k} \}_{k=1}^{n_{ij}}$ with $p_{i,k} \in \Real{2}$ the $k$-th pixel location in image $I_i$ and $p_{j,k} \in \Real{2}$ the $k$-th pixel location in image $I_j$. Assuming all the camera intrinsics $\{K_i\}_{i=1}^N$ are known, we can compute 
\bea 
\tldp_{i,k} = K_i\inv \bmat{c} p_{i,k}^x \\ p_{i,k}^y \\ 1 \emat
\eea 
as the \emph{bearing vector} normalized by the camera intrinsics. The third entry of $\tldp_{i,k}$ is equal to $1$.

{\bf Pretrained depth prediction}. Suppose we are given a pretrained depth estimation network that, for each image $I_i$, produces a depth map. Let $d_{i,k} > 0$ be the predicted depth of $p_{i,k}$ and $s_i > 0$ be the unknown scale coefficient for image $I_i$.\footnote{In practice, we use interpolation to obtain $d_{i,k}$ because the depth map is discretized. We also discard depth values that are too far away, which tend to be erroneous.} Consequently, 
\bea
\hatp_{i,k} = s_i d_{i,k} \tldp_{i,k}
\eea  
corresponds to the 3D location of $p_{i,k}$ in the $i$-th camera frame. Effectively, with the pretrained depth predictor, for every $(i,j) \in \calE$, we have a pair of \emph{scaled} point cloud measurements $\{ d_{i,k} \tldp_{i,k} \}_{k=1}^{n_{ij}}$ and $\{ d_{j,k} \tldp_{j,k} \}_{k=1}^{n_{ij}}$, as shown in Fig.~\ref{SIM-Sync_vis} top panel.

{\bf The \simsync formulation}. We are interested in estimating the unknown camera poses and the per-image scale coefficients $\{x_i=(s_i, R_i, t_i) \}_{i=1}^N$. 
We formulate the following optimization
\begin{equation}
\label{eq:sba}
\min_{\substack{s_i > 0, R_i \in \SOthree, t_i \in \Real{3} \\ i=1,\dots,N } }  \sum_{(i,j) \in \calE} \sum_{k=1}^{n_{ij}} w_{ij,k}\norm{ \parentheses{R_i \underbrace{(s_i d_{i,k} \tldp_{i,k} )}_{\hatp_{i,k}} + t_i} - \parentheses{R_j\underbrace{(s_j d_{j,k} \tldp_{j,k} )}_{\hatp_{j,k}} + t_j} }^2 \tag{\simsync}
\end{equation} 
where the objective function seeks to minimize the 3D point-to-point distances because $(R_i,t_i)$ transforms $\hatp_{i,k}$, and $(R_j,t_j)$ transforms $\hatp_{j,k}$ into the same global coordinate frame. In \eqref{eq:sba}, we include $w_{ij,k} > 0$ for generality: these known weights capture the potential uncertainty of the correspondences. Usually these weights are unknown and in our experiments we use \gnc and \teaser to estimate them so that $w_{ij,k} = 1$ indicates inliers and $w_{ij,k} = 0$ indicates outliers. 

{\bf Anchoring}.
Problem~\eqref{eq:sba} is ill-defined. One can choose $s_i \rightarrow 0,\forall i=1,\dots,N$, $t_1 = t_2 = \dots = t_N = \text{constant}$, and the objective of~\eqref{eq:sba} can be set arbitrarily close to zero. To resolve this issue, we anchor the first frame and set $R_1 = \eye_3, t_1 = \zero, s_1 = 1$, which is common practice in many related pose graph estimation formulations \cite{rosen2019se}.

\subsection{A QCQP Formulation}

The \eqref{eq:sba} formulation is readily in the form of a polynomial optimization problem (POP). The objective function is a quartic polynomial, the constraint $s_i > 0$ is an affine polynomial inequality, and the constraint $R_i \in \SOthree$ is equivalent to a set of quadratic polynomial constraints~\cite{yang2022certifiably}. Therefore, one can directly apply Lasserre's hierarchy of moment relaxations~\cite{lasserre01siopt-global} to design convex SDP relaxations for \eqref{eq:sba}. However, as noted in \cite{yang2022certifiably}, a direct application of Lasserre's hierarchy often leads to SDPs beyond the scalability of current solvers. Therefore, in the following we will simplify \eqref{eq:sba} as a quadratically constrained quadratic problem (QCQP), in a way that is inspired by \sesync~\cite{rosen2019se}.

Our first step is to simplify the objective function in \eqref{eq:sba}.

\begin{proposition}[Simple Objective Function]
    \label{prop:simple-objective}
    Let $t = [t_1\tran,\dots,t_N\tran]\tran \in \Real{3N}$ be the concatenation of translations, and $r = [\vectorize{s_1 R_1}\tran,\dots,\vectorize{s_N R_N}\tran]\tran \in \Real{9N}$ be the concatenation of (vectorized) scaled rotations, then the objective function of \eqref{eq:sba} can be written as
    \begin{equation}\label{eq:simpleobjective}
        L(t,r) = t\tran (Q_1 \kron \eye_3)t + 2 r\tran (V \kron \eye_3) t + r\tran (Q_2 \kron \eye_3) r,
    \end{equation}
    where $Q_1,Q_2,V$ can be computed as follows
    \bea 
    Q_1 = & \displaystyle \sum_{(i,j) \in \calE} (W_{ij} e_i \tran - W_{ij} e_j \tran)\tran (W_{ij} e_i \tran - W_{ij} e_j \tran) \in \Real{N \times N}, \label{eq:Q1}\\
Q_2 = & \displaystyle \sum_{(i,j) \in \calE} (e_i \tran \kron P_i \tran  - e_j \tran \kron P_j \tran )\tran (e_i \tran \kron P_i \tran  - e_j \tran \kron P_j \tran ) \in \Real{3N \times 3N},\\
V = & \displaystyle \sum_{(i,j) \in \calE} (e_i \tran \kron P_i \tran  - e_j \tran \kron P_j \tran )\tran (W_{ij} e_i \tran - W_{ij} e_j \tran) \in \Real{3N \times N},
    \eea
    with 
    \bea 
    P_{i} =& \bmat{ccc} \sqrt{w_{ij,1}} d_{i,1} \tldp_{i,1} & \cdots & \sqrt{w_{ij,n_{ij}}} d_{i,n_{ij}}  \tldp_{i,n_{ij}} \emat \in \Real{3 \times n_{ij}}, \quad i =1,\dots,N, \\
    W_{ij} =& [ \sqrt{w_{ij,1}} ,\dots, \sqrt{w_{ij,n_{ij}}} ]\tran \in \Real{n_{ij}}, \quad (i,j) \in \calE,
    \eea
    and $e_i \in \Real{N}$ the all-zero vector except that the $i$-th entry is equal to $1$. 
\end{proposition}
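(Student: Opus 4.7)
The plan is to begin by introducing the scaled rotations $\sR_i = s_i R_i$ as new unknowns, so that $R_i \hatp_{i,k} = \sR_i (d_{i,k} \tldp_{i,k})$ and each summand $w_{ij,k}\norm{R_i\hatp_{i,k}+t_i - R_j\hatp_{j,k}-t_j}^2$ is a quadratic (no longer quartic) polynomial in $(\sR_i, t_i)$ and $(\sR_j, t_j)$. With $r = \vectorize{[\sR_1, \dots, \sR_N]} \in \Real{9N}$ and $t = \vectorize{[t_1, \dots, t_N]} \in \Real{3N}$, the whole objective $L$ is therefore a quadratic form in $(t,r)$, so the remaining task is purely bookkeeping: identify the three blocks $Q_1, Q_2, V$.

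Next I would stack the $n_{ij}$ correspondences on a fixed edge $(i,j)$ into columns. With $P_i = [\sqrt{w_{ij,1}} d_{i,1} \tldp_{i,1}, \dots, \sqrt{w_{ij,n_{ij}}} d_{i,n_{ij}} \tldp_{i,n_{ij}}]$ (and $P_j$ analogous), and $W_{ij} = [\sqrt{w_{ij,1}}, \dots, \sqrt{w_{ij,n_{ij}}}]\tran$, the inner sum over $k$ collapses into a single Frobenius norm
\begin{equation*}
\sum_{k=1}^{n_{ij}} w_{ij,k}\norm{\sR_i(d_{i,k}\tldp_{i,k}) - \sR_j(d_{j,k}\tldp_{j,k}) + (t_i - t_j)}^2 = \norm{X_{ij}}_F^2,
\end{equation*}
where $X_{ij} := \sR_i P_i - \sR_j P_j + (t_i - t_j) W_{ij}\tran \in \Real{3 \times n_{ij}}$. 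This is the compact per-edge object whose contributions will be aggregated.

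Now I would vectorize $X_{ij}$ and use Kronecker identities to expose the linear maps into $t$ and $r$. Writing $t_i - t_j = T(e_i - e_j)$ with $T = [t_1,\dots,t_N]$, and $\sR_i P_i - \sR_j P_j = [\sR_1,\dots,\sR_N](e_i \kron P_i - e_j \kron P_j)$ (using $(e_i \kron I_3) P_i = e_i \kron P_i$), the identity $\vectorize{\M{B} \M{C}} = (\M{C}\tran \kron I_3)\vectorize{\M{B}}$ yields
\begin{equation*}
\vectorize{X_{ij}} = \parentheses{A_{ij}\tran \kron I_3} r + \parentheses{B_{ij}\tran \kron I_3} t,
\end{equation*}
with $A_{ij}\tran = e_i\tran \kron P_i\tran - e_j\tran \kron P_j\tran$ and $B_{ij}\tran = W_{ij} e_i\tran - W_{ij} e_j\tran$, which are exactly the factors appearing in the claimed formulas for $Q_2, V, Q_1$.

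Finally, expanding $\norm{X_{ij}}_F^2 = \vectorize{X_{ij}}\tran \vectorize{X_{ij}}$ and using the mixed-product property $(M \kron I_3)(N \kron I_3) = MN \kron I_3$ produces
\begin{equation*}
\norm{X_{ij}}_F^2 = t\tran(B_{ij} B_{ij}\tran \kron I_3) t + 2 r\tran (A_{ij} B_{ij}\tran \kron I_3) t + r\tran (A_{ij} A_{ij}\tran \kron I_3) r,
\end{equation*}
and summing over $(i,j) \in \calE$ gives exactly \eqref{eq:simpleobjective} with $Q_1 = \sum_{(i,j)} B_{ij} B_{ij}\tran$, $V = \sum_{(i,j)} A_{ij} B_{ij}\tran$, $Q_2 = \sum_{(i,j)} A_{ij} A_{ij}\tran$, matching the stated expressions. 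The only real delicacy, and the place I expect to be most careful, is keeping the Kronecker factor orders straight (in particular verifying that $(e_i \kron I_3) P_i = e_i \kron P_i$ and not the reverse) and remembering that $P_i$ and $W_{ij}$ are implicitly edge-dependent even though the notation suppresses this; everything else is routine algebra.
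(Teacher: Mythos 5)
Your proposal is correct and follows essentially the same route as the paper's proof: collapse each edge's inner sum into a Frobenius norm of $\sR_i P_i - \sR_j P_j + (t_i - t_j)W_{ij}\tran$, vectorize via $\vectorize{\M{B}\M{C}} = (\M{C}\tran \kron \eye_3)\vectorize{\M{B}}$, and expand the resulting quadratic form using the mixed-product property. The only difference is presentational — the paper routes the bookkeeping through explicit selection matrices $Z^R_i = e_i\tran \kron \eye_3 \kron \eye_3$ and $Z^t_i = e_i\tran \kron \eye_3$ before simplifying to the same per-edge factors $e_i\tran \kron P_i\tran - e_j\tran \kron P_j\tran$ and $W_{ij}e_i\tran - W_{ij}e_j\tran$ that you obtain directly.
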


The proof of Proposition \ref{prop:simple-objective} is in Appendix \ref{proof_prop1}.

Now that the objective $L(t,r)$ in \eqref{eq:simpleobjective} is quadratic in $t$, an unconstrained variable. Therefore, we can set the gradient of $L(t,r)$ \wrt $t$ to zero and solve the optimal $t$ in closed form.

\begin{proposition}[Scaled-Rotation-Only Formulation]
    \label{prop:scale-rotation-only}
    Let $R = [s_1 R_1,\dots,s_N R_N] \in \Real{3 \times 3N}$ be the concatenation of (unvectorized) scaled rotations, then problem \eqref{eq:sba} is equivalent to the following optimization
    \bea \label{eq:scaledRonly}
    \rho^\star = \min_{R} \trace{Q R\tran R},
    \eea 
    where $Q$ can be computed as 
    \bea 
    Q = A\tran Q_1 A + VA + A\tran V\tran + Q_2 \in \sym{3N},
    \eea 
    with 
    \bea 
    A = \bmat{c} \zero_{1\times 3N} \\ - (\bar{Q}_1\tran \bar{Q}_1)\inv \bar{Q}_1\tran V\tran \emat \in \Real{N \times 3N},
    \eea 
    and $\bar{Q}_1 \in \Real{N \times (N-1)}$ includes the last $N-1$ columns of $Q_1$. Moreover, denote the optimal solution of \eqref{eq:scaledRonly} as $R^\star$, then the optimal translation to \eqref{eq:sba} can be recovered as 
    \bea 
    t^\star = (A \kron \eye_3) \vectorize{R^\star}.
    \eea 
\end{proposition}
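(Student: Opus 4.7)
The plan is to exploit Proposition~\ref{prop:simple-objective}: since $L(t,r)$ is quadratic in $t$ and $t$ is (apart from the anchor) unconstrained, I would minimize in $t$ in closed form, substitute back, and rewrite the remaining quadratic form in $r$ as $\trace{Q R^T R}$ using a standard Kronecker/vectorization identity.

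First, setting $\nabla_t L(t,r) = 0$ yields the linear system $(Q_1 \kron \eye_3)\, t = -(V^T \kron \eye_3)\, r$. Unfolding \eqref{eq:Q1} shows $Q_1 = \sum_{(i,j)\in\calE} \norm{W_{ij}}^2 (e_i - e_j)(e_i - e_j)^T$ is a weighted graph Laplacian, so $Q_1 \ones_N = 0$ and the system is singular. I would then impose the anchor $t_1 = \zero$ introduced right after~\eqref{eq:sba} by writing $t = (e^{\mathrm{comp}} \kron \eye_3)\,\bar{t}$, where $e^{\mathrm{comp}} = [e_2,\dots,e_N] \in \Real{N \times (N-1)}$ is the matrix of the last $N-1$ standard basis vectors of $\Real{N}$ (so by construction $\bar{Q}_1 = Q_1 e^{\mathrm{comp}}$) and $\bar{t} = [t_2\tran,\dots,t_N\tran]\tran$. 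This reduces the gradient equation to $(\bar{Q}_1 \kron \eye_3)\,\bar{t} = -(V^T \kron \eye_3)\, r$.

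The reduced system has $3N$ equations in only $3(N-1)$ unknowns, so \emph{a priori} it is overdetermined; the key observation is that it is nonetheless consistent, because a short expansion of the definition of $V$ in Proposition~\ref{prop:simple-objective} gives $V \ones_N = 0$, so $V^T r \in \ones_N^\perp = \text{range}(Q_1)$, which coincides with the column space of $\bar{Q}_1$ when $\calG$ is connected. Consequently the unique solution equals the normal-equations solution $\bar{t} = -\bigl((\bar{Q}_1^T \bar{Q}_1)^{-1}\bar{Q}_1^T V^T \kron \eye_3\bigr) r$. Prepending the zero block for $t_1$ and using the mixed-product rule $(AB)\kron(CD) = (A\kron C)(B\kron D)$ yields exactly $t^\star = (A \kron \eye_3)\, r$ with the matrix $A$ stated in the proposition.

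Finally, I would substitute $t = (A \kron \eye_3)\, r$ back into $L$. The pure-$t$ term becomes $r^T (A^T Q_1 A \kron \eye_3) r$, while the cross term $2 r^T (V \kron \eye_3)(A \kron \eye_3) r = 2 r^T (VA \kron \eye_3) r$ symmetrizes (via its equality with its own transpose) to $r^T ((VA + A^T V^T)\kron \eye_3) r$; adding the $r^T (Q_2 \kron \eye_3) r$ term identifies $Q = A^T Q_1 A + VA + A^T V^T + Q_2$ and gives $L(t^\star,r) = r^T (Q \kron \eye_3) r$. To turn this into trace form I would invoke the elementary identity $r^T (M \kron \eye_3) r = \trace{M R^T R}$, valid for any $M \in \Real{3N \times 3N}$ and $R \in \Real{3 \times 3N}$ with $r = \vectorize{R}$; writing $R = [c_1,\dots,c_{3N}]$ in columns, both sides equal $\sum_{i,j} M_{ij}\, c_i^T c_j$. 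The main obstacle is the consistency check $V \ones_N = 0$ that legitimates the normal-equations formula as the true minimizer rather than merely a least-squares proxy; everything else is routine Kronecker bookkeeping on top of the Laplacian structure of $Q_1$.
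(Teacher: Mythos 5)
Your proof follows the same route as the paper's own argument in Appendix~\ref{proof_prop2}: eliminate $t$ by stationarity of the quadratic $L(t,r)$, use the Laplacian structure of $Q_1$ together with the anchor $t_1=\zero$ to reduce to the full-column-rank system in $\bar{Q}_1$, solve it via the normal equations, and substitute back using the identity $r\tran (M\kron\eye_3) r = \trace{M R\tran R}$. The only place you go beyond the paper is the explicit consistency check $V\ones_N = 0$ (which legitimizes the normal-equations formula as the exact solution of the overdetermined reduced system rather than a least-squares proxy); the paper tacitly assumes this when it ``takes the inverse and rearranges,'' so your addition is a welcome tightening rather than a deviation.
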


The proof of Proposition \ref{prop:scale-rotation-only} is in Appendix \ref{proof_prop2}.

Problem \eqref{eq:scaledRonly} is still a quartic POP because the variable $R$ contains the product between scales and rotations. Our last step is to create new variables $\bar{R}_i = s_i R_i, i=1,\dots,N$ so that problem \eqref{eq:scaledRonly} becomes a QCQP.

\begin{proposition}[QCQP Formulation]\label{prop:qcqp}
    Let $\sOthree \subset \Real{3\times 3}$ be the set of matrices that can be written as the product between a nonnegative scalar and a $3\times 3$ orthogonal matrix, \ie 
    \bea \label{eq:sOthreedef}
    \sOthree = \{ \bar{R} \in \Real{3\times 3} \mid \exists s \geq 0, R \in \Othree \text{ such that } \bar{R} = sR \}.
    \eea 
    Then $\sOthree$ can be described by the following quadratic constraints
    \bea \label{eq:sOthreequadratic}
    \bar{R} = \bmat{ccc} c_1 & c_2 & c_3 \emat \in \sOthree \Longleftrightarrow \begin{cases}
        c_1\tran c_1 = c_2\tran c_2 = c_3\tran c_3 \\
        c_1\tran c_2 = c_2\tran c_3 = c_3\tran c_1 = 0
    \end{cases}.
    \eea 
    Consider the following quadratically constrained quadratic program (QCQP)
    \begin{equation} \label{eq:simsync-qcqp}
        \rho_{\qcqp}^\star = \min_{R} \trace{Q R\tran R} \subject R = \bmat{ccc} \bar{R}_1 & \cdots & \bar{R}_N \emat \in \sOthree^N, \tag{\qcqp}
    \end{equation}  
    and let $R^\star = [\bar{R}_1^\star,\dots,\bar{R}_N^\star]\tran$ be a global optimizer. If 
    \bea \label{eq:determinant}
    \det{\bar{R}_1^\star } > 0, i=1,\dots,N,
    \eea 
    then $R^\star$ is a global minimizer to problem \eqref{eq:scaledRonly} and hence also \eqref{eq:sba}.
\end{proposition}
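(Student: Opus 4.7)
My plan is to split the proposition into two independent claims and handle them in sequence: (i) the algebraic characterization \eqref{eq:sOthreequadratic} of $\sOthree$ by the six quadratic equalities, and (ii) the equivalence between \eqref{eq:scaledRonly} and \eqref{eq:simsync-qcqp} under the determinant hypothesis \eqref{eq:determinant}. For (i), I would prove both implications directly. If $\bar R = sR$ with $s\geq 0$ and $R\in\Othree$, then $\bar R\tran \bar R = s^2 R\tran R = s^2 \MI_3$; reading off the diagonal and off-diagonal entries of $\bar R\tran \bar R = [c_i\tran c_j]_{i,j=1,2,3}$ recovers the six equalities. Conversely, those equalities collectively state $\bar R\tran \bar R = \alpha\,\MI_3$ with $\alpha := c_1\tran c_1 \geq 0$. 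Set $s := \sqrt{\alpha}$: if $s=0$ then $\bar R = 0 = 0\cdot\MI_3 \in \sOthree$, while if $s>0$ then $R := \bar R/s$ satisfies $R\tran R = \MI_3$, so $R\in\Othree$ and $\bar R = sR\in\sOthree$.

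For (ii), the crucial observation is that \eqref{eq:simsync-qcqp} is a relaxation of \eqref{eq:scaledRonly}: every block $s_i R_i$ with $s_i > 0$ and $R_i\in\SOthree$ is trivially in $\sOthree$, so the feasible set of the QCQP contains that of \eqref{eq:scaledRonly}, giving $\rho_\qcqp^\star \leq \rho^\star$. For the reverse, let $R^\star = [\bar R_1^\star,\dots,\bar R_N^\star]$ be a QCQP minimizer and, using (i), write each $\bar R_i^\star = s_i^\star R_i^\star$ with $s_i^\star \geq 0$ and $R_i^\star\in\Othree$. The assumption $\det \bar R_i^\star > 0$ rules out $s_i^\star = 0$ (otherwise $\bar R_i^\star = 0$ and its determinant vanishes), forcing $s_i^\star > 0$; and $\det \bar R_i^\star = (s_i^\star)^3 \det R_i^\star > 0$ together with $\det R_i^\star\in\{-1,+1\}$ forces $\det R_i^\star = +1$, i.e., $R_i^\star\in\SOthree$. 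Hence $R^\star$ is feasible for \eqref{eq:scaledRonly}, so $\rho^\star \leq \rho_\qcqp^\star$. The two optimal values coincide, $R^\star$ is a global minimizer of \eqref{eq:scaledRonly}, and Proposition~\ref{prop:scale-rotation-only} then transports this to a global minimizer of \eqref{eq:sba}.

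The principal obstacle is not conceptual but bookkeeping: one must track precisely where the feasible sets of \eqref{eq:scaledRonly} and \eqref{eq:simsync-qcqp} diverge, namely the QCQP admits the degenerate scale $s_i = 0$ and the reflection branch $\det R_i = -1$, neither of which is permitted in \eqref{eq:scaledRonly}. The determinant hypothesis \eqref{eq:determinant} is the minimal condition that simultaneously excludes both spurious branches, which is what makes it both natural and sharp; were it to fail, the QCQP minimizer could correspond to a reflected or contracted configuration with no valid preimage in the $\SIMthree$ feasible set, and the relaxation would fail to recover a minimizer of \eqref{eq:sba}.
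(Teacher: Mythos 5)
Your proposal is correct and follows essentially the same route as the paper's proof: the forward direction of \eqref{eq:sOthreequadratic} via $\bar{R}\tran\bar{R}=s^2\eye_3$, the converse by the case split on $\alpha=0$ versus $\alpha>0$, and then the observation that \eqref{eq:simsync-qcqp} relaxes \eqref{eq:scaledRonly} combined with the determinant condition ruling out both the zero-scale and reflection branches so that $R^\star$ is feasible (hence optimal) for \eqref{eq:scaledRonly}. The only cosmetic difference is that you make the inequality $\rho^\star\leq\rho^\star_{\qcqp}$ and the resulting equality of optimal values explicit, which the paper leaves implicit.
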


The proof of Proposition \ref{prop:qcqp} is in Appendix \ref{proof_prop3}.

With Proposition \ref{prop:qcqp}, we know that if we can solve \eqref{eq:simsync-qcqp} to global optimality, then by checking the determinants of the optimal solution as in \eqref{eq:determinant}, we can certify its global optimality to the original \eqref{eq:sba} problem. In fact, as shown in \sesync~\cite{rosen2019se}, typically the relaxation from $\SOthree$ to $\Othree$ is tight because the set of rotations and the set of reflections in $\Othree$ are disjoint from each other. Therefore, we can almost expect $\rho^\star_{\qcqp} = \rho^\star$ (as we also observe in experiments).

\section{Semidefinite Relaxation}
\label{sec:semidefinite-relaxation}

The previous section has reformulated (more precisely, relaxed) the \eqref{eq:sba} formulation as the compact \eqref{eq:simsync-qcqp}. We can now design the following semidefinite relaxation. 

\begin{proposition}[SDP Relaxation]\label{prop:sdprelaxation}
    The following semidefinite program (SDP)
    \bea\label{eq:sdp}
    f^\star = \min_{X \in \sym{3N}} \trace{QX} \subject X = \bmat{ccc} \alpha_1 \eye_3 & \cdots & * \\
    \vdots & \ddots & \vdots \\
    * & \cdots & \alpha_N \eye_3
    \emat \succeq 0
    \eea
    is a convex relaxation to \eqref{eq:simsync-qcqp} and $f^\star \leq \rho^\star_{\qcqp}$. Let $X^\star$ be a global minimizer of \eqref{eq:sdp}. If $\rank{X^\star} = 3$, then $X^\star$ can be factorized as $X^\star = (R^\star) \tran R^\star$, where $R^\star \in \sOthree^N$ is a global optimizer to \eqref{eq:simsync-qcqp}.  
\end{proposition}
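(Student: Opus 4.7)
The plan is to establish the relaxation inequality $f^\star \le \rho_{\qcqp}^\star$ by exhibiting a lift from any QCQP-feasible point to an SDP-feasible point with the same objective value, and then to show that a rank-3 SDP optimizer admits a factorization that lives in $\sOthree^N$, which forces equality and provides the recovered QCQP optimizer. The main tools are the quadratic characterization of $\sOthree$ given in~\eqref{eq:sOthreequadratic} and the standard lifting $X = R^\top R$ used in Shor's relaxation.

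First I would establish the relaxation direction. Take any $R = [\bar{R}_1,\ldots,\bar{R}_N]$ feasible for~\eqref{eq:simsync-qcqp}, and set $X = R^\top R \in \sym{3N}$. Clearly $X \succeq 0$ and $\rank{X} \le 3$. For the diagonal blocks, since $\bar{R}_i \in \sOthree$, the characterization~\eqref{eq:sOthreequadratic} yields $\bar{R}_i^\top \bar{R}_i = \alpha_i \eye_3$ with $\alpha_i = \|c_1\|^2 = s_i^2 \ge 0$; hence $X$ satisfies the block-diagonal pattern required by~\eqref{eq:sdp}. Since $\trace{QX} = \trace{Q R^\top R}$, the objective values coincide, so $X$ is SDP-feasible with cost equal to the QCQP cost of $R$. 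Minimizing over all such $R$ gives $f^\star \le \rho_{\qcqp}^\star$.

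Next I would handle the rank-3 converse. Suppose $X^\star$ is optimal for~\eqref{eq:sdp} with $\rank{X^\star} = 3$; write its symmetric square-root factorization as $X^\star = (R^\star)^\top R^\star$ with $R^\star \in \Real{3 \times 3N}$, and partition into $3 \times 3$ blocks $R^\star = [\bar{R}_1^\star,\ldots,\bar{R}_N^\star]$. The diagonal block constraints read $(\bar{R}_i^\star)^\top \bar{R}_i^\star = \alpha_i^\star \eye_3$ for some $\alpha_i^\star \ge 0$ (nonnegativity coming from positive semidefiniteness of the block). If $\alpha_i^\star > 0$, then $\bar{R}_i^\star / \sqrt{\alpha_i^\star}$ has orthonormal columns, i.e.\ lies in $\Othree$, so $\bar{R}_i^\star = \sqrt{\alpha_i^\star} \cdot (\bar{R}_i^\star / \sqrt{\alpha_i^\star}) \in \sOthree$ by~\eqref{eq:sOthreedef}. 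If $\alpha_i^\star = 0$, then $\bar{R}_i^\star = 0 = 0 \cdot \eye_3 \in \sOthree$. Either way $R^\star \in \sOthree^N$, so it is QCQP-feasible with cost $\trace{Q (R^\star)^\top R^\star} = \trace{Q X^\star} = f^\star$. Combined with $f^\star \le \rho_{\qcqp}^\star$, this yields $f^\star = \rho_{\qcqp}^\star$ and certifies $R^\star$ as a global minimizer of~\eqref{eq:simsync-qcqp}.

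The argument is almost mechanical because the block-diagonal pattern in~\eqref{eq:sdp} precisely encodes the six quadratic conditions of~\eqref{eq:sOthreequadratic}, so the only subtlety is checking that the factor blocks land in $\sOthree$ rather than merely being scaled orthogonal matrices up to sign, which is why I carry out the two cases $\alpha_i^\star > 0$ and $\alpha_i^\star = 0$ separately. I expect the mildest obstacle to be phrasing the factorization cleanly (since $R^\star$ is only defined up to left multiplication by a $3\times 3$ orthogonal matrix), but any such choice of factor works because both the SDP and QCQP objectives are invariant under this gauge.
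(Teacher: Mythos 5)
Your proof is correct. The paper does not actually supply a proof of this proposition (it is stated as a standard consequence of Shor's relaxation), and your argument — lifting a feasible $R$ to $X = R\tran R$ to get $f^\star \leq \rho^\star_{\qcqp}$, then factoring a rank-3 optimizer and checking blockwise via \eqref{eq:sOthreequadratic} that each factor block lies in $\sOthree$, including the degenerate $\alpha_i = 0$ case — is precisely the standard reasoning the paper implicitly relies on.
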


Note that $\alpha_1 = 1$ in \eqref{eq:sdp} because we set $s_1 = 1$. To enforce the diagonal blocks of $X$ in \eqref{eq:sdp} to be scaled identity matrices, one just need to (i) set their off-diagonal entries as zero, and (ii) set their diagonal entries to be equal to each other. As a result, there are $5(N-1) + 6$ linear equality constraints in \eqref{eq:sdp}, which is fewer than the $6N$ linear equality constraints in \sesync.

{\bf Suboptimality}. In practice, checking the rank condition of the optimal solution of \eqref{eq:sdp} can be sensitive to numerical thresholds. Therefore, we always generate a solution $\widehat{R}$ from $X^\star$ that is also feasible for problem \eqref{eq:scaledRonly} and evaluate the objective of \eqref{eq:scaledRonly} at $\widehat{R}$, denoted as $\hat{\rho}$ and satisfies
\bea \label{eq:suboptimality}
f^\star \leq \rho^\star_{\qcqp} \leq \rho^\star \leq \hat{\rho}.
\eea
We then compute the \emph{relative suboptimality}
\bea 
\label{eq:subopt}
\eta = \frac{\hat{\rho} - f^\star}{1 + |f^\star| + |\hat{\rho}|}.
\eea 
Clearly, $\eta = 0$ certifies global optimality of the solution $\widehat{R}$ and tightness of the SDP relaxation.

{\bf Rounding}. We perform the following procedure to round a feasible $\widehat{R}$ from $X^\star$. First we compute the spectral decomposition of $X^\star = \sum_{i=1}^{3N} \lambda_i u_i u_i\tran$. Then we assemble 
\bea 
U = \bmat{ccc} \sqrt{\lambda_1} u_1 & \sqrt{\lambda_2} u_2 & \sqrt{\lambda_3} u_3 \emat = \bmat{ccc} U_1 & \cdots & U_N \emat\tran \in \Real{3N \times 3},
\eea 
where $\lambda_1,\lambda_2,\lambda_3$ are the three largest eigenvalues. Finally, we compute the scales and rotations
\bea 
\hat{s}_1 = 1, \hat{s}_i = \Vert U_1\tran U_i \Vert_F / \sqrt{3}, i=2,\dots,N, \\
\widehat{R}_1 = \eye_3, \widehat{R}_i = \Pi_{\SOthree} (U_1\tran U_i/s_i), i=2,\dots,N,
\eea 
and assemble $\widehat{R} = [\hat{s}_1 \widehat{R}_1,\dots,\hat{s}_N \widehat{R}_N]$, where $\Pi_{\SOthree}$ denotes the projection onto $\SOthree$.
 
\subsection{Scale Regularization}

Empirically, we find that for certain graph structures (shown in Section \ref{sec:experiments}), the optimal scale estimation of \eqref{eq:scaledRonly} tends to become much smaller than $1$ for $i=2,\dots,N$, a phenomenon that we call \emph{contraction}. This is undesired because the true scales are often close to $1$. Therefore, we propose to regularize the \eqref{eq:simsync-qcqp} and the SDP \eqref{eq:sdp}. Observe that, if the relaxation is tight, then $\alpha_i = s_i^2$ in \eqref{eq:sdp} for $i=1,\dots,N$. Therefore, by adding $(\trace{X_{ii}}/3 - 1)^2 = (\alpha_i - 1)^2$ ($X_{ii}$ denotes the $i$-th diagonal block of $X$) into the objective of \eqref{eq:sdp}, we encourage the SDP to penalize $(s_i^2 - 1)^2$ and hence prevent the scale estimation from contracting. The scale-regularized problem, fortunately, is still an SDP.

\begin{proposition}[Scale Regularization]\label{prop:scale-regularization}
    The following scale-regularied problem
    \bea 
    \min_{X \in \sym{3N}} \trace{QX} + \lambda \parentheses{\sum_{i=1}^N (\trace{X_{ii}} / 3 - 1)^2 } \subject X \text{ as in } \eqref{eq:sdp}
    \eea
    for a given $\lambda > 0$, is equivalent to
    \bea 
    \min_{X \in \sym{3N}} & \trace{Q X} + \lambda \sum_{i=1}^N t_i \label{eq:scale-regularize-simplify}\\
    \subject & X \text{ as in } \eqref{eq:sdp} \\
    & \bmat{cc} 1 & \trace{X_{ii}}/3 - 1 \\ \trace{X_{ii}}/3 - 1 & t_i \emat \succeq 0, i=1,\dots,N. \label{eq:sr-t-upper-bound}
    \eea     
\end{proposition}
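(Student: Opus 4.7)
The plan is to recognize the $2\times 2$ block constraint \eqref{eq:sr-t-upper-bound} as a standard Schur-complement lift of the quadratic term $(\trace{X_{ii}}/3 - 1)^2$ into a linear epigraph variable $t_i$, and then argue that the $t_i$'s are tight at any optimum because $\lambda > 0$.

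First, I would invoke the Schur-complement characterization: since the $(1,1)$ entry of the $2\times 2$ block in \eqref{eq:sr-t-upper-bound} is the positive constant $1$, the block is positive semidefinite if and only if its Schur complement with respect to that entry is nonnegative, \ie
\begin{equation*}
    \bmat{cc} 1 & \trace{X_{ii}}/3 - 1 \\ \trace{X_{ii}}/3 - 1 & t_i \emat \succeq 0
    \;\Longleftrightarrow\;
    t_i \geq \parentheses{\trace{X_{ii}}/3 - 1}^2.
\end{equation*}
Thus, the feasible set of problem \eqref{eq:scale-regularize-simplify} in the $(X,t)$ variables is exactly the epigraph lift of the original regularized feasible set.

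Next, I would exploit the sign of the regularization parameter. Since $\lambda > 0$ and the objective of \eqref{eq:scale-regularize-simplify} is linear and strictly increasing in each $t_i$, at any optimal solution we must have $t_i = (\trace{X_{ii}}/3 - 1)^2$ for every $i = 1,\dots,N$; otherwise one could strictly decrease $t_i$ while preserving feasibility and strictly lower the objective. Substituting this equality into \eqref{eq:scale-regularize-simplify} produces exactly the objective $\trace{QX} + \lambda \sum_{i=1}^N (\trace{X_{ii}}/3 - 1)^2$ over the original SDP feasible set, establishing that the two problems share the same optimal value and the same set of optimal $X$.

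I do not anticipate a genuine obstacle here; the argument is a textbook epigraph reformulation via Schur complements. The only point that requires a little care is verifying that the $t_i$-tightness holds at \emph{every} optimum (not merely existence of one) so that the correspondence between optimal solutions is clean, but this follows immediately from the strict monotonicity of the objective in each $t_i$ and the fact that the constraint \eqref{eq:sr-t-upper-bound} decouples across $i$.
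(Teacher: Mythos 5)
Your proposal is correct and follows essentially the same route as the paper: the Schur complement of the $(1,1)$ entry yields $t_i \geq (\trace{X_{ii}}/3 - 1)^2$, and since $\lambda > 0$ the minimization forces equality at the optimum. Your added remark that tightness holds at \emph{every} optimum (by strict monotonicity in each decoupled $t_i$) is a small but welcome refinement of the paper's one-line argument.
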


Proposition \ref{prop:scale-regularization} is easy to verify because \eqref{eq:sr-t-upper-bound} implies $t_i \geq (\trace{X_{ii}} / 3 - 1)^2$ by the Schur complement lemma, and the minimization in \eqref{eq:scale-regularize-simplify} will push $t_i = (\trace{X_{ii}} / 3 - 1)^2$.

Both SDPs \eqref{eq:sdp} and \eqref{eq:scale-regularize-simplify} are implemented in Python and solved with MOSEK by directly passing the problem data to the MOSEK Python interface.
\section{Experiments}
\label{sec:experiments}

We test the performance of \simsync in both simulated and real datasets. All experiments are conducted on a laptop
equipped with an Intel 14-Core i7-12700H CPU and 32 GB memory.

In Section \ref{scale-known}, we test \simsync in simulated \emph{scale-free synchronization} problems (\ie $s_i=1,i=1,\dots,N$) and compare its performance to \sesync and \sesyncgtwoo. 

In Section \ref{regularization}, we test the scale regularization \eqref{eq:scale-regularize-simplify} and show that it effectively prevents contraction of the estimated pose graph.

In Section \ref{outlier-rejection}, we simulate outliers in the feature correspondences and demonstrate that \simsync can be used together with robust estimators such as \gnc and \teaser.

Finally, we test \simsync in real datasets. Section \ref{bal-experiment} provides results of \simsync on the \bal dataset~\cite{agarwal10eccv-ba} that is popular in computer vision for bundle adjustment. Section \ref{tum-experiment} provides results of \simsync on the \tum dataset~\cite{sturm2012benchmark} that is popular in robotics for SLAM.

\subsection{Scale-free Synchronization}
\label{scale-known}
\textbf{Setup.}  We assume that the scaling factor is known, \ie $s_i = 1,i=1,\dots,N$, which is a realistic assumption when the images are taken by RGB-D cameras or registered with LiDAR scanners. Consequently, we are only interested in estimating node-wise poses $(R_i,t_i),i \in \calV$ given pairs of point cloud measurements over the edges $\calE$. To simulate the pose graph $\calG = (\calV,\calE)$, we first simulate a random point cloud $P\in \Real{3 \times n},n=1000$ in the world frame. Each point in $P$ follows a Gaussian distribution $\calN(\zero,\eye_3)$. We then simulate a trajectory of camera poses $(R_i,t_i),i=1,\dots,N$ with $N=50$ by following certain graph topologies, specifically, a circle, a grid, and a line, as commonly used in related works \cite{rosen2019se,iglesias20cvpr-global}. Details for simulating the camera trajectories are as follows.
\begin{itemize}
    \item Circle. The camera moves in a circle with a radius of 10 meters for one round in 50 steps, while facing the circle's center. 
    \item Grid. In a cube with a edge-length of $2$ meters, the camera moves on the surface for 50 steps. The camera can only move one meter to an adjacent node at each step. The starting point is randomly chosen among all nodes. A typical example is as shown in Fig.~\ref{Grid_vis}.
    \item Line. The camera moves linearly for 3 meters in 50 discrete steps while facing the point cloud $P$ at a distance of 10 meters from the line.
\end{itemize}

Given each camera pose $(R_i,t_i),i =1,\dots,N$, we generate a noisy point cloud observation
\bea
P_i = R_i P + t_i + \epsilon_i
\eea
where $\epsilon_i \in \Real{3 \times n}$ are i.i.d. Gaussian noise vectors following $\calN(0, \sigma^2 \eye_3)$. We then simulate correspondences over each edge $(i,j) \in \calE$ by subsampling $P_i$ and $P_j$. To make the correspondences more realistic, we associate $P_i$ and $P_j$ as follows:
\begin{enumerate}
    \item We first find a subset of indices $\calI_{ij} \subseteq [n]$ such that points in $\calI_{ij}$ lie in both the field of view (FOV) of camera $i$ and the FOV of camera $j$. FOV is set as 60 degrees for all experiments.
    \item We then randomly select a subset $\calK_{ij} \subseteq \calI_{ij}$ with cardinality $q$, a random number between $10$ and $|\calI_{ij}|$, and let $\widehat{P}_i = \{p_{i,k} \in P_i \mid k \in \calK_{ij} \}$ and $\widehat{P}_j=\{p_{j,k} \in P_j \mid k \in \calK_{ij} \}$ be the final point cloud pairs on edge $(i,j)$.
\end{enumerate}
We pass $(\widehat{P}_i,\widehat{P}_j)_{(i,j) \in \calE}$ to \eqref{eq:sba} to estimate node-wise absolute poses.

{\bf Baselines}.
We compare \simsync with \sesync. In order to use \sesync, we need to estimate relative poses among all the edges $\calE$. This is done by running Arun's method~\cite{arun1987least} on the point cloud pairs $(\widehat{P}_i,\widehat{P}_j)_{(i,j) \in \calE}$. \sesync also requires a covariance estimation of the relative pose. To do so, we compute the Cramer-Rao lower bound at Arun's optimal solution and feed the covariance estimates to \sesync. We provide a detailed derivation of the covariance matrix in Appendix~\ref{scale-free noise}. Note that \sesync assumes the rotational noise follows an isotropic Langevin distribution and it internally computes a Langevin approximation of the covariance matrix fed to it. Therefore, we also compare with \sesyncgtwoo, where the \sesync solution is used to initialize a local search using \gtwoo with the Cramer-Rao lower bound.

\begin{figure}[t]
    \centering
    \includegraphics[width=0.8\columnwidth]{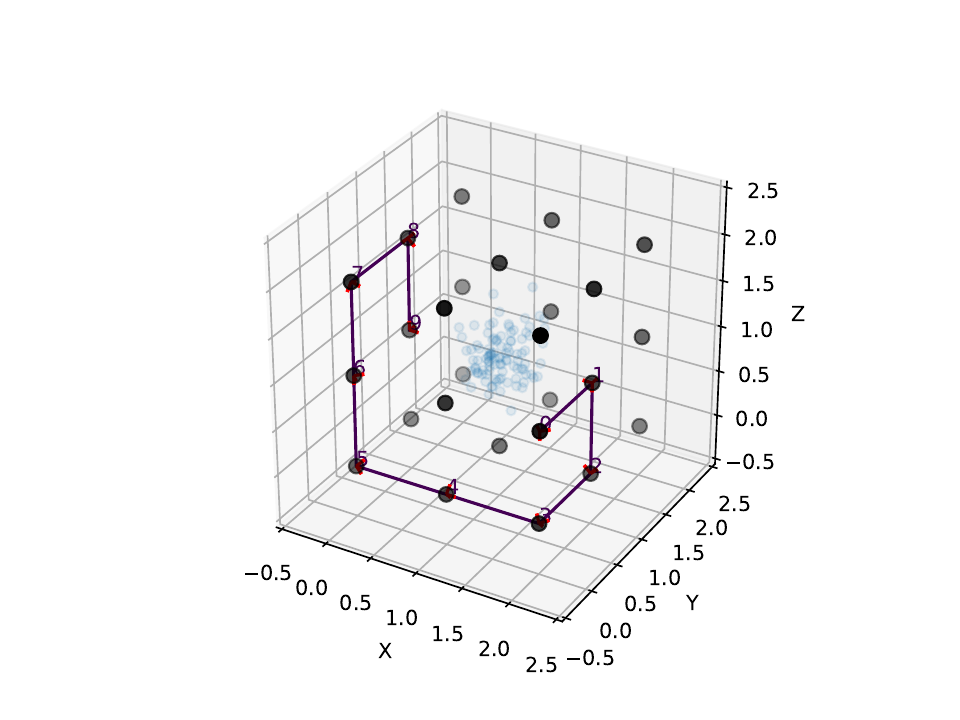}
    \caption{A sample camera trajectory in the Grid dataset.}
    \label{Grid_vis}
\end{figure}

\textbf{Results.} 
We choose $\sigma \in \{0.001, 0.01, 0.1, 1, 2, 3\}$ and at each noise level we run $20$ Monte Carlo random tests. Fig.~\ref{fig:scale_free_results} shows the rotation errors and translation errors of \simsync compared with \sesync and \sesyncgtwoo.
In both the circle dataset and the grid dataset, \simsync surpasses \sesync and \sesyncgtwoo by a (very) small margin, while in the line dataset, \simsync and \sesync perform almost the same.
Fig.~\ref{fig:scale_free_results} bottom row plots the relative suboptimality $\eta$ (\cf \eqref{eq:suboptimality}) of \simsync and \sesync. We consider the relaxation is not tight if $\eta$ exceeds $0.05$ (the red horizontal dashed line). In the circle dataset and the line dataset, when $\sigma=4$, \sesync's relaxation becomes completely inexact, while \simsync can still achieve tightness, although not always.


\begin{figure*}[h]
	\begin{center}
	\begin{minipage}{\textwidth}
	\begin{tabular}{ccc}%
		\begin{minipage}{0.32\textwidth}%
			\centering%
			\includegraphics[width=\textwidth]{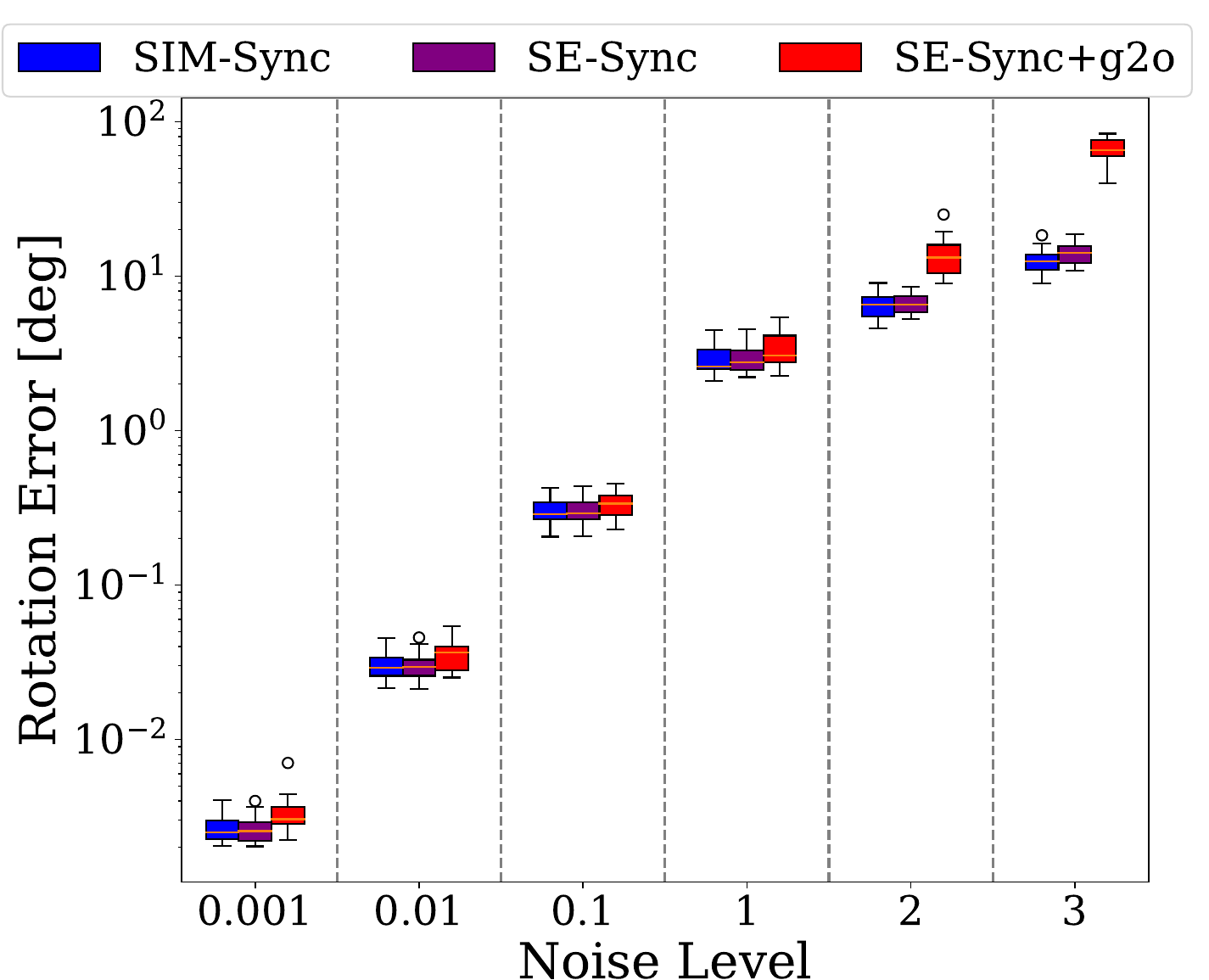}
		\end{minipage}
		&
		\begin{minipage}{0.32\textwidth}%
			\centering%
			\includegraphics[width=\textwidth]{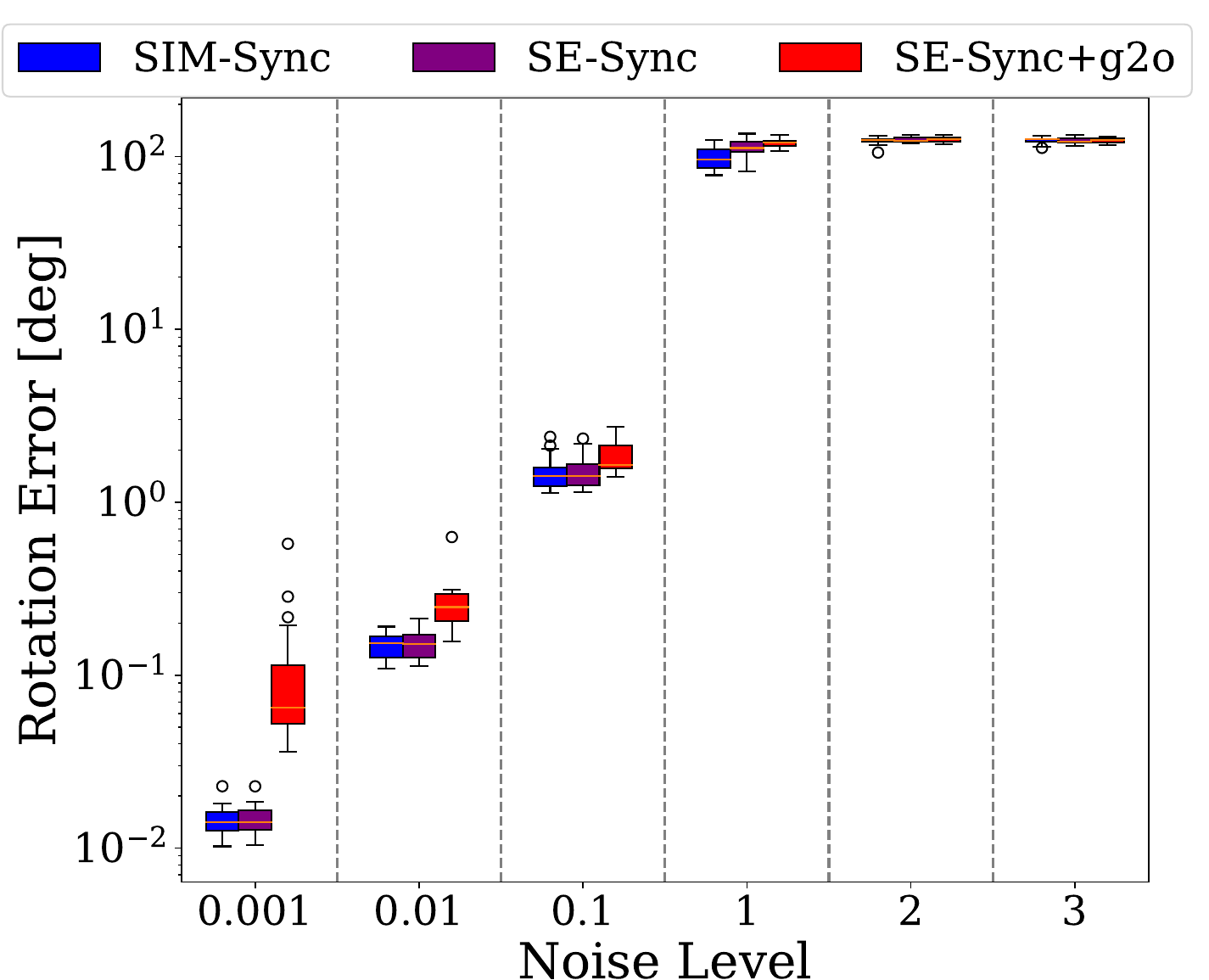}
		\end{minipage}
		&
		\begin{minipage}{0.32\textwidth}%
			\centering%
			\includegraphics[width=\textwidth]{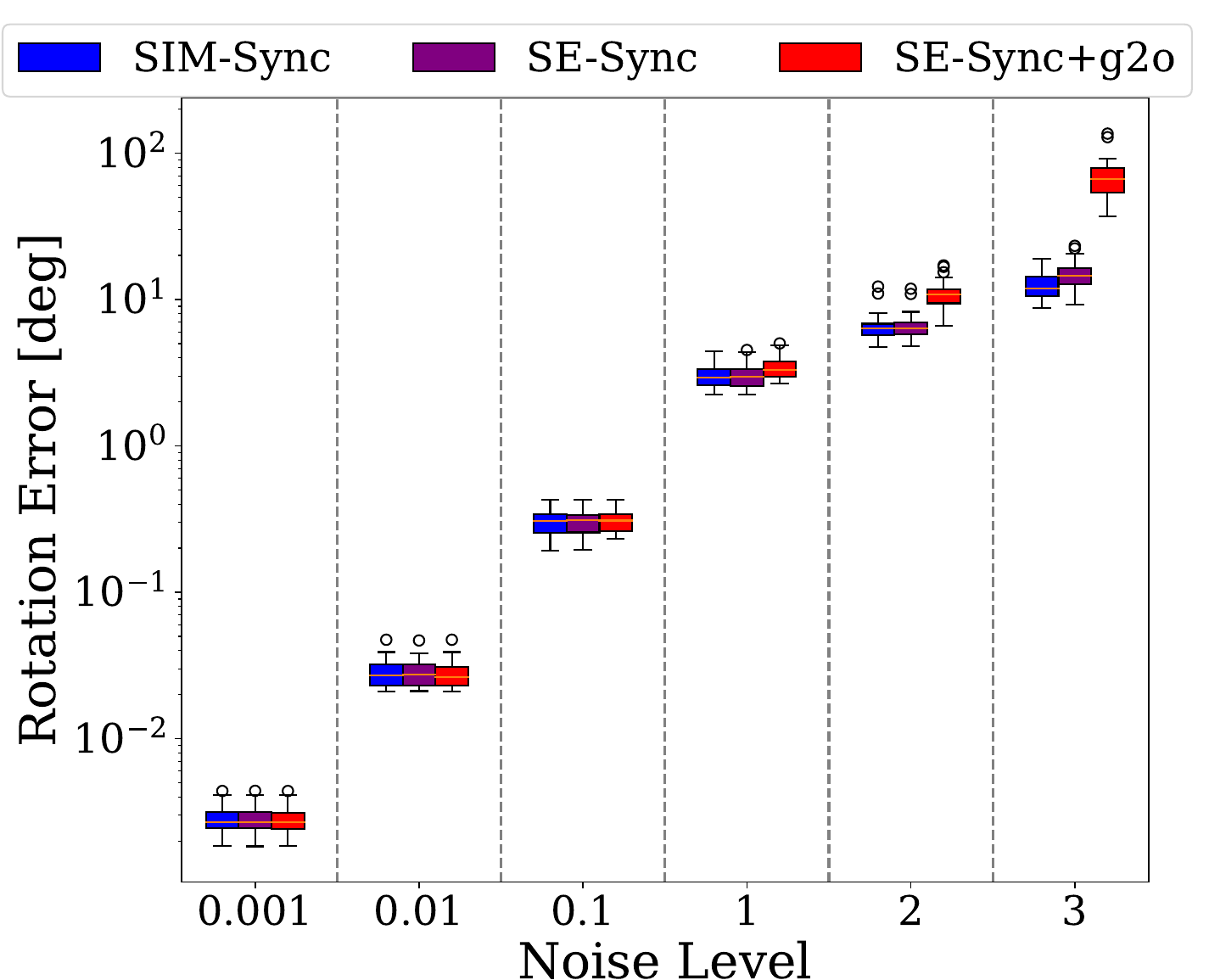}
		\end{minipage} \\
		\begin{minipage}{0.32\textwidth}%
			\centering%
			\includegraphics[width=\textwidth]{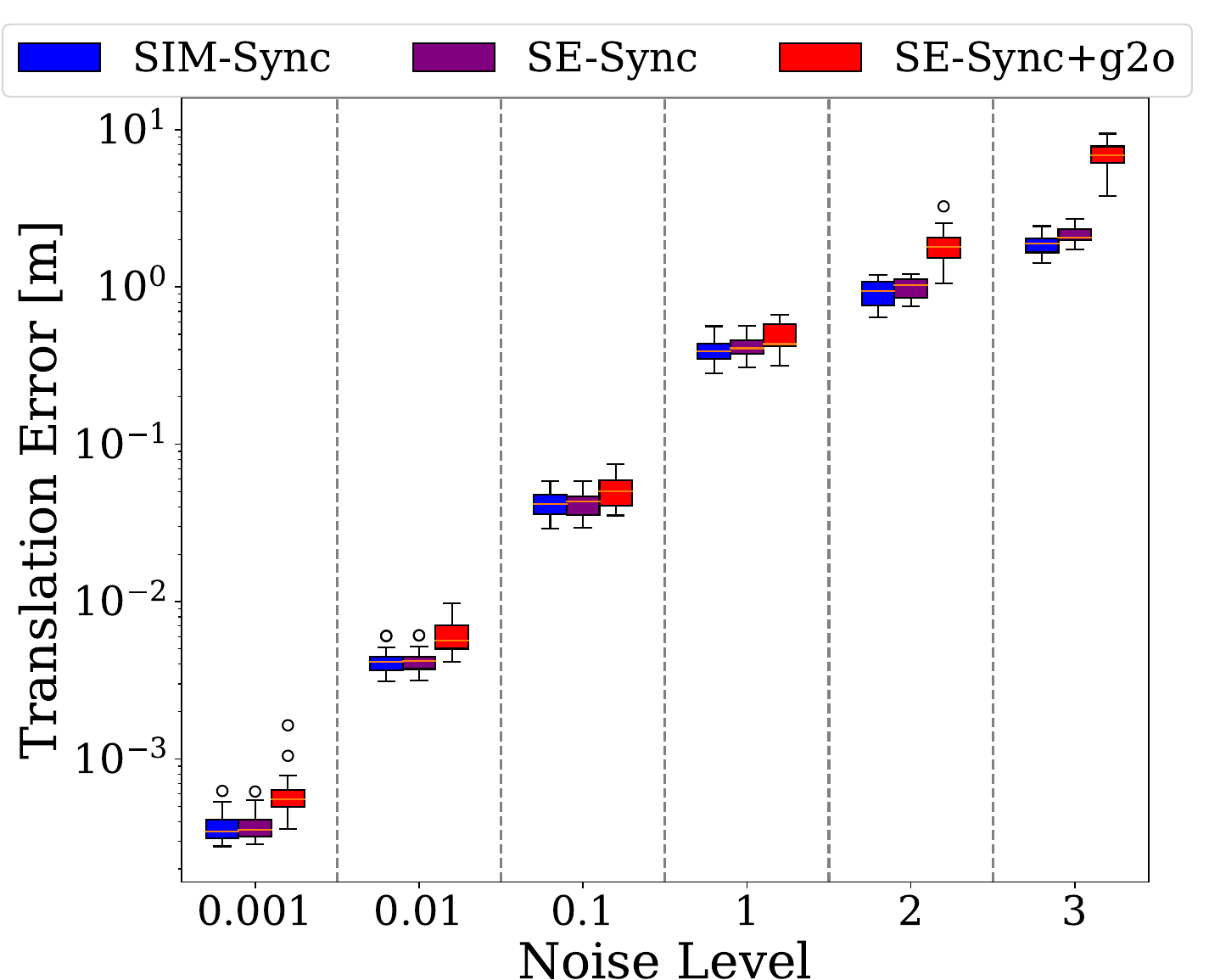}
		\end{minipage}
		&
		\begin{minipage}{0.32\textwidth}%
			\centering%
			\includegraphics[width=\textwidth]{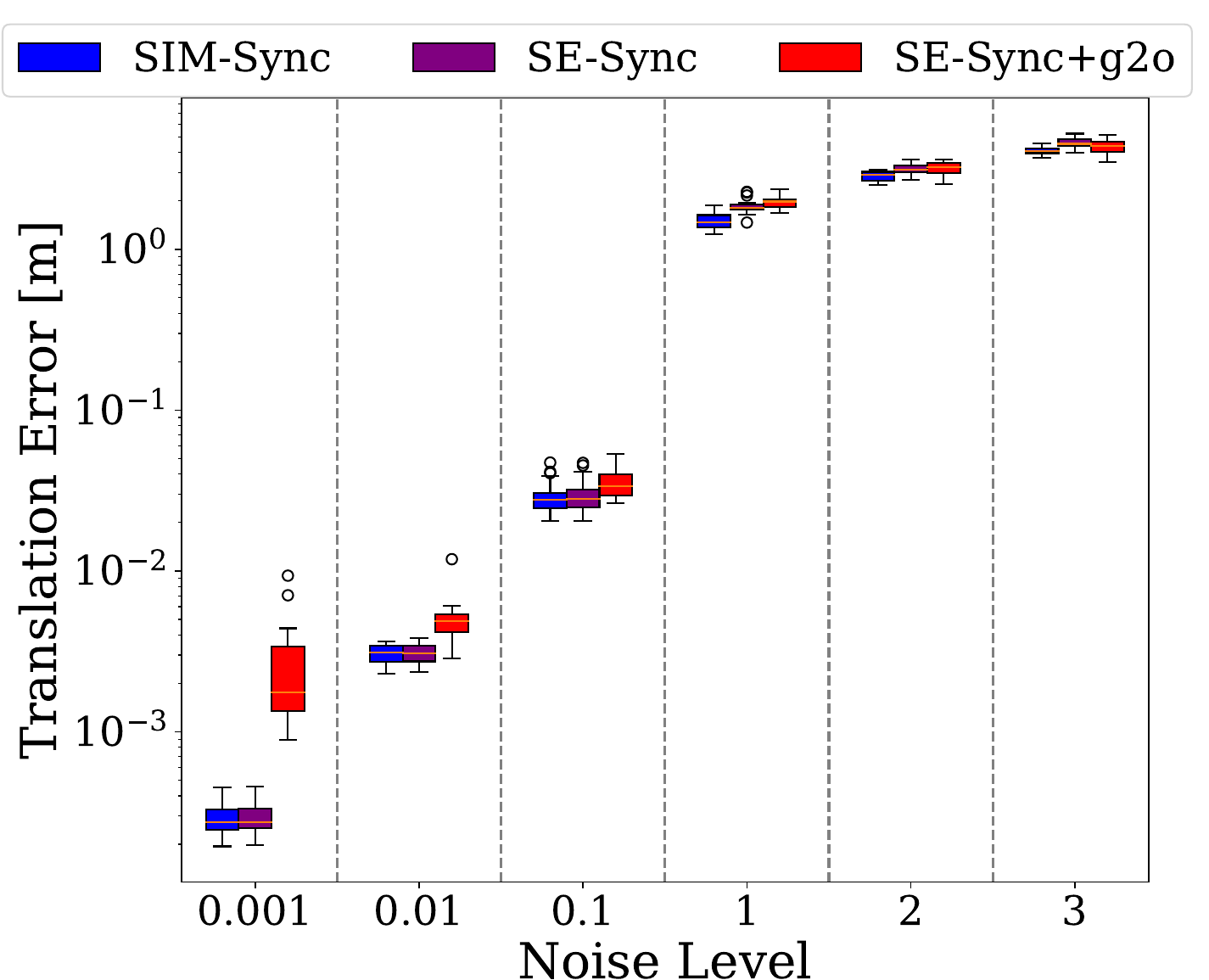}
		\end{minipage}
		&
		\begin{minipage}{0.32\textwidth}%
			\centering%
			\includegraphics[width=\textwidth]{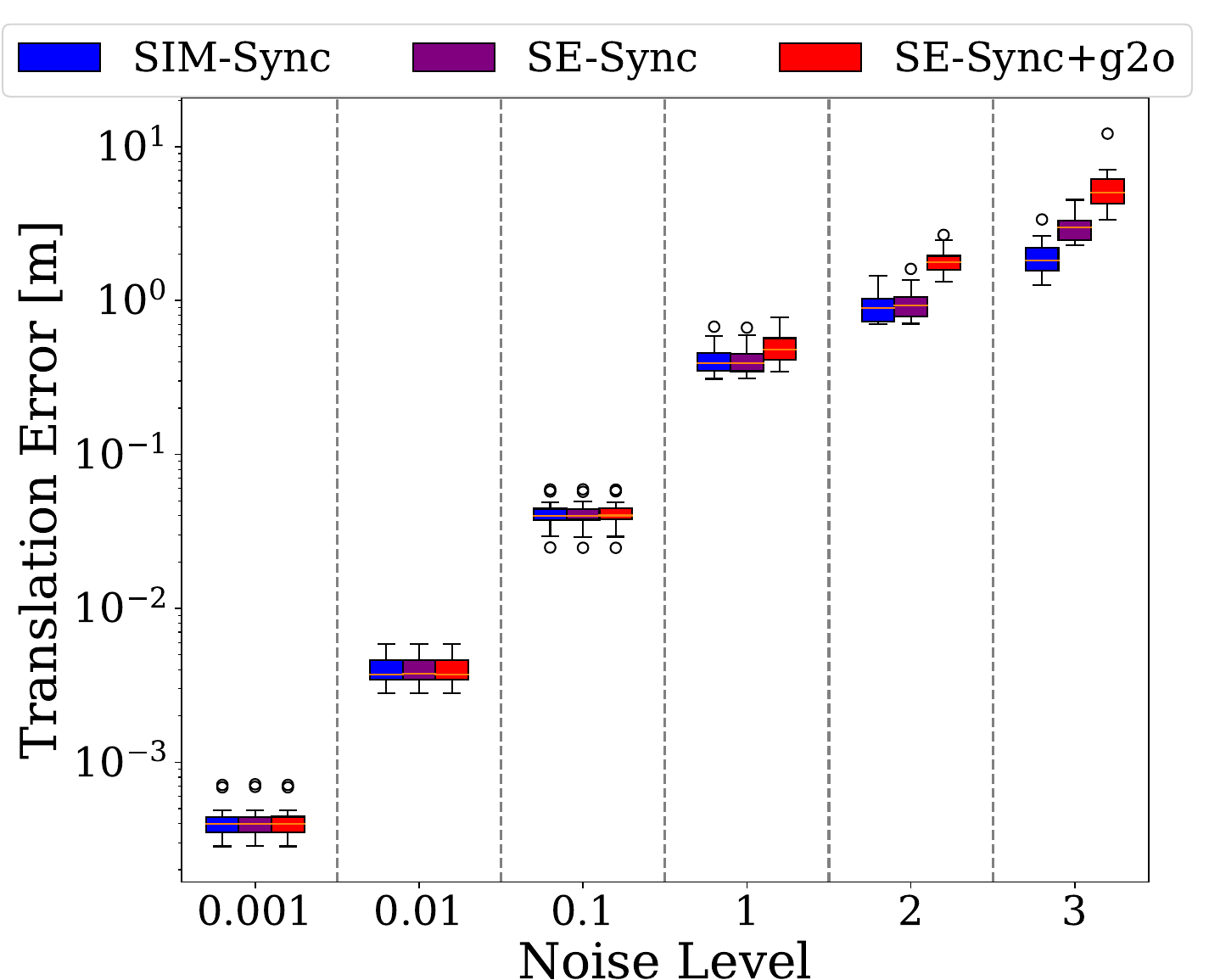}
		\end{minipage} \\
		\begin{minipage}{0.32\textwidth}%
			\centering%
			\includegraphics[width=\textwidth]{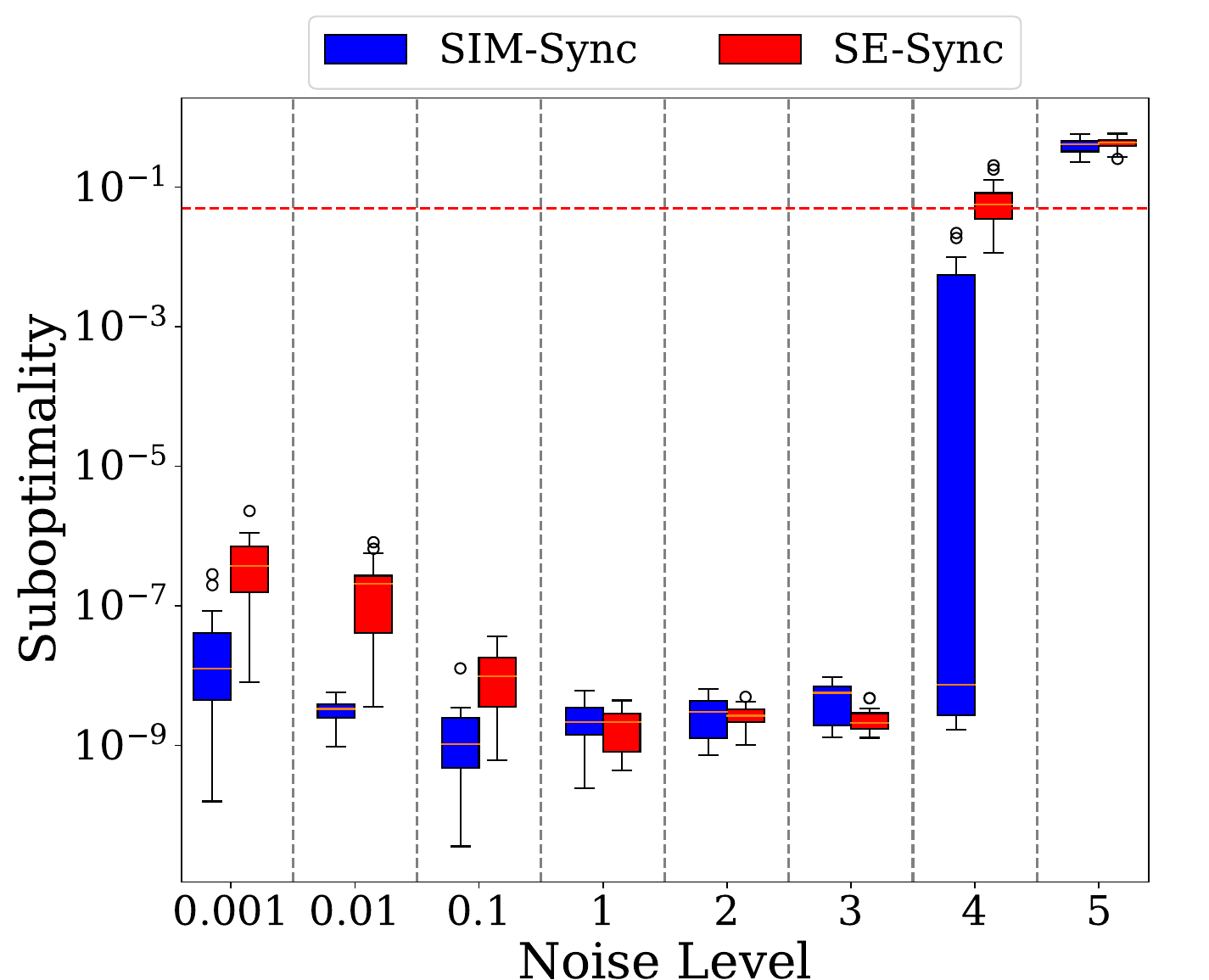}\\
			{\smaller (a) Circle}
		\end{minipage}
		&
		\begin{minipage}{0.32\textwidth}%
			\centering%
			\includegraphics[width=\textwidth]{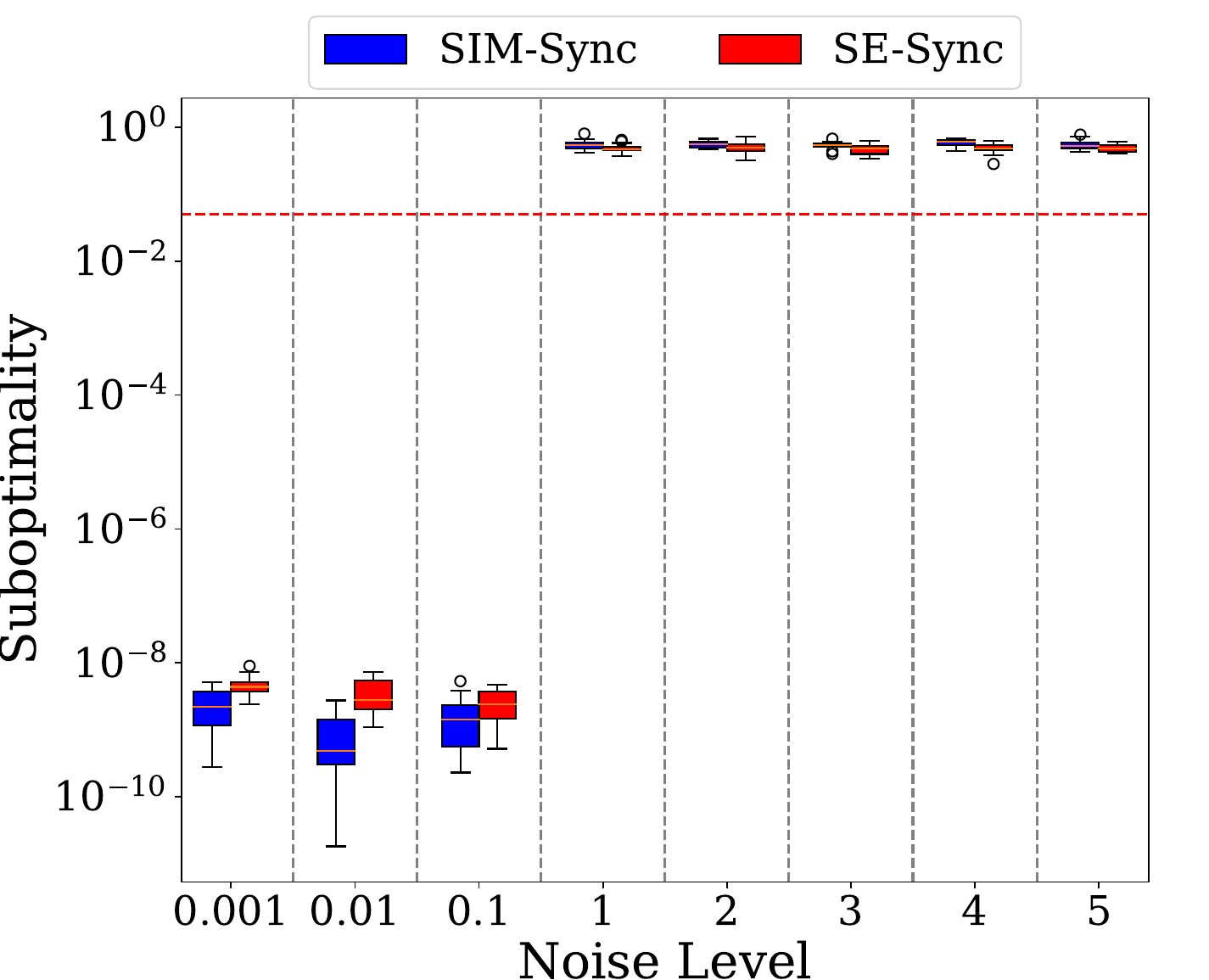}\\
			{\smaller (b) Grid}
		\end{minipage}
		&
		\begin{minipage}{0.32\textwidth}%
			\centering%
			\includegraphics[width=\textwidth]{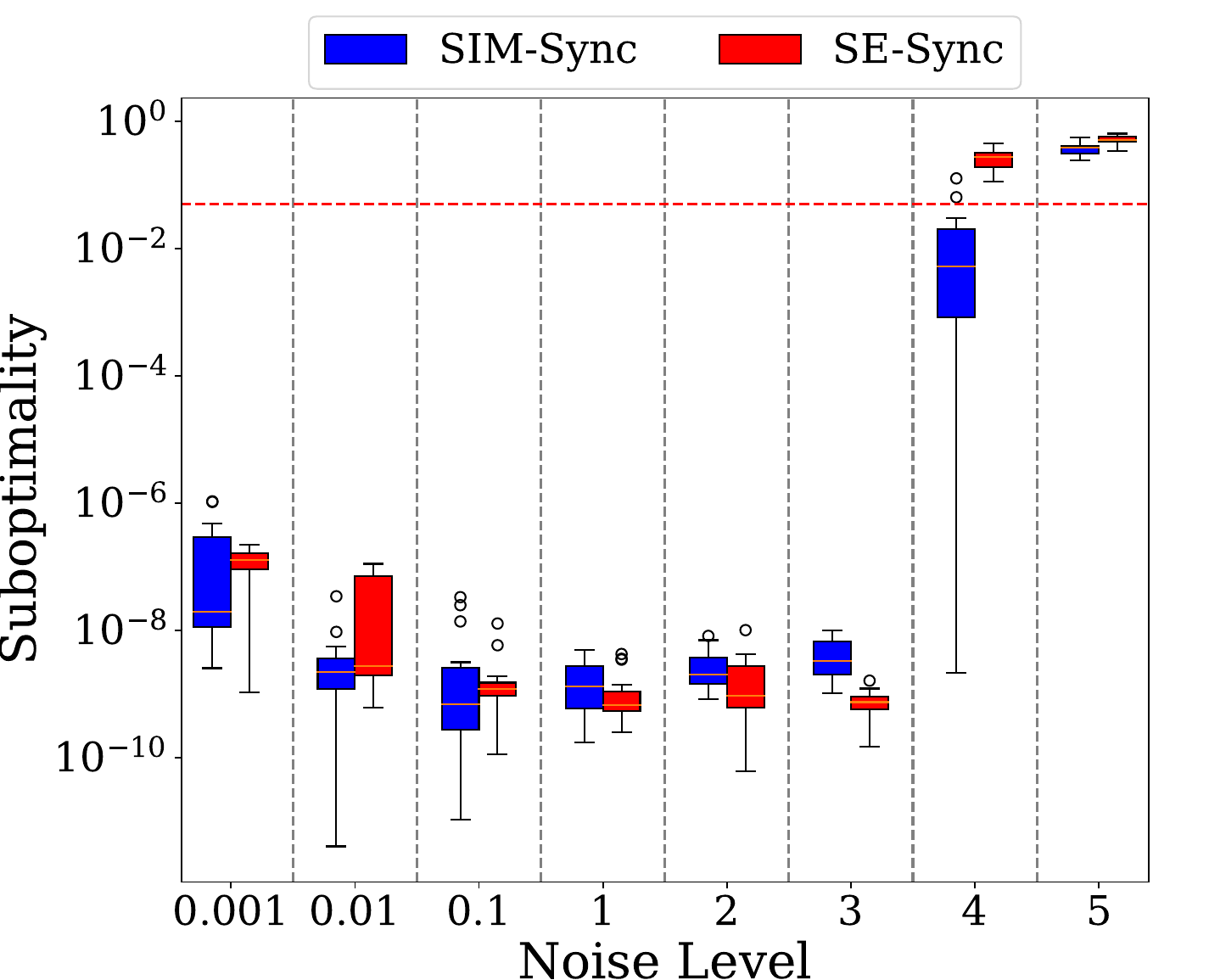}\\
			{\smaller (c) Line}
		\end{minipage}
	\end{tabular}
	\end{minipage}
	\vspace{-4mm} 
	\caption{
		Rotation errors (top row), translation errors (middle row), and suboptimality (bottom row) of \simsync, \sesync and \sesyncgtwoo in scale-free synchronization.} 
        \label{fig:scale_free_results}
	\vspace{-8mm} 
	\end{center}
\end{figure*}

\subsection{Scale Regularization}
\label{regularization}

\textbf{Setup.} 
We study the effect of the number of poses $N$ and the regularization factor $\lambda$ on the performance of \simsync. We use the same circle, grid, and line datasets in Section \ref{scale-known}, where the scaling factor is unknown and randomly generated in $[0.9, 1.1]$. The noise level $\sigma$ is fixed to $0.01$. For each dataset, we generate $n=100$ points, and vary the number of poses $N \in \{10, 50, 100, 200, 400\}$. The regularization factor $\lambda$ is tested with $\{0, 1, 10, 100\}$ for the circle and line datasets, while an additional $\lambda=200$ for the grid dataset. To ensure statistical significance, we perform 20 Monte Carlo simulations for each combination of $N$ and $\lambda$.

\textbf{Results.} 
Fig.~\ref{fig:ba-scale-reg}(a)(c) plot the averaged scale estimation, rotation error, and translation error \wrt number of poses $N$ on the circle dataset and the line dataset, with different colors representing different regularization factors $\lambda$. We observe that (i) as $N$ increases, translation estimation gets slightly worse, but rotation estimation remains unaffected; (ii) the scale estimation does not contract a lot as $N$ increases, even without scale regularization (\ie $\lambda=0$). Fig.~\ref{fig:ba-scale-reg}(b) shows the same results on the grid dataset, where we clearly observe contraction. Without regularization, the average scale decreases to $0.5$ when $N=400$, which also leads to poor translation estimation. With regularization $\lambda=200$, however, we see that contraction is effectively prevented and the translation error also gets improved. This suggests that regularization improves the performance of \simsync when $N$ is large. It is interesting to see that rotation estimation is not affected by $N$ and $\lambda$. This makes sense because scale and translation are coupled, while rotation is independent. We suspect that the circle graph and the line graph have a certain type of ``\emph{rigidity}'' that makes them more robust to contraction, while the grid graph has weaker ``\emph{rigidity}'' (\ie it is easier to bend and twist the trajectory in Fig.~\ref{Grid_vis}).


\begin{figure*}[h]
	\begin{center}
		\begin{tabular}{ccc}
			\begin{minipage}{0.32\textwidth}
				\centering
				\includegraphics[width=\textwidth]{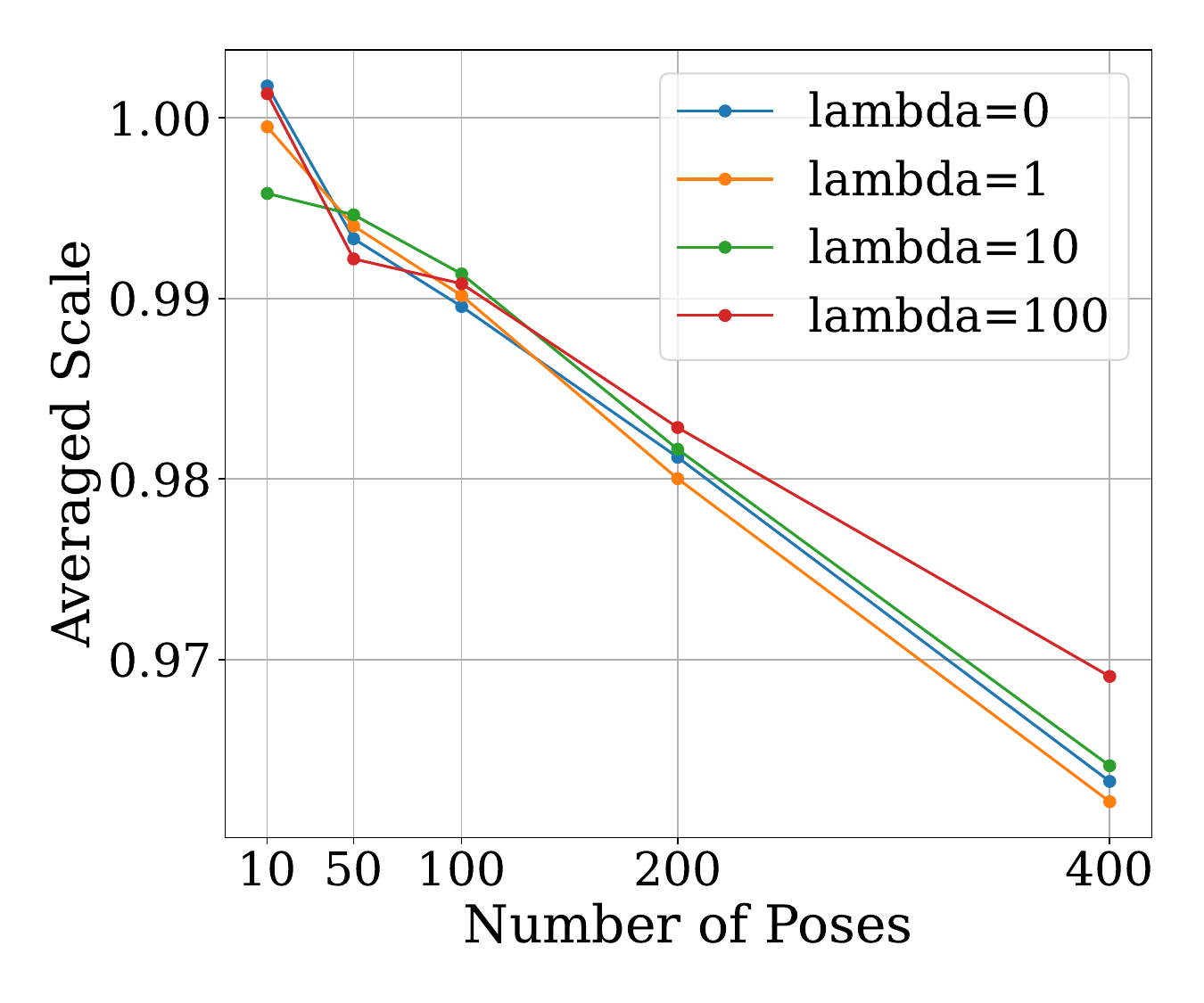}
			\end{minipage}
			&
			\begin{minipage}{0.32\textwidth}
				\centering
				\includegraphics[width=\textwidth]{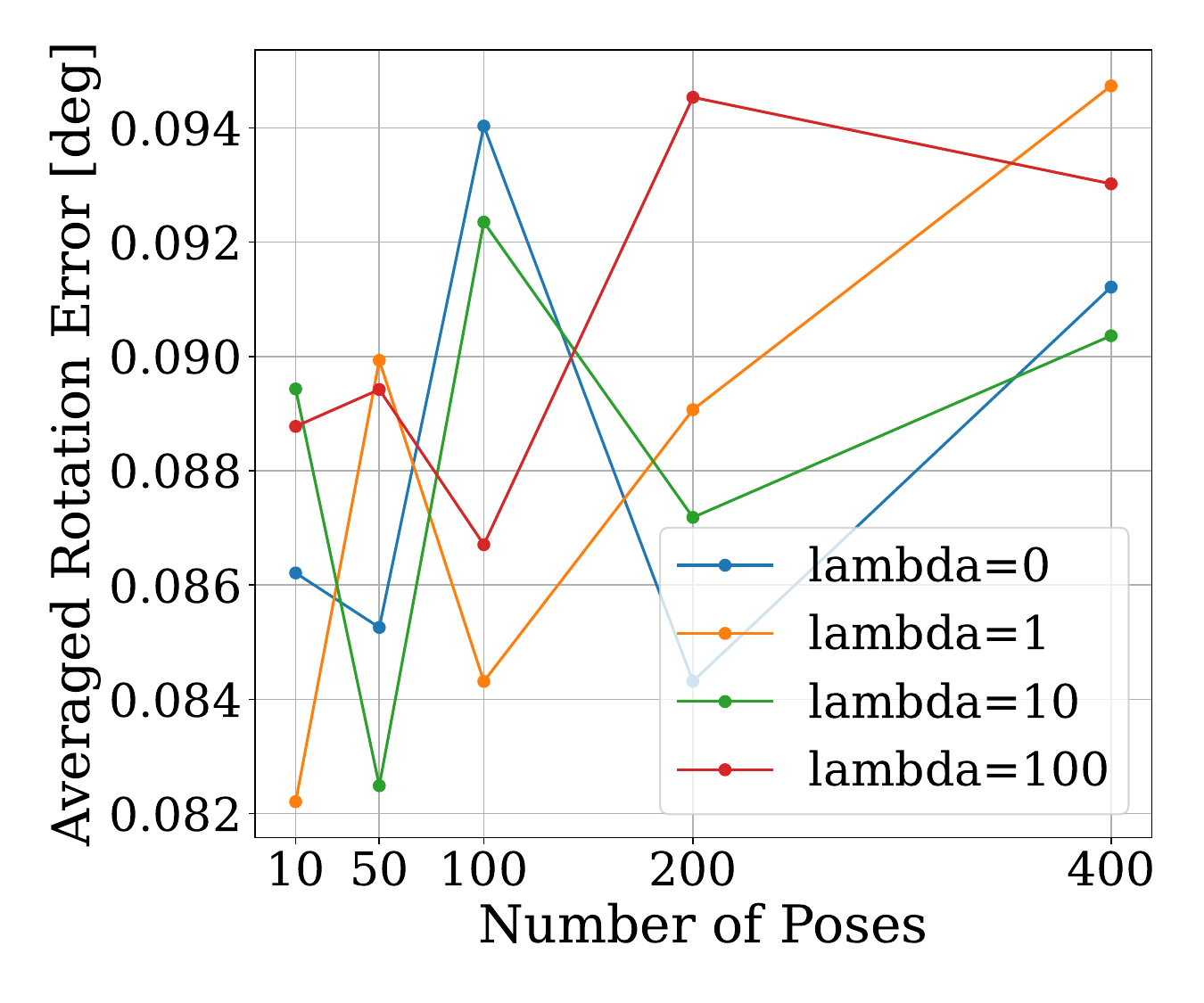}
			\end{minipage}
			&
			\begin{minipage}{0.32\textwidth}
				\centering
				\includegraphics[width=\textwidth]{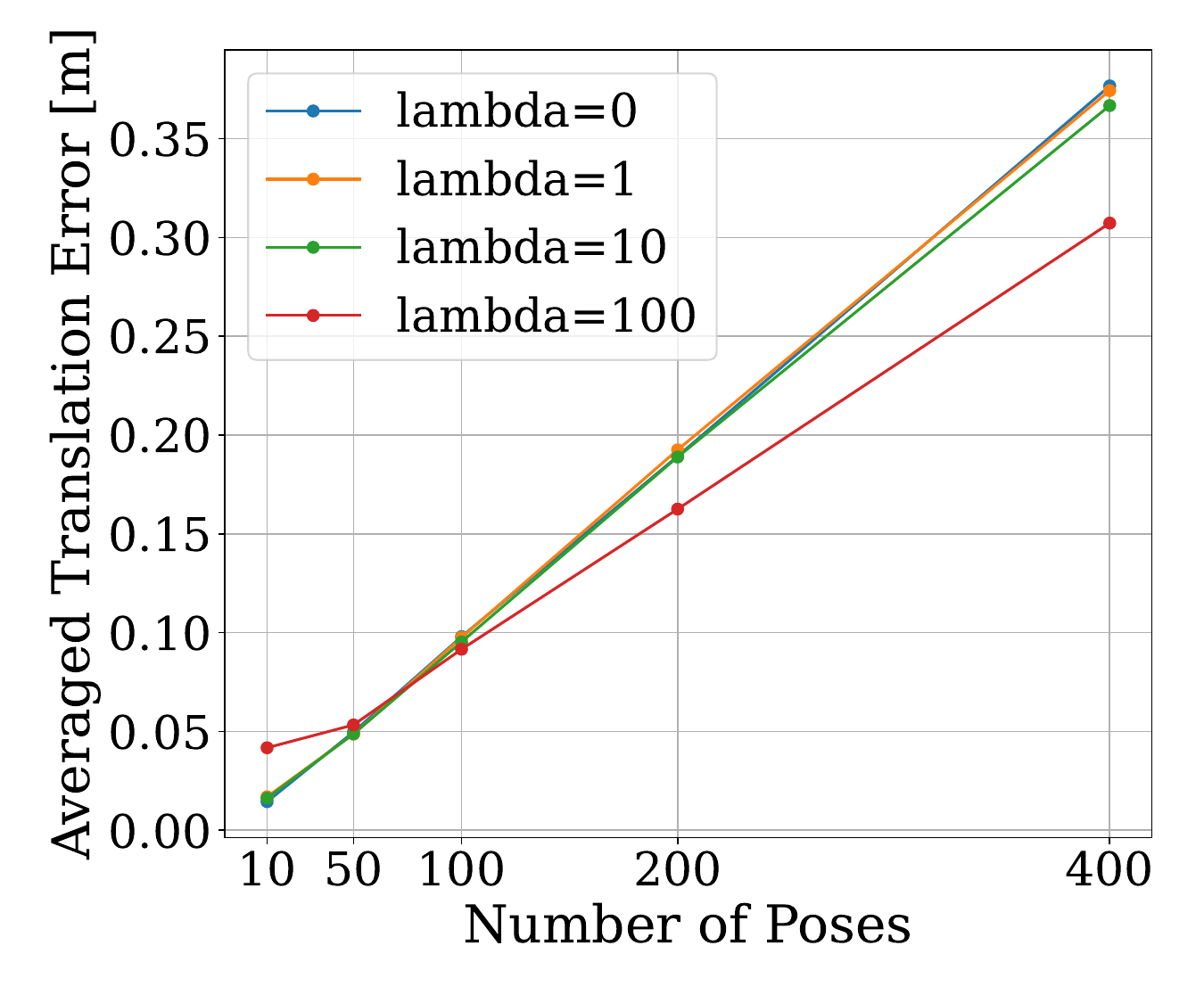}
			\end{minipage} \\
			\multicolumn{3}{c}{{\smaller (a) Circle}}\\
			\begin{minipage}{0.32\textwidth}
				\centering
				\includegraphics[width=\textwidth]{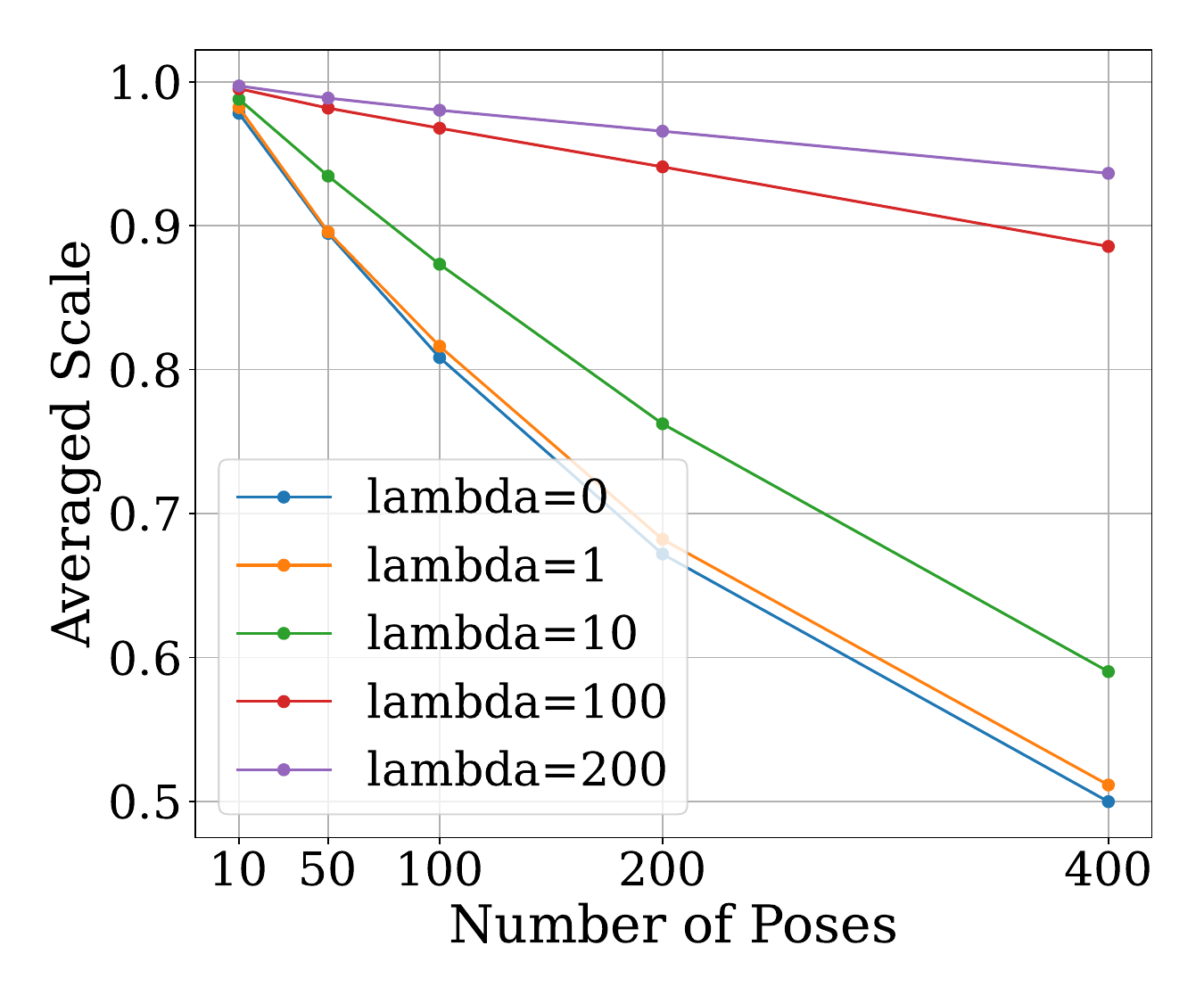}
			\end{minipage}
			&
			\begin{minipage}{0.32\textwidth}
				\centering
				\includegraphics[width=\textwidth]{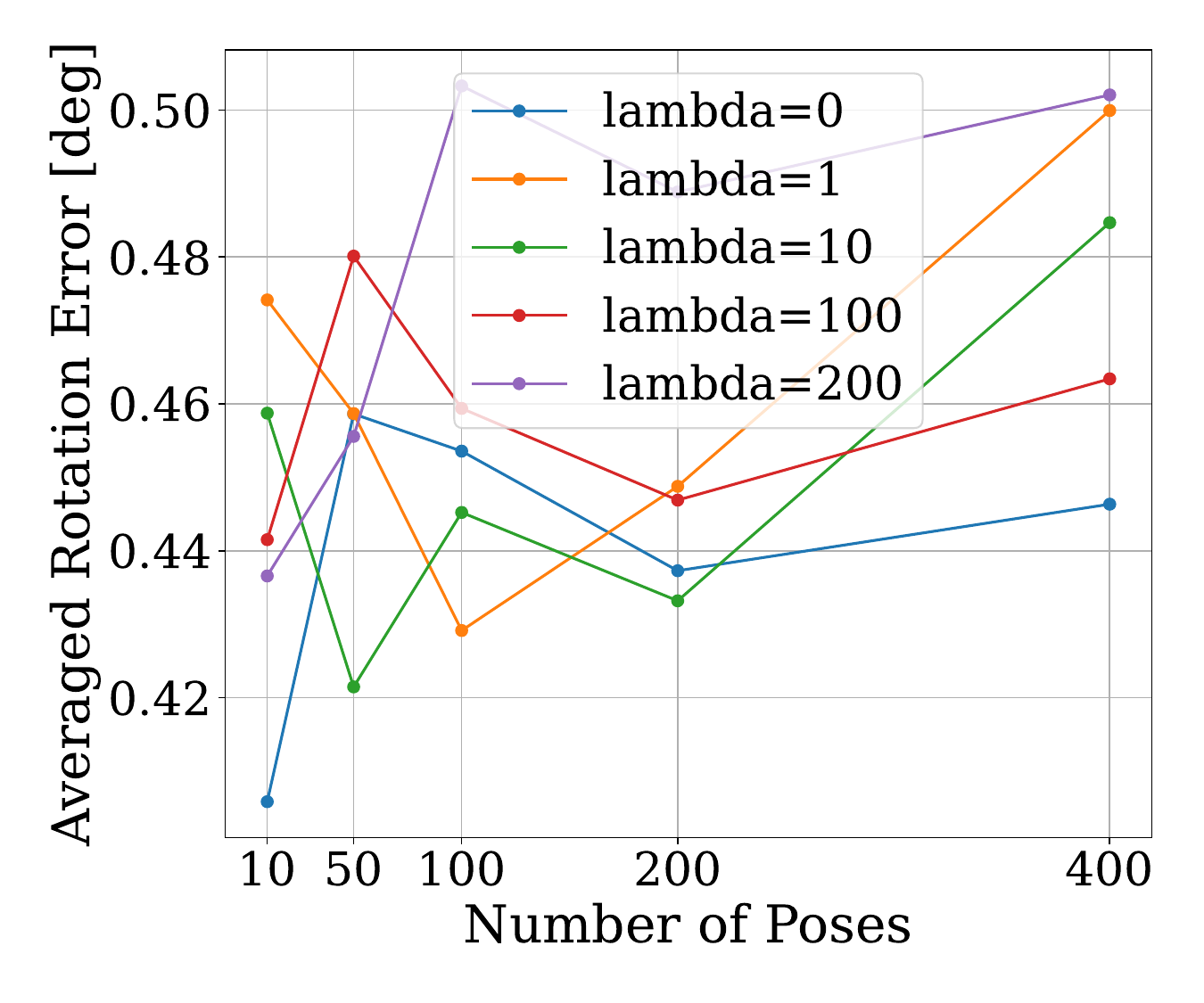}
			\end{minipage}
			&
			\begin{minipage}{0.32\textwidth}
				\centering
				\includegraphics[width=\textwidth]{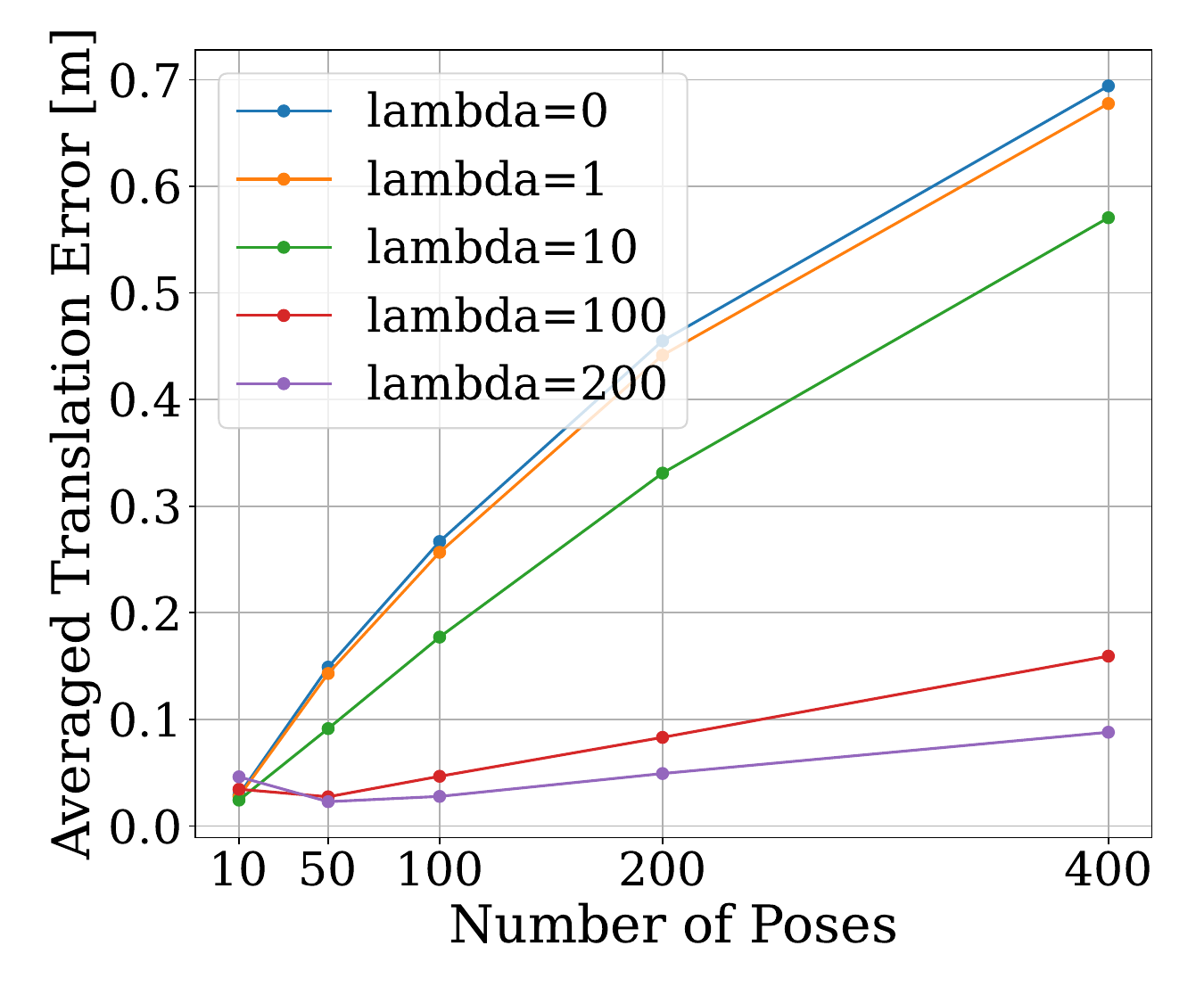}
			\end{minipage}\\
			\multicolumn{3}{c}{{\smaller (b) Grid}}\\
			\begin{minipage}{0.32\textwidth}
				\centering
				\includegraphics[width=\textwidth]{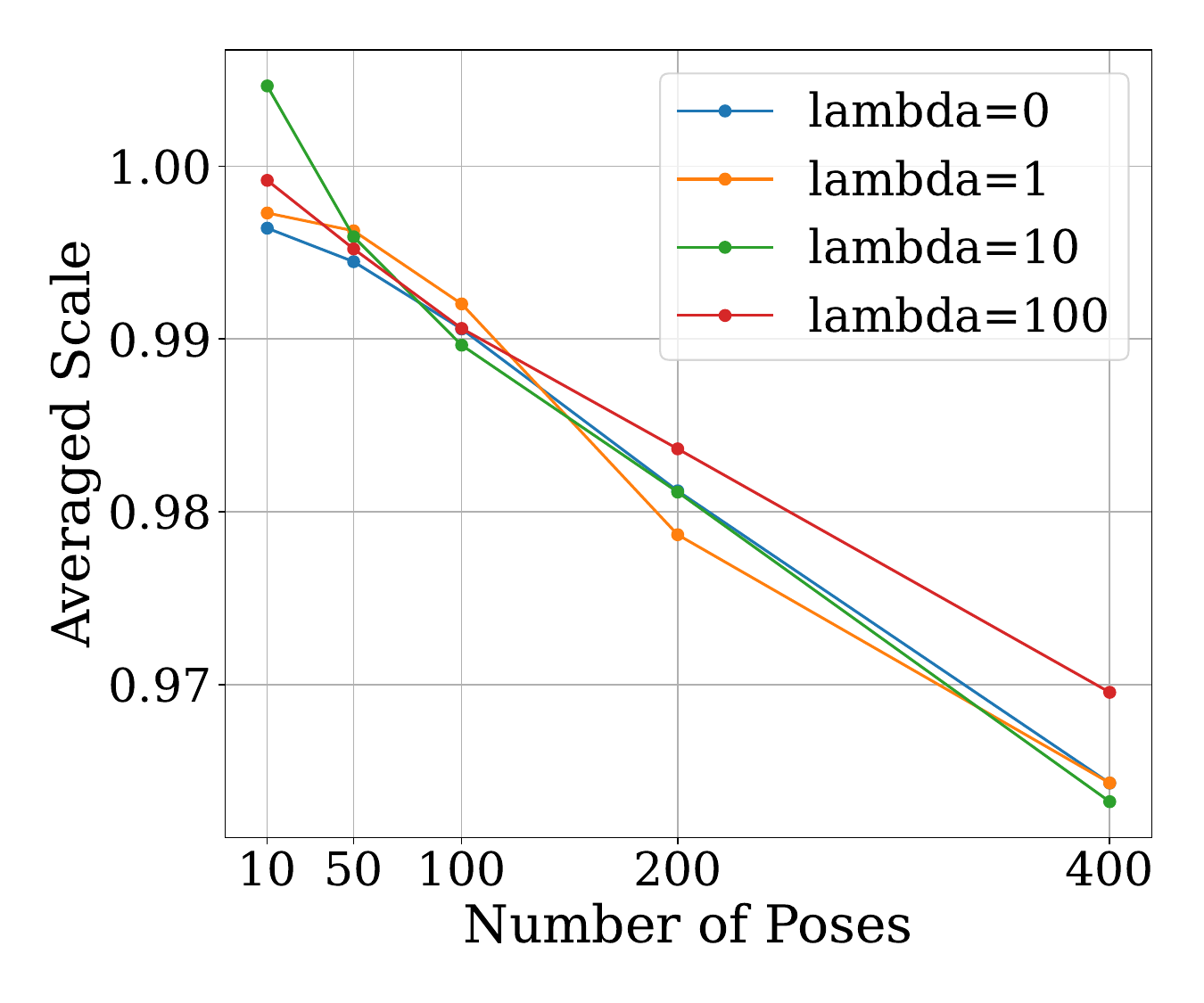}
			\end{minipage}
			&
			\begin{minipage}{0.32\textwidth}
				\centering
				\includegraphics[width=\textwidth]{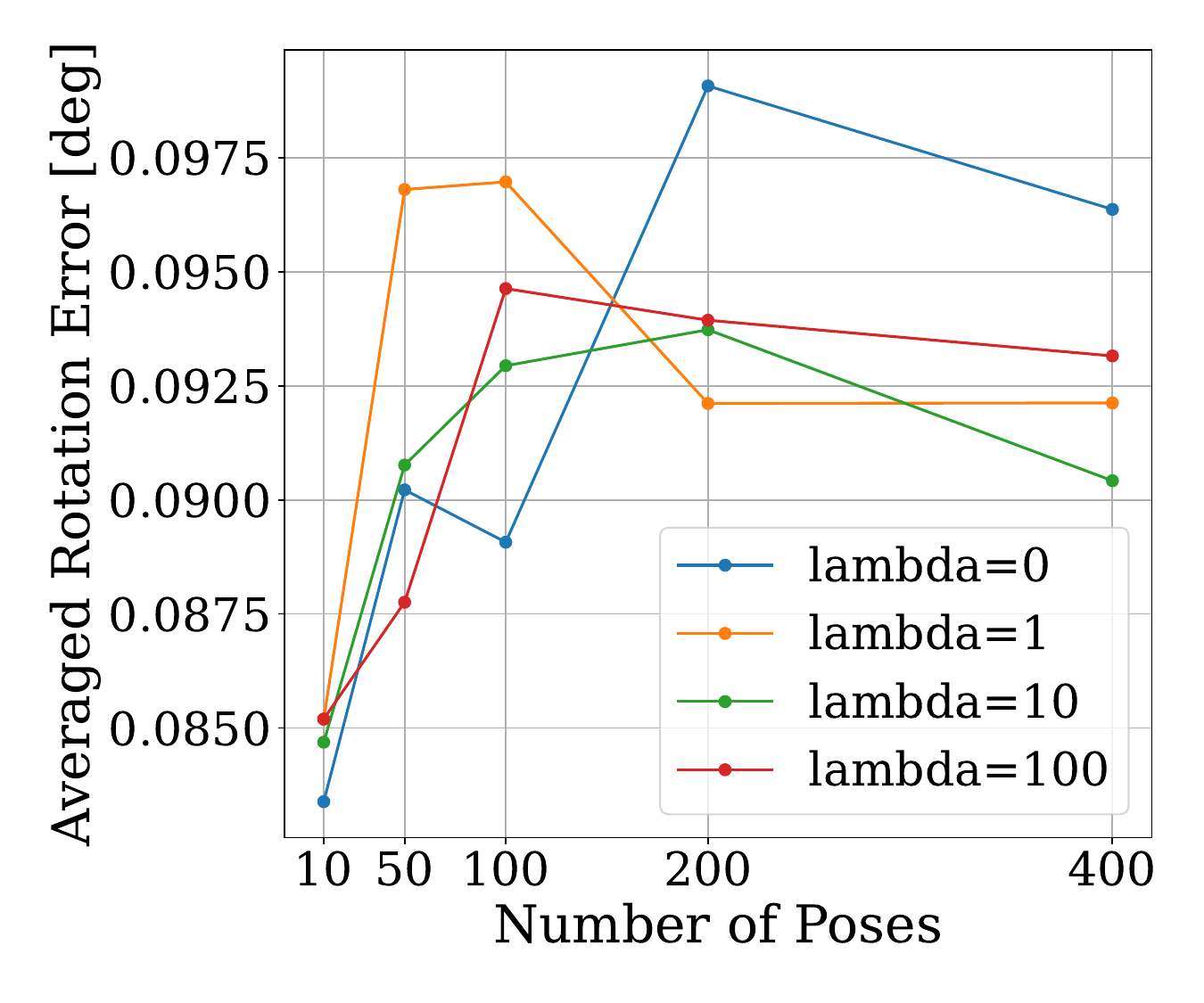}
			\end{minipage}
			&
			\begin{minipage}{0.32\textwidth}
				\centering
				\includegraphics[width=\textwidth]{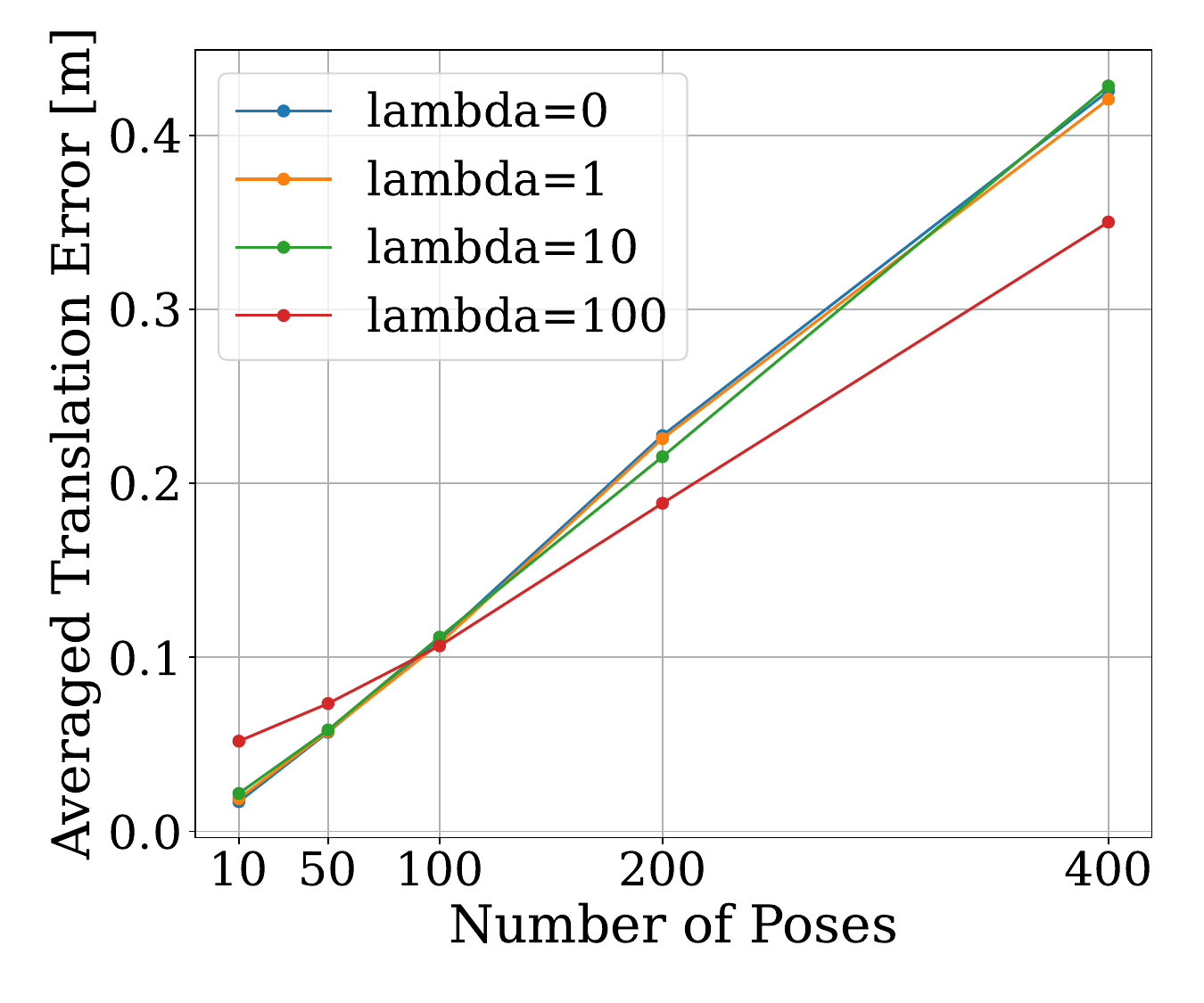}
			\end{minipage}\\
			\multicolumn{3}{c}{{\smaller (c) Line}}
		\end{tabular}
	\end{center}
	\vspace{-7mm}
	\caption{Scale regularization of \simsync in (a) circle, (b) grid, and (c) line datasets.}
	\label{fig:ba-scale-reg}
	\vspace{-4mm}
\end{figure*}

\subsection{Outlier Rejection with Robust Estimators}
\label{outlier-rejection}
\textbf{Setup.} 
We follow the same setup as in previous sections to generate the circle, grid, and line datasets with $N=50$ poses. We sample the scale $s_i$ uniformly from $[0.9,1.1]$ and choose the noise $\sigma = 0.01$. To generate outliers, we randomly replace a fraction of the edge-wise point correspondences $(\widehat{P}_i, \widehat{P}_j)$ with outliers generated from $\calN(\zero,\eye_3)$. We sweep the outlier rate from $0$ to $90\%$ and perform 20 Monte Carlo simulations at each outlier rate.

\textbf{Robustify \simsync}. We propose three ways to robustify \simsync.
\begin{enumerate}
    \item \simsyncgnc. This approach directly wraps the \simsync solver in the \gnc framework~\cite{yang2020graduated} with a truncated least squares (TLS) robust cost function, as shown in the following optimization. 
    \bea \label{eq:simsyncgnc}
    \min_{\substack{s_i > 0, R_i \in \SOthree, t_i \in \Real{3} \\ i=1,\dots,N } } \sum_{(i,j) \in \calE} \sum_{k=1}^{n_{ij}} \min \cbrace{ \frac{\norm{ \parentheses{R_i {(s_i d_{i,k} \tldp_{i,k} )} + t_i} - \parentheses{R_j(s_j d_{j,k} \tldp_{j,k} ) + t_j} }^2}{\beta^2}, 1},
    \eea
    where $\beta$ is set according to Appendix \ref{noise-scale-simsync}.
    To solve \eqref{eq:simsyncgnc}, \simsyncgnc starts with all weights $w_{ij,k} = 1$ in \eqref{eq:sba} and  gradually sets some of the weights to zero based on the residuals.

    \item \gncsimsync. This approach is a two-step algorithm. In step one, we solve the following TLS \emph{scaled point cloud registration} problem over each edge $(i,j) \in \calE$
    \bea \label{eq:scaledpairregistration}
    \min_{s_{ij}>0, R_{ij} \in \SOthree, t_{ij} \in \Real{3}} \sum_{k=1}^{n_{ij}} \min \cbrace{ \frac{\Vert s_{ij} R_{ij} p_{j,k} + t_{ij} - p_{i,k} \Vert^2}{\beta^2}, 1    }
    \eea
    using \gnc with a nonminimal solver that we develop, presented in Appendix \ref{weighted_umeyama}, based on Umeyama's method \cite{umeyama1991least}. $\beta$ is set according to Appendix \ref{noise with scale}.

    In step two, we remove all outliers deemed by \gnc and solve \eqref{eq:sba}.
    \item \teasersimsync. Since problem \eqref{eq:scaledpairregistration} is exactly the problem solved by \teaser~\cite{yang2020teaser}, we directly apply \teaser to solve \eqref{eq:scaledpairregistration} and then pass the inliers to \eqref{eq:sba}. 
    
\end{enumerate}

\textbf{Results.} 
Fig.~\ref{fig:robust_simsync} shows the rotation, translation, and scale estimation errors \wrt outlier rates in the circle, grid, and line datasets. We first observe that (i) \simsyncgnc fails at outlier rate $10\%$. This shows that \gnc plus a nonminimal solver does not always work, especially when the model to be estimated is high-dimensional and the robust estimation problem is more combinatorial. We then observe that (ii) \gncsimsync is robust against $50\%$ outliers. This shows that it is a better strategy to first apply \gnc to low-dimensional robust fitting.\footnote{Note that for high outlier rates, there are no data points for \gncsimsync because it fails and produces infinite values that are discarded from the plots.} Lastly, we observe that (iii) \teasersimsync successfully handles outlier rates of $70\%$ and $80\%$.


\begin{figure*}[t]
	\begin{center}
	\begin{minipage}{\textwidth}
	\begin{tabular}{ccc}%
		\begin{minipage}{0.32\textwidth}%
			\centering%
			\includegraphics[width=\textwidth]{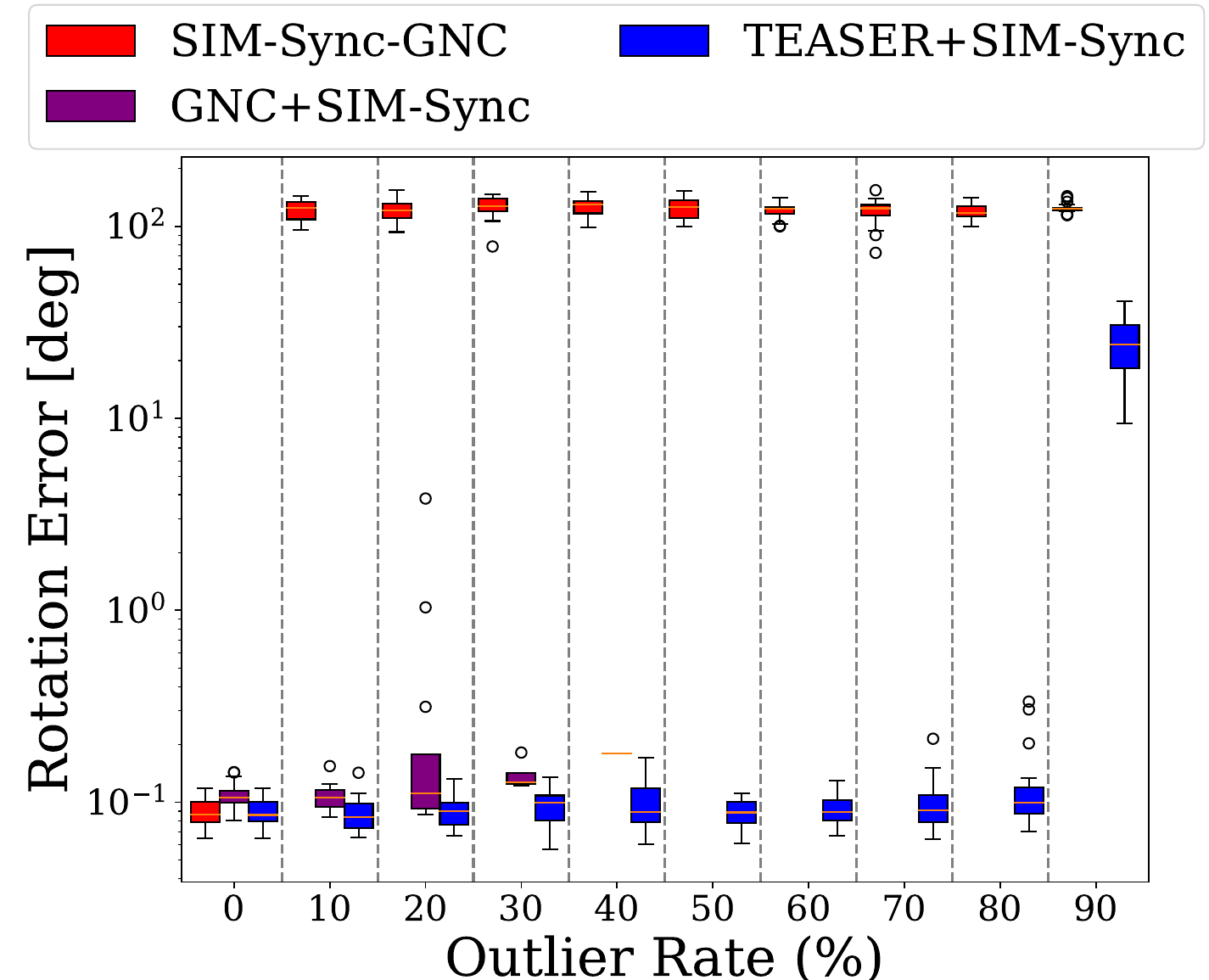}
		\end{minipage}
		&
		\begin{minipage}{0.32\textwidth}%
			\centering%
			\includegraphics[width=\textwidth]{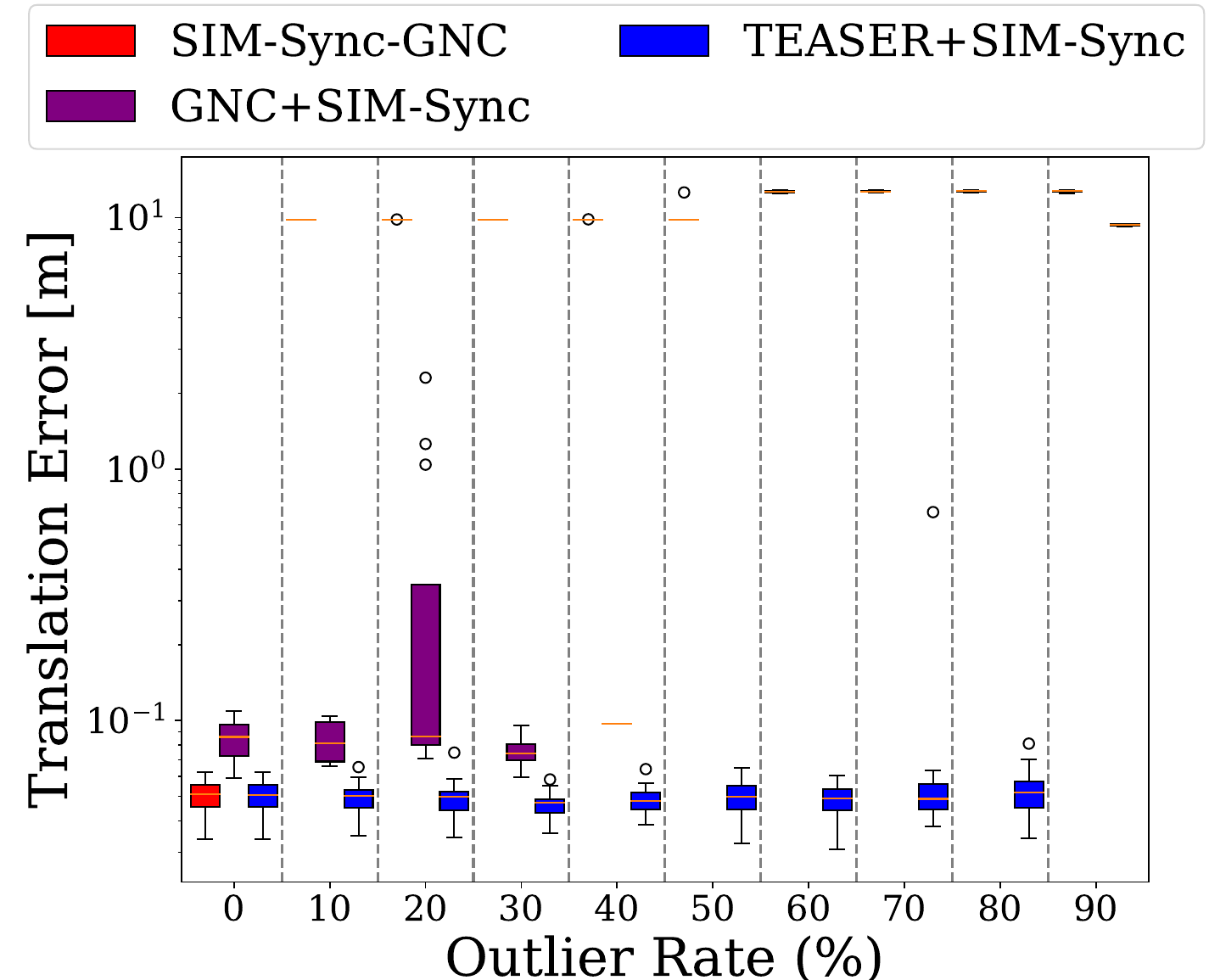}
		\end{minipage}
		&
		\begin{minipage}{0.32\textwidth}%
			\centering%
			\includegraphics[width=\textwidth]{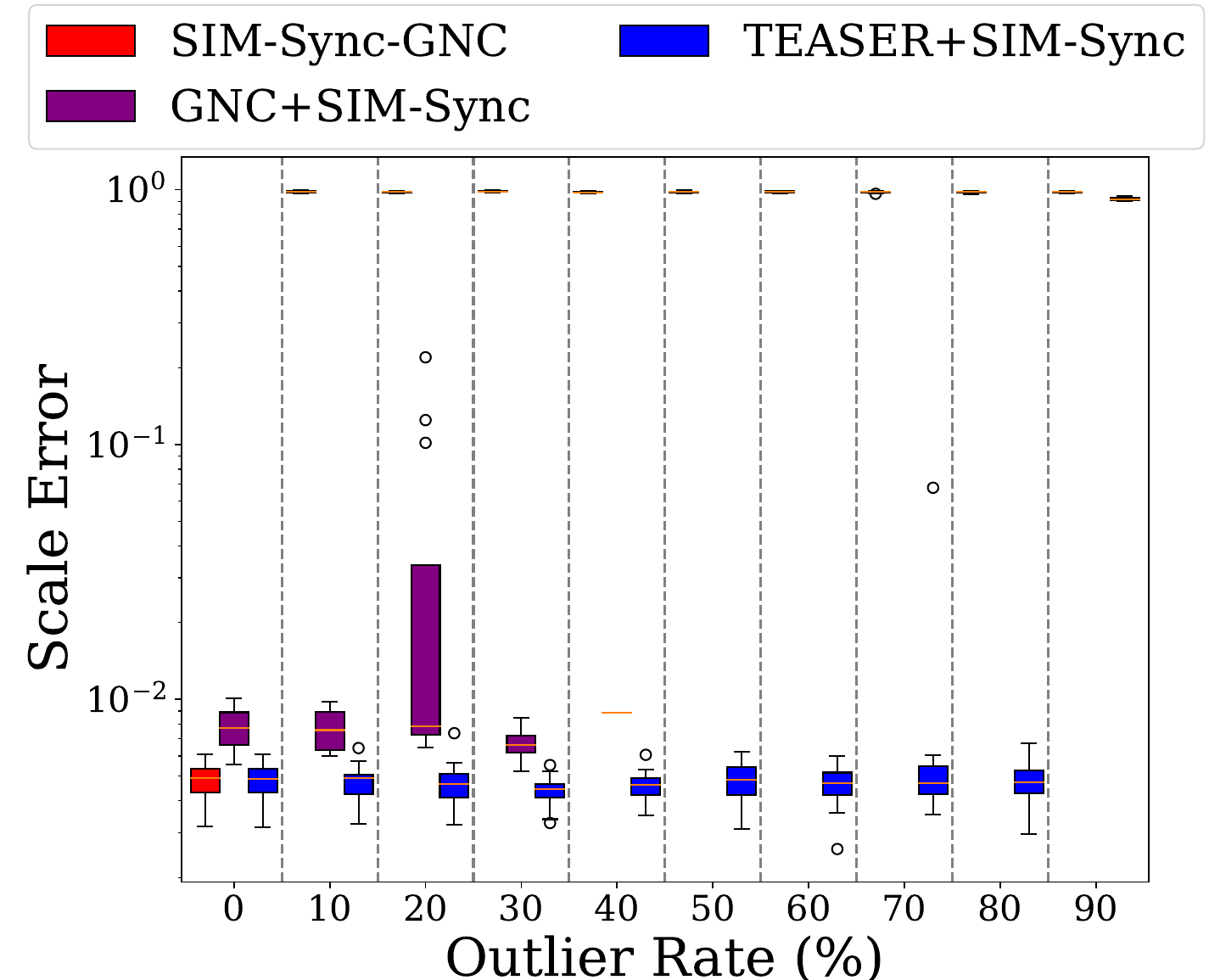}
		\end{minipage} \\
		\multicolumn{3}{c}{{\smaller (a) Circle}}\\
		\begin{minipage}{0.32\textwidth}%
			\centering%
			\includegraphics[width=\textwidth]{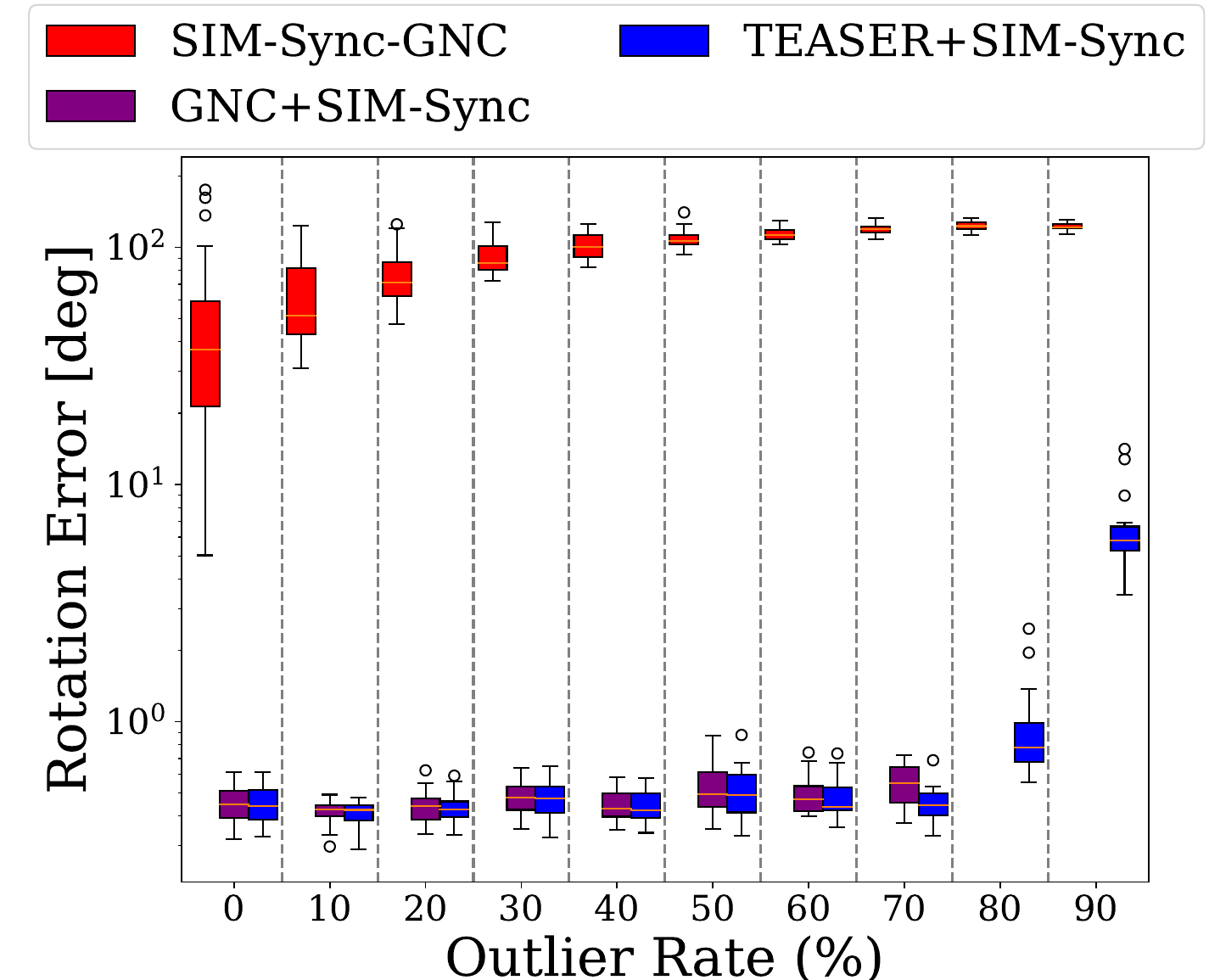}
		\end{minipage}
		&
		\begin{minipage}{0.32\textwidth}%
			\centering%
			\includegraphics[width=\textwidth]{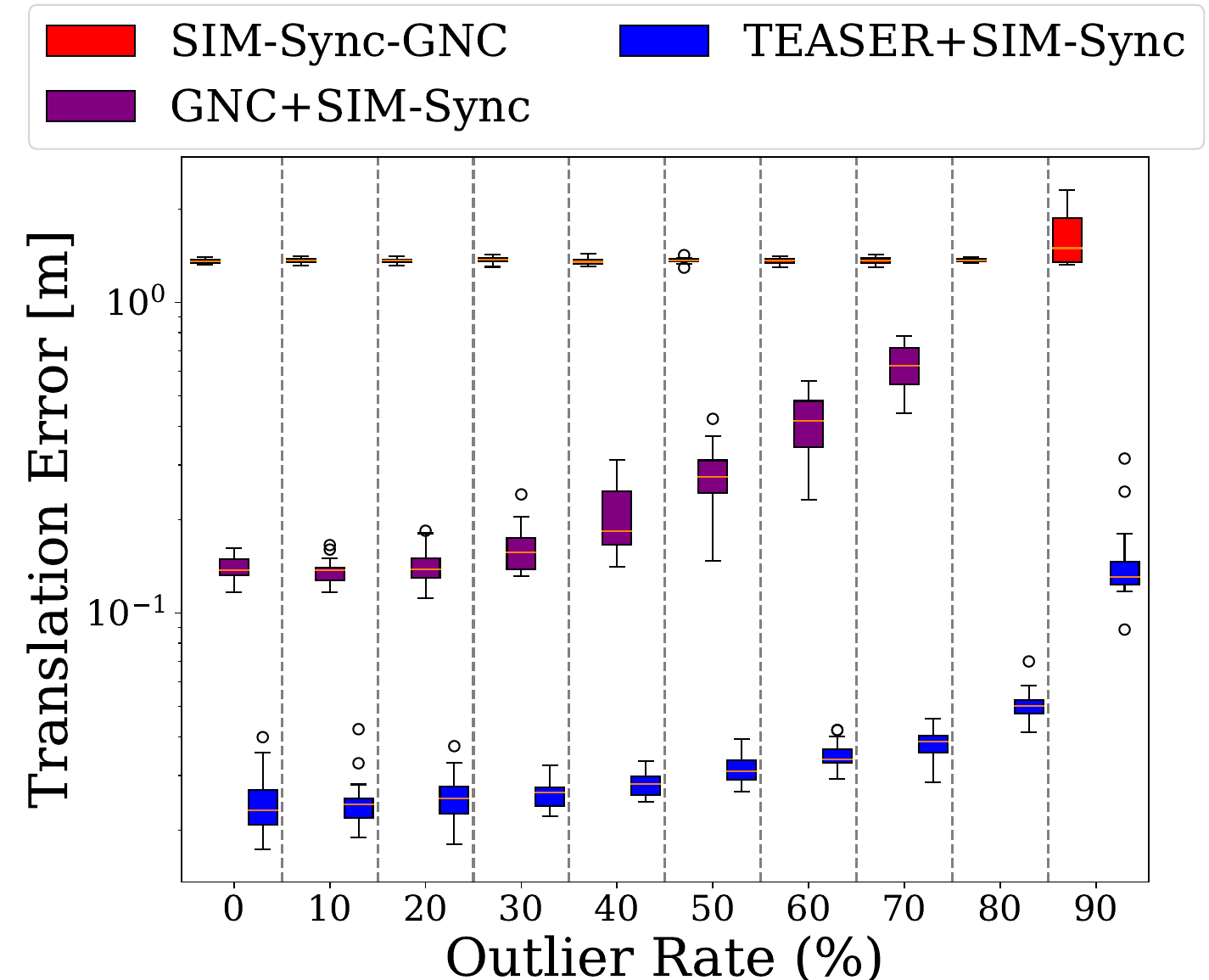}
		\end{minipage}
		&
		\begin{minipage}{0.32\textwidth}%
			\centering%
			\includegraphics[width=\textwidth]{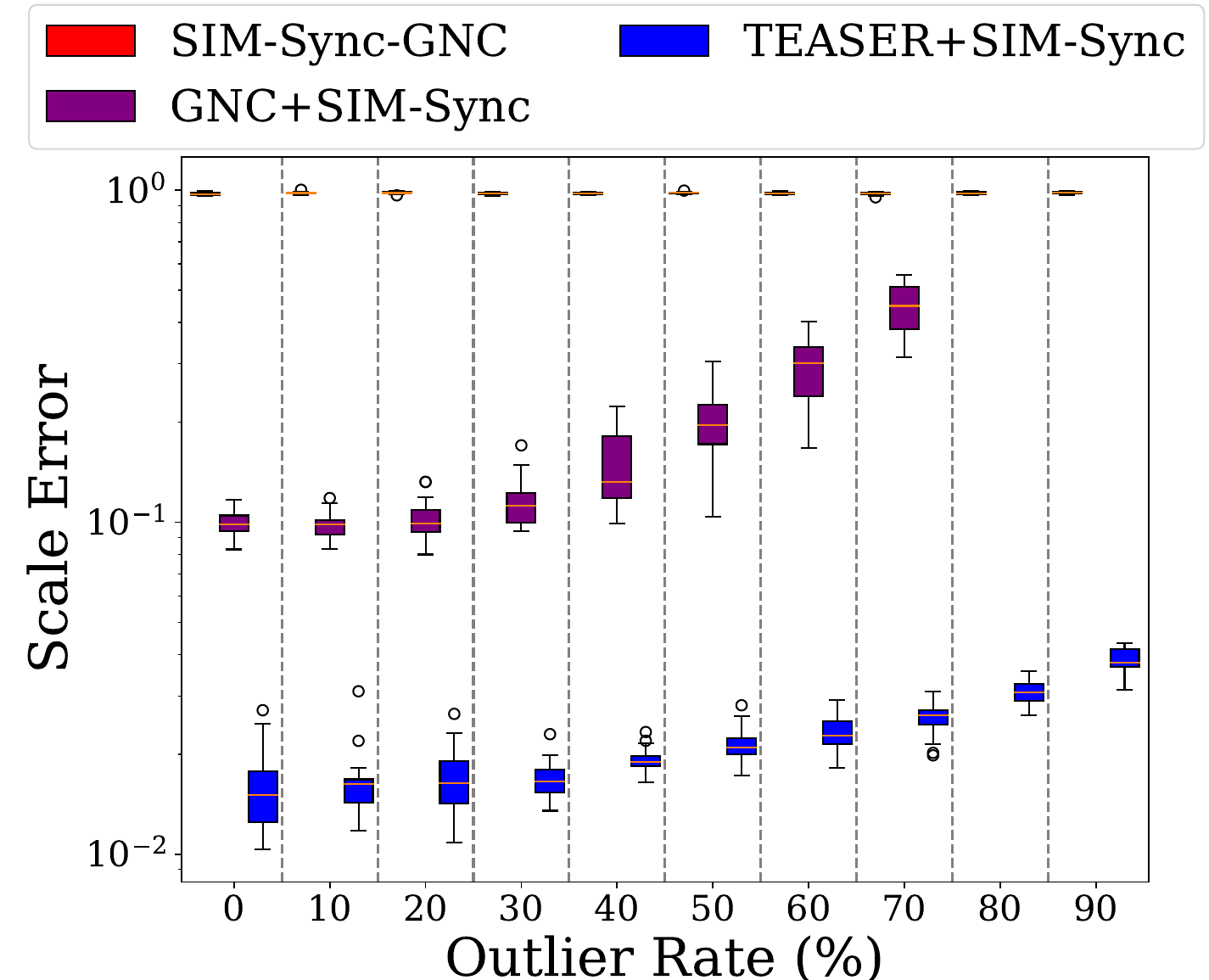}
		\end{minipage} \\
		\multicolumn{3}{c}{{\smaller (b) Grid}}\\
		\begin{minipage}{0.32\textwidth}%
			\centering%
			\includegraphics[width=\textwidth]{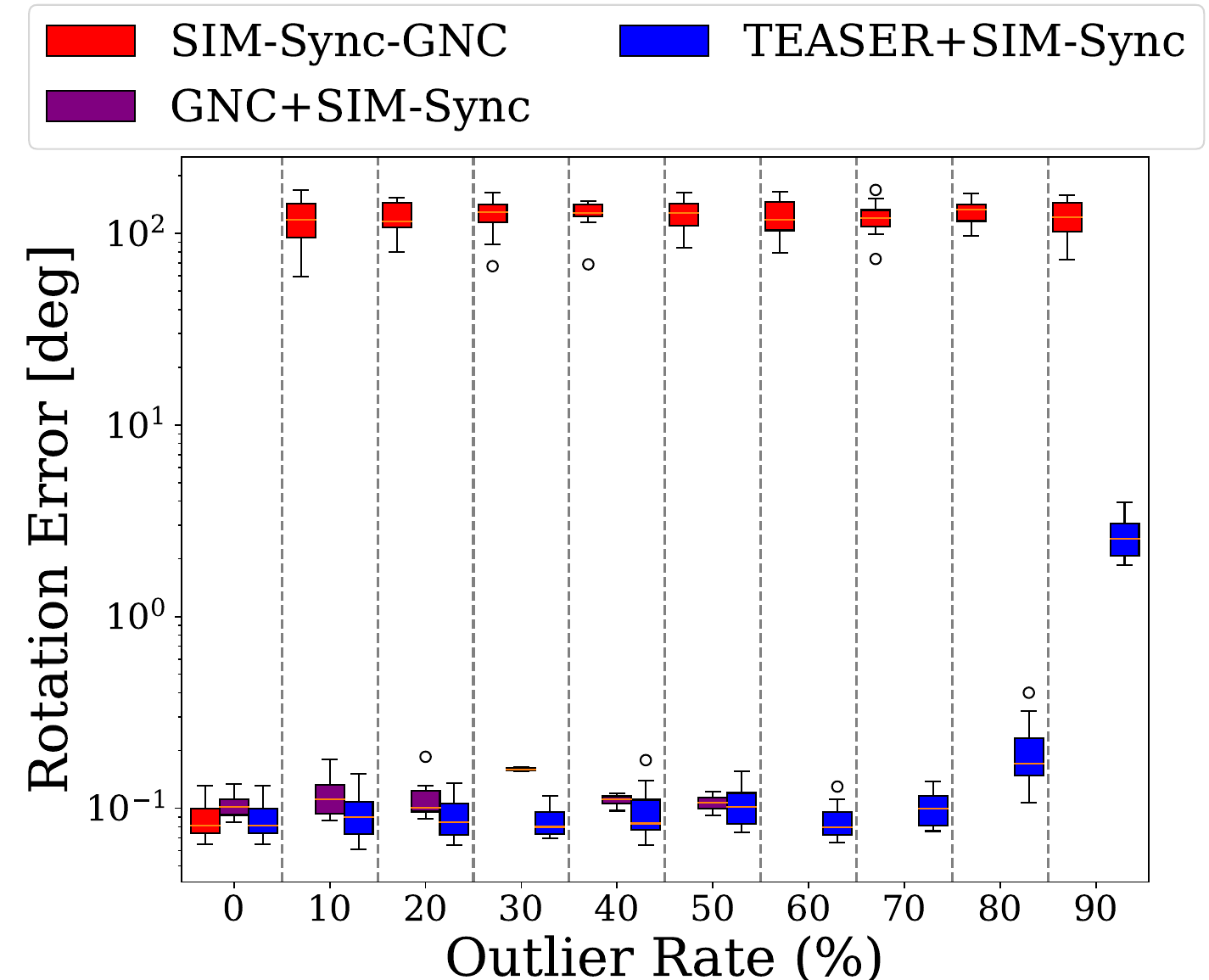}
		\end{minipage}
		&
		\begin{minipage}{0.32\textwidth}%
			\centering%
			\includegraphics[width=\textwidth]{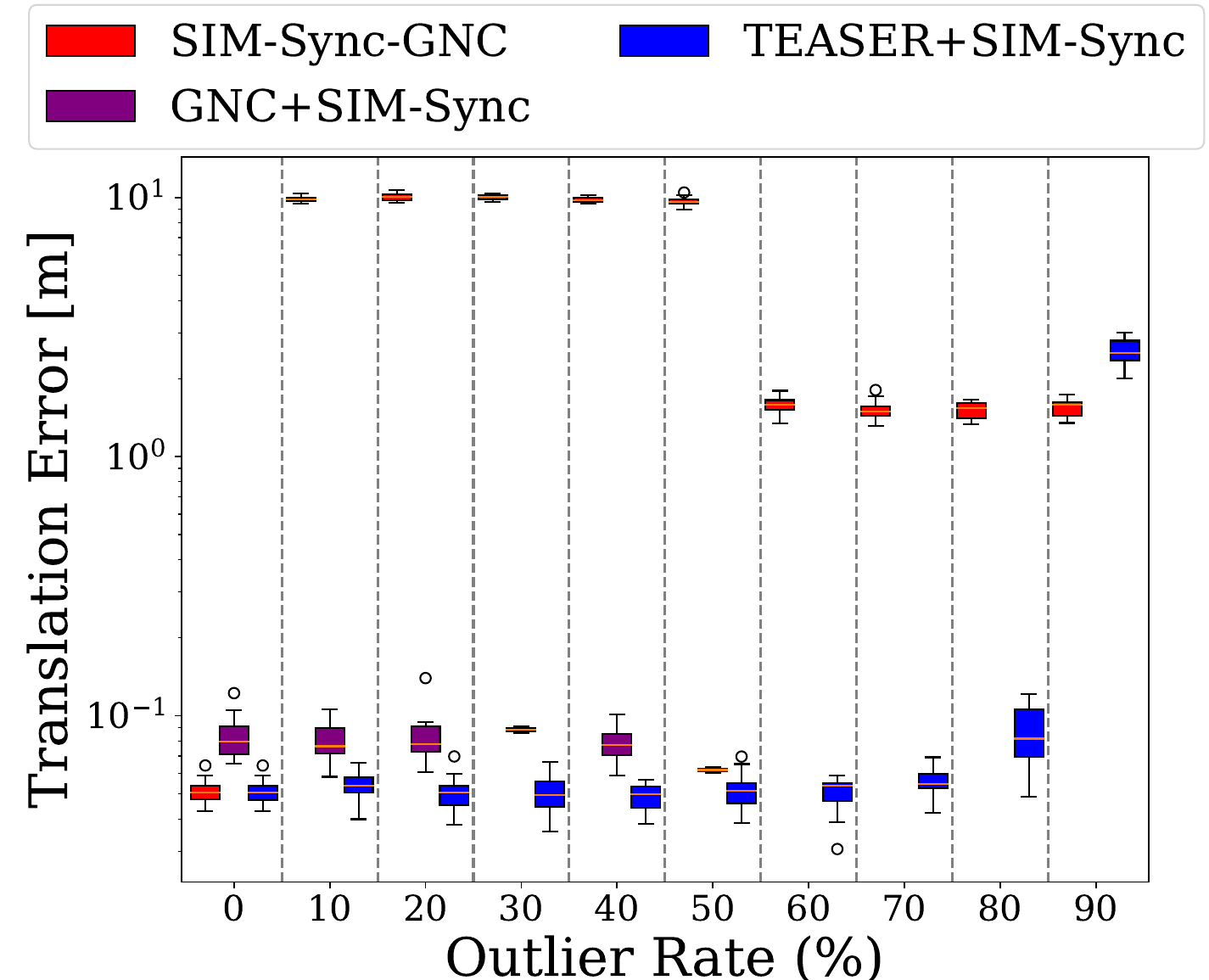}
		\end{minipage}
		&
		\begin{minipage}{0.32\textwidth}%
			\centering%
			\includegraphics[width=\textwidth]{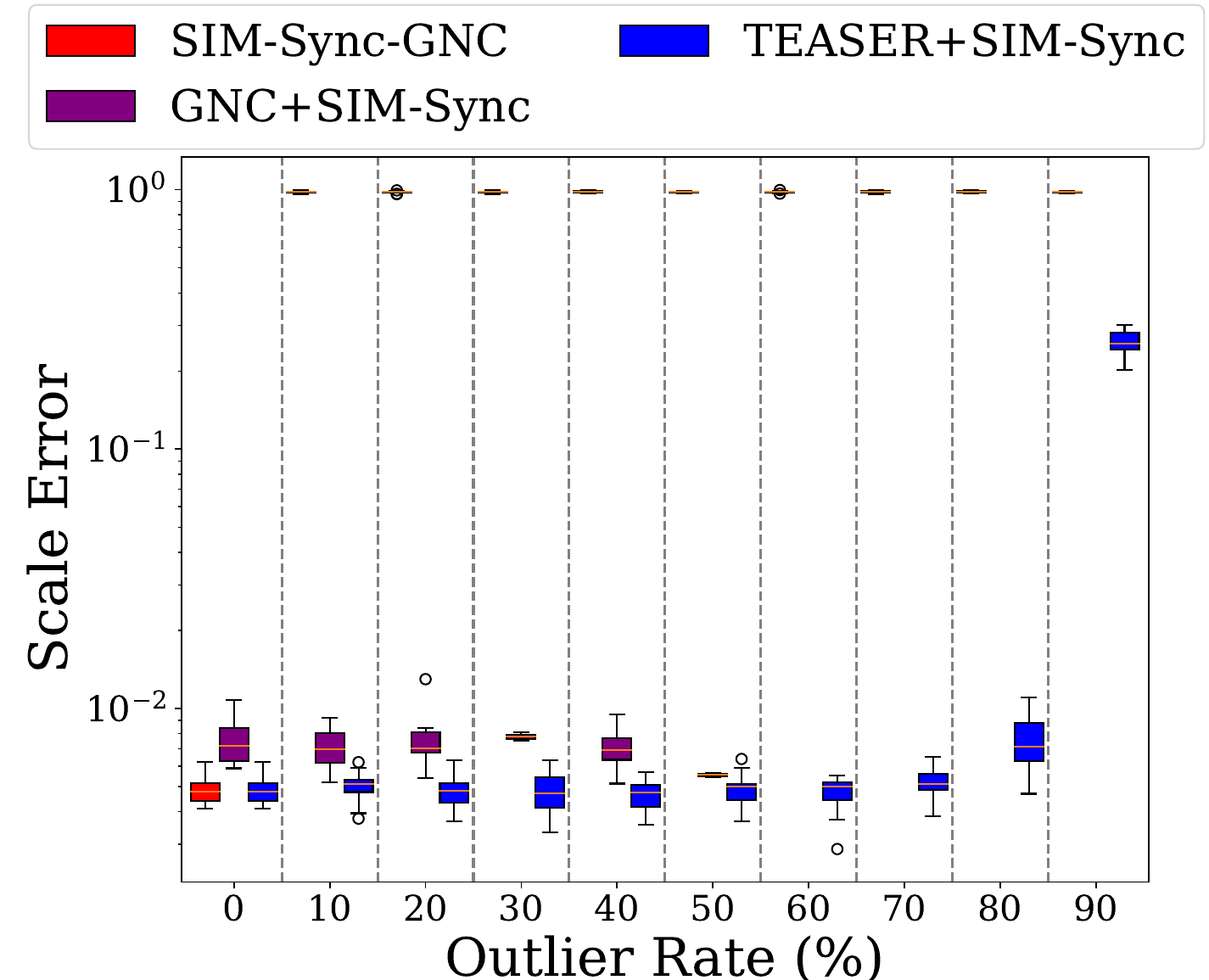}
		\end{minipage} \\
		\multicolumn{3}{c}{{\smaller (c) Line}}
	\end{tabular}
	\end{minipage}
	\vspace{-3mm} 
	\caption{Robustness of \simsyncgnc, \gncsimsync and \teasersimsync in the (a) circle, (b) grid, and (c) line datasets.} 
        \label{fig:robust_simsync}
	\vspace{-7mm} 
	\end{center}
\end{figure*}

\subsection{BAL Experiments}
\label{bal-experiment}

\textbf{Setup.} We test two sequences in the bundle adjustment dataset \bal~\cite{agarwal10eccv-ba}: the \texttt{dubrovnik-16-22106} sequence and the \texttt{ladyburg-318-41628} sequence. The former sequence consists of 16 poses with 22,106 points, and the latter sequence consists of 318 poses with 41,628 points. Both sequences provide pixel-wise correspondences for frame pairs, and these correspondences are contaminated with outliers. Since no images are provided, we cannot use a learned module to predict depth. Therefore, we use the $z$-component of the ground truth point position in camera frame $i$, which is computed by using the groundtruth camera pose to transform the point from the global frame to the $i$-th camera frame. 
Consequently, the scaling effect is not applicable, and we disable the scale prediction. To remove outliers, we use \teaser. We also test the performance of \teasersimsyncgt, which uses ground truth poses to filter out outlier correspondences.

\textbf{Baseline.}
We compare with two baselines \teasersesync and \ceres. \teasersesync first uses \teaser to estimate pair-wise relative poses and then feed them into \sesync, while \ceres directly optimizes reprojection errors of 3D keypoints.\footnote{We use the official implementation \url{http://ceres-solver.org/nnls_tutorial.html}.} 
We initialize \ceres as follows: (i) camera intrinsics initialized as groundtruth; (ii) 3D keypoints initialized as groundtruth; (iii) the $z$-component of the camera poses are initialized using groundtruth; (iv) the other components of the camera poses are initialized to be zeros. We remark that this initialization strategy using groundtruth values is optimistic.\footnote{Without these groundtruth values as initialization it is difficult to get \ceres to work well.} We also combine \teasersimsync and \teasersesync with \ceres, \ie we use the estimation from \teasersimsync and \teasersesync to initialize \ceres.

\textbf{Results.}
Table~\ref{table: BAL_table_small} shows the quantitative results for the \texttt{dubrovnik-16-22106} sequence (the rotation error and the translation error are averaged over all the nodes). We can see that (i) in the ideal case where outliers are filtered, \teasersimsyncgt achieves very accurate reconstruction; (ii) \teasersimsync and \teasersesync perform worse than \teasersimsyncgt, but with the refinement of \ceres, the final results are accurate as well. In fact, they are better than using \ceres alone. 
Fig.~\ref{BAL_3dRecon} shows qualitative results of the reconstruction, where ICP is used to refine the reconstruction. Both \teasersimsync and \teasersesync did not use \ceres refinement. We see that \teasersimsync already achieves good reconstruction without ICP and \ceres.

Table~\ref{table: BAL_table} shows the results for the \texttt{ladyburg-318-41628} sequence. We observe that \teasersimsyncgt still performs quite well. However, both \teasersimsync and \teasersesync fail to produce accurate pose estimation, though their results are better than \ceres. In fact we see that both \teasersimsync and \teasersesync lose tightness (suboptimality is around $10\%$ and $20\%$). This suggests that the correspondences provided in this sequence is contaminated by a higher amount of outliers (compared with the \texttt{dubrovnik-16-22106} sequence) and the point clouds are also noisier. Fig. \ref{BAL_3dRecon2} shows the qualitative reconstruction results.


\begin{figure}[t]
    \centering
    \includegraphics[width= 1.0\columnwidth]{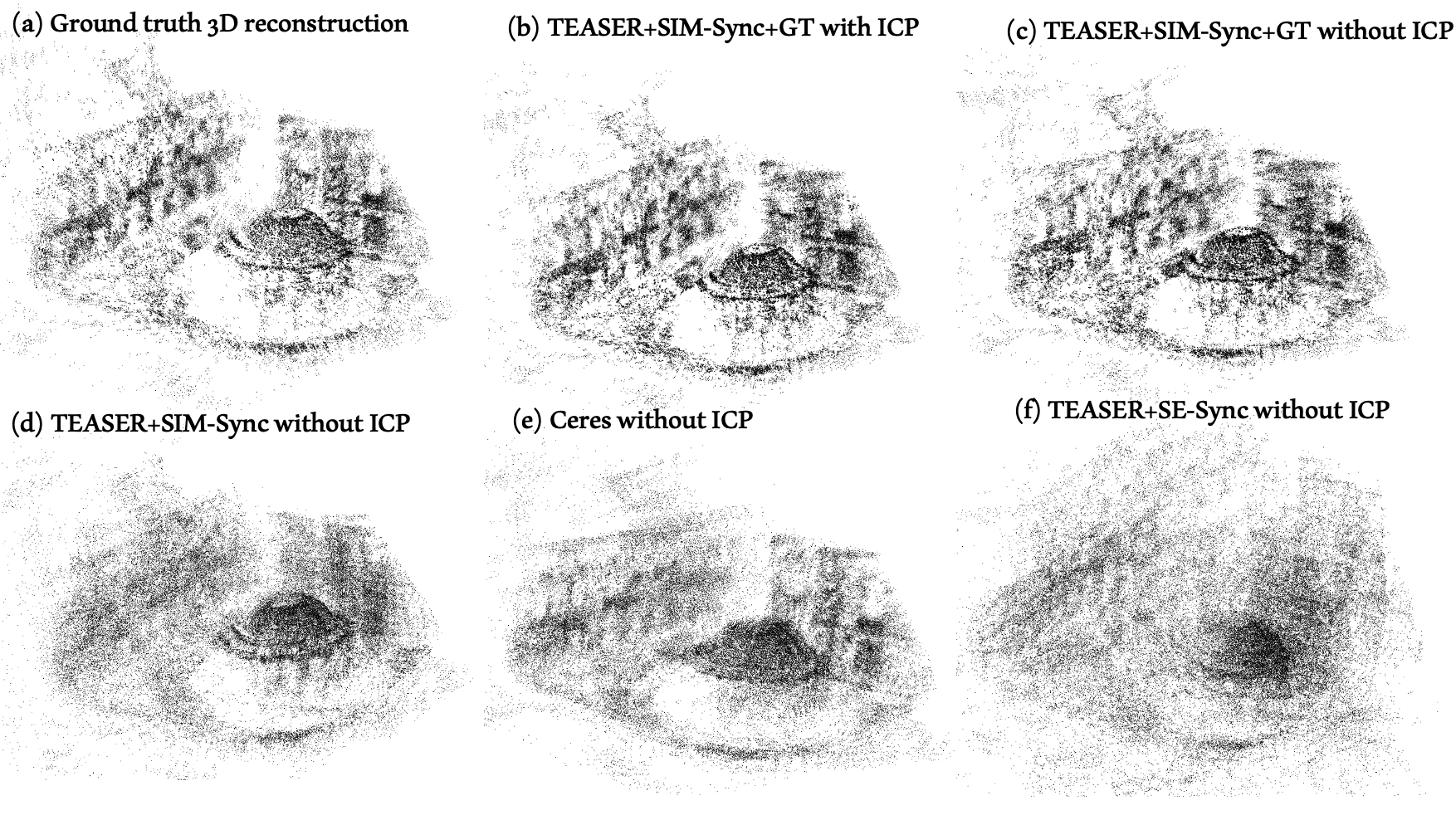}
    \vspace{-6mm}
    \caption{3D reconstruction of a front view street scene from \bal \texttt{dubrovnik_16} sequence.}
    \label{BAL_3dRecon}
    \vspace{-4mm}
\end{figure}

\begin{table}[t]
\caption{Rotation error, translation error and suboptimality of \teasersimsyncgt, \teasersimsync, \teasersesync and \ceres in \bal dataset \texttt{dubrovnik_16} sequence.}
\centering
\label{table: BAL_table_small}
\footnotesize
\begin{tabular}{ccccccc}
\hline
 & \teasersimsyncgt & \multicolumn{2}{c}{\teasersimsync}&\multicolumn{2}{c}{\teasersesync} & \ceres \\
& & w/o \ceres & w \ceres & w/o \ceres & w \ceres &  \\ 
\hline
Rotation Error [deg]& $0.615$ & $8.041$ & $2.031$  & $16.287$  & $2.009$ & $2.837$  \\ 
\hline
Translation Error [m]& $0.271$ & $3.564$ & $1.634$  & $4.393$ & $1.828$ & $2.586$ \\ 
\hline
Suboptimality & $1.578\mathrm{e-}9$ & $2.996\mathrm{e-}9$ & / & $9.651\mathrm{e-}10$ & / & /  \\ 
\hline
\end{tabular}
\vspace{-4mm}
\end{table}
\begin{figure}[t]
    \centering
    \includegraphics[width= 1.0\columnwidth]{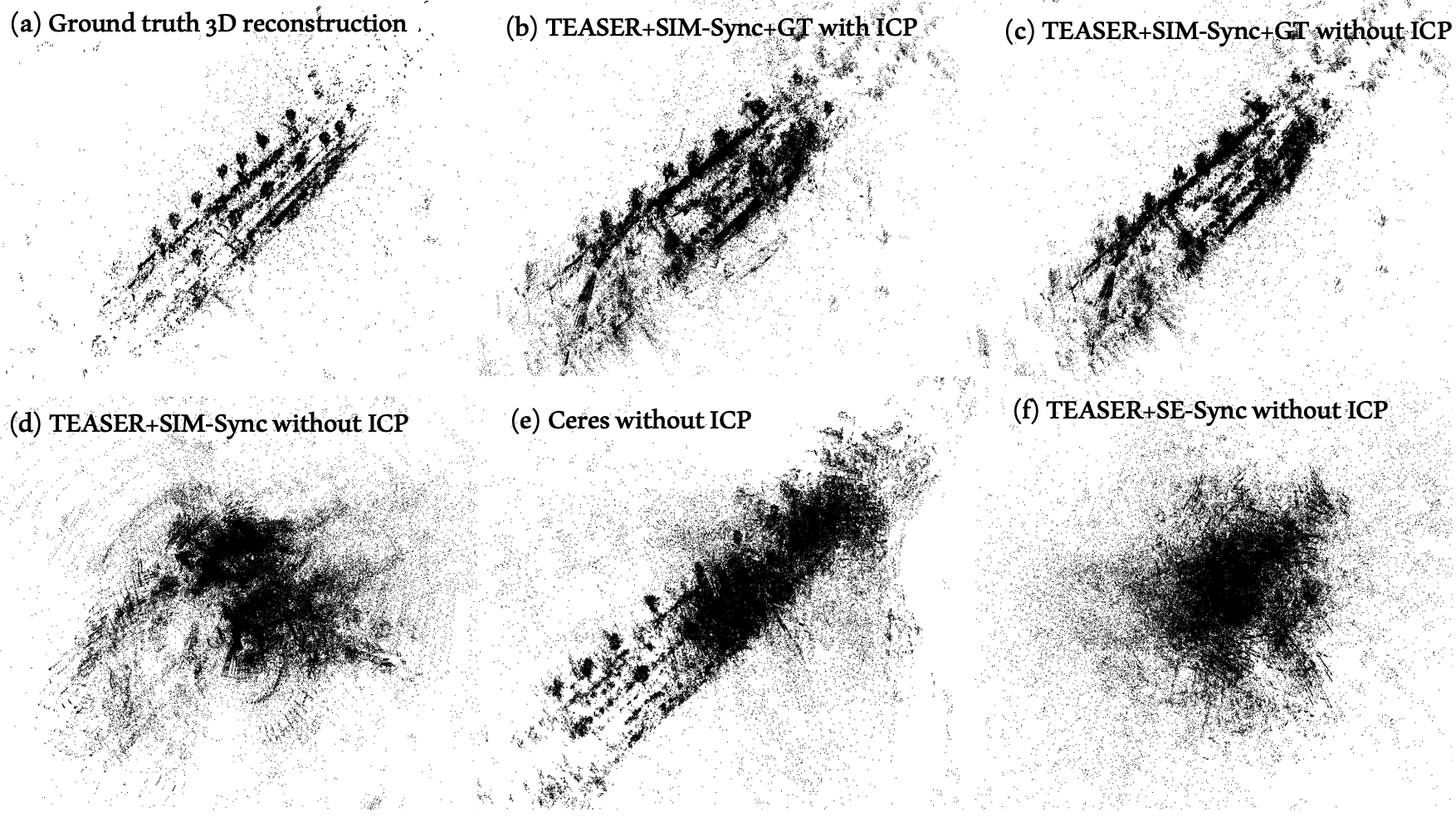}
    \caption{3D reconstruction of a bird view street scene from \bal \texttt{ladyburg_318} dataset.}
    \label{BAL_3dRecon2}
    \vspace{-4mm}
\end{figure}

\begin{table}[t]
\small
\caption{Rotation error, translation error and suboptimality of \teasersimsyncgt, \teasersimsync, \teasersesync and \ceres in \bal dataset \texttt{ladyburg_318} sequence.}
\centering
\label{table: BAL_table}
\footnotesize
\begin{tabular}{ccccccc}
\hline
 & \teasersimsyncgt & \multicolumn{2}{c}{\teasersimsync}&\multicolumn{2}{c}{\teasersesync} & \ceres \\
& & w/o \ceres & w \ceres & w/o \ceres & w \ceres &  \\ 
\hline
Rotation Error [deg]& $9.730$ & $78.457$ & $75.046$  & $86.067$  & $87.995$ & $82.601$  \\ 
\hline
Translation Error [m]& $0.334$ & $2.299$ & $3.307$  & $2.600$ & $5.858$ & $4.776$ \\ 
\hline
Suboptimality & $0.021$ & $0.092$ & / & $0.216$ & / & /  \\ 
\hline
\end{tabular}
\vspace{-3mm}
\end{table}

\subsection{TUM Experiments}
\label{tum-experiment}

\textbf{Setup.} We test two sequences in the \tum dataset, the first 200 frames in the \texttt{freiburg1_xyz} sequence and the first 200 frames in the \texttt{freiburg2_xyz} sequence, respectively.\footnote{We discard the first 60 frames in \texttt{freiburg2_xyz} since the camera shakes and results in blurred images.} For \teasersimsync, we use learned depth obtained from the \midas model \cite{Ranftl2022, birkl2023midas}, with the largest 10\% depth discarded. Note that \midas is not trained on the \tum dataset, and we directly use its default parameter configuration (\ie zero-shot). For \teasersimsyncgtdepth, we use ground truth depth. We conduct two tests for \teasersimsync/\teasersimsyncgtdepth. In the first test, we run \teasersimsync/\teasersimsyncgtdepth on the \emph{essential graph} (EG), which is a set of key frames and edges selected by the \orbslam~\cite{campos2021orb} algorithm. In the second test, we run \teasersimsync/\teasersimsyncgtdepth on a graph $\calG = (\calV,\calE)$ generated as follows
\bea \label{eq:edges_tum}
\calE = \left\{ (i, j) \mid i = 1, 2, \ldots, N-2, \text{ and } j = i+1 \text{ or } j = i+2 \text{ or } (i = N-1 \text{ and } j = N) \right\},
\eea 
\ie we sample the frame pairs of neighboring 3 frames.
We utilize \sift~\cite{lowe1999object} to get initial correspondences and then apply the learned \caps descriptor~\cite{wang2020learning} to sort the correspondences by the feature similarity between two points. We keep a maximum of 400 \sift correspondences. 

\textbf{Baseline.} We use the state-of-the-art visual SLAM algorithm \orbslam \cite{campos2021orb} as a baseline. We use Monocular mode without IMU of \orbslam and run on its default setting in the official script of running \tum dataset.\footnote{\url{https://github.com/UZ-SLAMLab/ORB_SLAM3}} We also compare with the stereo mode of \orbslam.

\textbf{Results.} 
We take a detour first to demonstrate how \teaser works as shown in Fig. \ref{TUM_TEASER_vis}. We randomly pick a pair of images from \teasersimsyncgtdepth and \teasersimsync. The red lines indicate outlier correspondences detected by \teaser while the green lines are inliers. We can see that \teaser performs quite well in classifying outliers from inliers. 

Tables \ref{table: TUM_dataset_quant_comp2} and \ref{table: TUM_dataset_quant_comp3} show the quantitative results of all methods in the \texttt{freiburg1_xyz} and \texttt{freiburg2_xyz} sequences, respectively. We follow the standard evaluation protocol of visual odometry for assessing pose accuracy, \ie Absolute Trajectory Error (ATE) and Relative Pose Error (RPE).\footnote{ATE quantifies the root-mean-square error between predicted camera positions and the groundtruth positions. RPE measures the relative pose disparity between pairs of adjacent frames, including both translation error (RPE-T) and rotational error (RPE-R).} 
Since the scale of the output of \orbslam is unknown, we scale up the predicted translation to the scale of the groundtruth.\footnote{The factor is the median of norms of ground truth translation divided by the median of norms of predicted translation.} 
We can see that \teasersimsyncgtdepth and \teasersimsync with Essential Graph achieve comparable accuracy as \orbslam, while being simpler and more direct algorithms that also offer optimality guarantees. On the other hand, \teasersimsyncgtdepth and \teasersimsync with the naive graph as in \eqref{eq:edges_tum} show worse accuracy. This result implies that the essential graph is a better graph architecture than the naive graph in scene reconstruction.

We show the qualitative 3D reconstruction results of the \texttt{freiburg1_xyz} and \texttt{freiburg2_xyz} sequences in Fig.~\ref{TUM_3dRecon}, using \teasersimsync with learned depth. The reconstruction is formed by stacking the learned depth point clouds of all frames (after transformation to a common coordinate frame).

We can see that even with learned depth, the \teasersimsync reconstruction achieves good accuracy. 

\begin{figure}[t]
    \centering
    \includegraphics[width= 1.0\columnwidth]{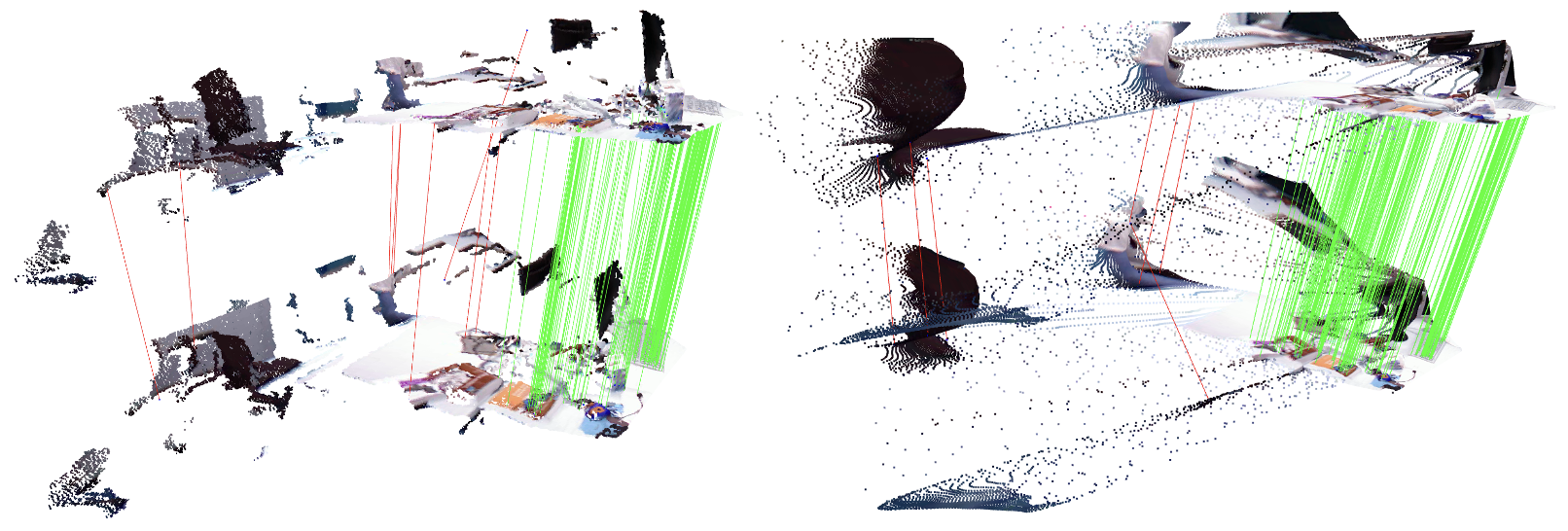}
    \caption{Illustration of outlier detection by \teaser. (Left: \textit{\teasersimsyncgtdepth}, Right: \textit{\teasersimsync with learned depth})}
    \label{TUM_TEASER_vis}
\end{figure}

\begin{figure}[t]
    \begin{center}
    \includegraphics[width= 1.0\columnwidth]{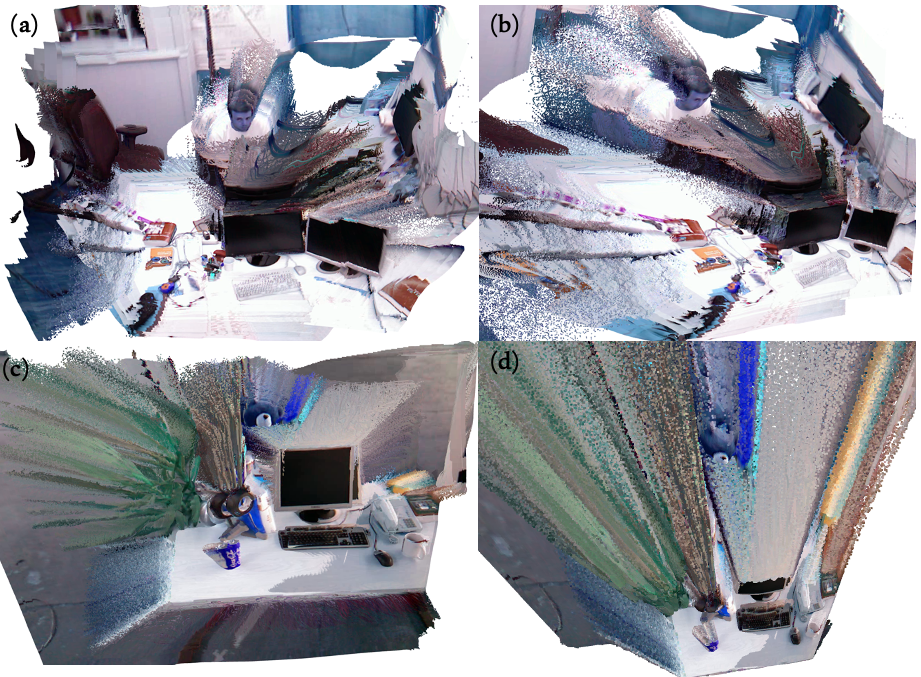}
    \end{center}
    \vspace{-4mm}
    \caption{3D reconstruction of \teasersimsync with learned depth in \tum. (a) Forward view reconstruction and (b) Side view reconstruction in the \texttt{freiburg1_xyz} sequence. (c) Forward view reconstruction and (d) Bird view reconstruction in the \texttt{freiburg2_xyz} sequence.}
    \label{TUM_3dRecon}
\end{figure}

\begin{table}[t]
\small
\caption{Rotation and translation error comparison for \teasersimsyncgtdepth, \teasersimsync and \orbslam in the 200-pose \texttt{freiburg1_xyz} sequence of \tum. We set the regularization factor to $\lambda=3$ for \teasersimsyncgtdepth and \teasersimsync without essential graph, and $\lambda=0$ with essential graph (EG).}
\vspace{1mm}
\centering
\label{table: TUM_dataset_quant_comp2}
\footnotesize
\begin{tabular}{ccccccc}
\hline
 & \multicolumn{2}{c}{\teasersimsyncgtdepth} & \multicolumn{2}{c}{\teasersimsync} &  \orbslam & \orbslam (RGB-D)\\ 
 & w/o EG & w EG & w/o EG & w EG &  &  \\ 
\hline
ATE [m] & $0.2453$ & $0.1335$  & $0.2003$  & $0.0638$ & $0.1081$ & $0.0147$ \\
\hline
RPE Trans [m] & $0.0167$ &  $0.0097$  &  $0.0163$  &  $0.0064$  & $0.0068$ & $0.0043$ \\
\hline
RPE Rot [deg] & $1.2001$ &  $0.6476$ & $1.3320$  &  $0.5659$  & $0.5521$ &  $0.2813$\\
\hline
Suboptimality & $1.1966\mathrm{e-}9$ &  $1.1966\mathrm{e-}9$  & $3.4336\mathrm{e-}9$  &  $9.8312\mathrm{e-}11$  & / &  /\\
\hline
\end{tabular}
\end{table}

\begin{table}[t]
\small
\caption{Rotation and translation error comparison for \teasersimsyncgtdepth, \teasersimsync and \orbslam in the 200-pose \texttt{freiburg2_xyz} sequence of \tum. Regularization factor same as Table~\ref{table: TUM_dataset_quant_comp2}.}
\vspace{1mm}
\centering
\label{table: TUM_dataset_quant_comp3}
\footnotesize
\begin{tabular}{ccccccc}
\hline
 & \multicolumn{2}{c}{\teasersimsyncgtdepth} & \multicolumn{2}{c}{\teasersimsync} &  \orbslam & \orbslam (RGB-D)\\ 
 & w/o EG & w EG & w/o EG & w EG &  &  \\ 
\hline
ATE [m] & $0.2089$ & $0.1237$ & $0.3190$ & $0.0358$ & $0.0246$ & $0.1023$ \\
\hline
RPE Trans [m] & $0.0072$ & $0.0017$  & $0.0030$ & $0.0012$ & $0.0007$ & $0.0017$ \\
\hline
RPE Rot [deg] & $0.3586$ & $0.2349$  & $0.9982$ & $0.2240$ & $0.2155$ & $0.1739$ \\
\hline
Suboptimality & $9.8904\mathrm{e-}10$ & $1.7527\mathrm{e-}9$  & $2.7516\mathrm{e-}9$ & $6.1298\mathrm{e-}10$ & /  &  /\\
\hline
\end{tabular}
\end{table}
\section{Conclusions}
\label{sec:conclusions}

We introduced \simsync, a certifiably optimal algorithm for camera trajectory estimation and scene reconstruction directly from image-level correspondences. With a pretrained depth prediction network, 2D image keypoints are lifted to 3D scaled point clouds, and \simsync seeks to jointly synchronize camera poses and unknown (per-image) scaling factors to minimize the sum of Euclidean distances between matching points. By first developing a QCQP formulation and then applying semidefinite relaxation, \simsync can achieve certifiable global optimality. We demonstrate the tightness, (outlier-)robustness, and practical usefulness of \simsync in both simulated and real datasets. Future research aims to (i) speed up \simsync by exploiting low-rankness of the optimal SDP solutions~\cite{rosen2019se,yang23mp-inexact}, and (ii) leverage the 3D reconstruction from \simsync to improve the imperfect depth prediction from the pretrained model.

\clearpage
\begin{center}
    \Large{\bf Appendix}
\end{center}
\appendix

\renewcommand{\theequation}{A\arabic{equation}}
\renewcommand{\thetheorem}{A\arabic{theorem}}


\section{Proof of Proposition \ref{prop:simple-objective}}
\label{proof_prop1}
\begin{proof}
    Let
    \bea 
    P_{i} =& \bmat{ccc} \sqrt{w_{ij,1}} d_{i,1} \tldp_{i,1} & \cdots & \sqrt{w_{ij,n_{ij}}} d_{i,n_{ij}}  \tldp_{i,n_{ij}} \emat \in \Real{3 \times n_{ij}}, \quad i =1,\dots,N, \nonumber\\
    W_{ij} =& [ \sqrt{w_{ij,1}} ,\dots, \sqrt{w_{ij,n_{ij}}} ]\tran \in \Real{n_{ij}}, \quad (i,j) \in \calE, \nonumber
    \eea
    For any $t_i$ and $\vectorize{s_i R_i}$, define the selection matrices
    \bea 
    t_i = Z^t_i t, \quad \vectorize{s_i R_i} = Z^R_i r, \quad Z^t_i \in \Real{3 \times 3N},\quad Z^R_i \in \Real{9 \times 9N}. \nonumber
    \eea
    We then write the objective in \eqref{eq:sba} as 
    \bea 
    L(t,r) = & \displaystyle \sum_{(i,j) \in \calE} \norm{ (s_iR_i) P_i + t_i W_{ij}\tran - (s_j R_j) P_j - t_j W_{ij}\tran  }_F^2 \nonumber \\
    = & \displaystyle \sum_{(i,j) \in \calE} \norm{ (P_i\tran \kron \eye_3) \vectorize{s_i R_i} + (W_{ij} \kron \eye_3) t_i - (P_j\tran \kron \eye_3) \vectorize{s_j R_j} - (W_{ij} \kron \eye_3) t_j } ^2 \nonumber \\
    = & \displaystyle \sum_{(i,j) \in \calE} \norm{ \underbrace{\bracket{(P_i\tran \kron \eye_3)Z^R_i - (P_j\tran \kron \eye_3)Z^R_j}}_{=:C_{ij}^R} r + \underbrace{\bracket{ (W_{ij} \kron \eye_3)Z^t_i - (W_{ij} \kron \eye_3)Z^t_j }}_{=:C_{ij}^t} t  }^2 \label{eq:app-prove-obj-1}
    \label{eq:expand_z}
    \eea
    Note that $Z^R_i$ and $Z^t_i$ can actually be decomposed as
    \bea
    Z^R_i = & e_i \tran \kron \eye_3 \kron \eye_3 \nonumber \\
    Z^t_i = & e_i \tran \kron \eye_3 \nonumber
    \eea
    where $e_i \in \Real{N}$ is the basis vector in $\Real{N}$ where the $i^{th}$ entry is 1 while other entries are all 0's.
    Plug in $C_{ij}^{R}$ and $C_{ij}^t$, we obtain:
    \bea
    C_{ij}^{R} = & (P_i\tran \kron \eye_3)(e_i \tran \kron \eye_3 \kron \eye_3) - (P_j\tran \kron \eye_3)(e_j \tran \kron \eye_3 \kron \eye_3) \nonumber \\
    = & (P_i \tran(e_i \tran \kron \eye_{3}))\kron \eye_{3} - (P_j \tran(e_j \tran \kron \eye_{3}))\kron \eye_{3} \nonumber\\
    = & ((1 \kron P_i \tran )(e_i \tran \kron \eye_{3}))\kron \eye_{3} - ((1 \kron P_j \tran)(e_j \tran \kron \eye_{3}))\kron \eye_{3} \nonumber\\
    = & (e_i \tran \kron P_i \tran )\kron \eye_{3} - (e_j \tran \kron P_j \tran )\kron \eye_{3} \nonumber\\
    = & (e_i \tran \kron P_i \tran  - e_j \tran \kron P_j \tran )\kron \eye_{3} \nonumber\\
    C_{ij}^t = & (W_{ij} \kron \eye_3)(e_i \tran \kron \eye_3) - (W_{ij} \kron \eye_3)(e_j \tran \kron \eye_3) \nonumber\\
    = & (W_{ij} e_i \tran) \kron \eye_3 - (W_{ij} e_j \tran) \kron \eye_3 \nonumber\\
    = & (W_{ij} e_i \tran - W_{ij} e_j \tran) \kron \eye_3 \nonumber
    \eea
    Plug $C_{ij}^{R}$ and $C_{ij}^t$ back to \eqref{eq:app-prove-obj-1}, we then simplify the objective as:
    \bea
    L(t,r) = & \displaystyle \sum_{(i,j) \in \calE} \norm{ ((e_i \tran \kron P_i \tran  - e_j \tran \kron P_j \tran) \kron \eye_3)  r + ((W_{ij} e_i \tran - W_{ij} e_j \tran) \kron \eye_3) t  }^2 \nonumber \\
    = & \displaystyle \bmat{c} t \\ r \emat \tran \parentheses{  \bmat{cc} Q_1 & V \tran \\ V & Q_2 \emat \kron \eye_3 }  \bmat{c} t \\ r \emat \nonumber \\
    = & \displaystyle t\tran \parentheses{ Q_1 \kron \eye_3} t + 2 r\tran\parentheses{ V \kron \eye_3} t + r\tran \parentheses{ Q_2 \kron \eye_3} r \label{eq:app-lossrt}
    \eea
    where
    \bea
    Q_1 = & \displaystyle \sum_{(i,j) \in \calE} (W_{ij} e_i \tran - W_{ij} e_j \tran)\tran (W_{ij} e_i \tran - W_{ij} e_j \tran) \in \Real{N \times N} \label{eq:app-Q1-restate} \\
    Q_2 = & \displaystyle \sum_{(i,j) \in \calE} (e_i \tran \kron P_i \tran  - e_j \tran \kron P_j \tran )\tran (e_i \tran \kron P_i \tran  - e_j \tran \kron P_j \tran ) \in \Real{3N \times 3N} \nonumber \\
    V = & \displaystyle \sum_{(i,j) \in \calE} (e_i \tran \kron P_i \tran  - e_j \tran \kron P_j \tran )\tran (W_{ij} e_i \tran - W_{ij} e_j \tran) \in \Real{3N \times N} \nonumber
    \eea
    concluding the proof.
\end{proof}

\section{Proof of Proposition \ref{prop:scale-rotation-only}}
\label{proof_prop2}
\begin{proof}
    To represent $t$ as a function of $r$, simply take the gradient of  (\ref{eq:simpleobjective}) \wrt $t$ and setting it to zero, we obtain
    \bea
    (Q_1 \kron \eye_3) {t} = (V \kron \eye_3)\tran r.
    \eea

    \begin{proposition}[Laplacian Representation of $Q_1$]\label{prop:laplacian} $Q_1$ can be represented as:
        \bea
        Q_1 = L(\calG) 
        \eea     
        where $L(\calG)$ is the Laplacian of $\calG$.
    \end{proposition}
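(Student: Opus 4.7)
The plan is to verify the identity by direct algebraic manipulation of the definition of $Q_1$ given in equation~\eqref{eq:app-Q1-restate}, and then recognizing the resulting expression as the standard edge-incidence representation of the weighted graph Laplacian.

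First, I would exploit the fact that the vector $W_{ij}$ appears on both sides of each summand and commute the scalar inner product. Concretely, writing $W_{ij} e_i\tran - W_{ij} e_j\tran = W_{ij}(e_i - e_j)\tran$, the summand becomes
\begin{equation*}
(e_i - e_j) W_{ij}\tran W_{ij} (e_i - e_j)\tran = \parentheses{\sum_{k=1}^{n_{ij}} w_{ij,k}} (e_i - e_j)(e_i - e_j)\tran.
\end{equation*}
Defining the aggregated edge weight $\omega_{ij} := \sum_{k=1}^{n_{ij}} w_{ij,k} > 0$, this yields
\begin{equation*}
Q_1 = \sum_{(i,j) \in \calE} \omega_{ij} (e_i - e_j)(e_i - e_j)\tran.
\end{equation*}

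Second, I would identify this expression with the weighted Laplacian $L(\calG)$. Recall that the weighted Laplacian of a graph with edge weights $\{\omega_{ij}\}_{(i,j)\in\calE}$ admits precisely the edge-sum representation $L(\calG) = B W B\tran = \sum_{(i,j)\in\calE} \omega_{ij}(e_i-e_j)(e_i-e_j)\tran$, where $B$ is the signed incidence matrix and $W$ is the diagonal matrix of edge weights. Equivalently, its $(i,i)$ diagonal entry is the weighted degree $\sum_{j:(i,j)\in\calE} \omega_{ij}$ and its $(i,j)$ off-diagonal entry (for $i\neq j$) is $-\omega_{ij}$ if $(i,j)\in\calE$ and $0$ otherwise, which one may verify matches the expanded form above entry by entry.

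Since the proof is essentially a one-line rewriting followed by recognition of a textbook object, I do not anticipate any real obstacle; the only subtlety is to make the weight convention explicit, namely that the Laplacian in question is the \emph{weighted} Laplacian with edge weight $\omega_{ij} = \sum_{k} w_{ij,k}$. In particular, if all correspondence weights are unity ($w_{ij,k}=1$), then $\omega_{ij} = n_{ij}$, and $Q_1$ reduces to the Laplacian of $\calG$ weighted by the number of correspondences on each edge.
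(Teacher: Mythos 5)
Your proof is correct and takes essentially the same route as the paper: expand each summand of $Q_1$ as $(e_i-e_j)W_{ij}\tran W_{ij}(e_i-e_j)\tran = \left(\sum_{k=1}^{n_{ij}} w_{ij,k}\right)(e_i-e_j)(e_i-e_j)\tran$ and recognize the edge-sum form of the weighted graph Laplacian. Your explicit bookkeeping of the aggregated edge weight $\omega_{ij}=\sum_{k} w_{ij,k}=\Vert W_{ij}\Vert^2$ is in fact more careful than the paper's one-line ``expand and compare,'' and it exposes a typo in the paper's entrywise formula for $L(\calG)$, whose entries are written with $\sqrt{w_{e,k}}$ where the correct expansion yields $w_{e,k}$.
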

    \begin{proof}
        Note that $\calG$ is a weighted undirected graph. Calling $\delta(i)$ for the set of edges incident to a vertex $v$ and $n_e = n_{ij}, w_{e,k} = w_{ij,k}$ for $e = (i,j)$, the Laplacian of $\calG$ is:
            \bea
            \label{lap}
            L(\calG)_{ij} = 
            \begin{cases}
                \sum_{e \in \delta(i)} \sum_{k=1}^{n_e} \sqrt{w_{e,k}} & \text{if } i = j, \\
                -\sum_{k=1}^{n_{ij}} \sqrt{w_{ij,k}} & \text{if } (i,j)\in \calE, \\
                0 & \text{if } (i,j) \notin \calE.
            \end{cases}
            \eea
        On the other hand, by expanding \eqref{eq:app-Q1-restate} and compare it with \eqref{lap}, we finish the proof.
    \end{proof}
    Note that $\rank{L(\calG)}=N-1$, then $\rank{Q_1 \kron \eye_3}=\rank{L(\calG)}\rank{\eye_3}$ by Proposition \ref{prop:laplacian}. Thus $Q_1 \kron \eye_3$ is not invertible. As we fixed $s_1 = 1$, $R_1 = \eye_3$, $t_1 = 0$, $t$ has a unique solution. Calling $\bar{t} = [t_2;\dots;t_N] \in \Real{3N-3}$ and $\bar{Q}_1 = [c_2;\dots;c_N] \in \Real{N \times (N-1)}$ where $c_i$ is the ith column of $Q_{1}$. We have
    \bea
    (\bar{Q}_1 \kron \eye_3) {\bar{t}} = (V \kron \eye_3)\tran r
    \eea
    Since $\rank{L(\calG)}=N-1$ and $\sum_{i=1,..,N} c_i = \mathbf{0}_{N}$, then $span(c_1,...,c_n) = span(c_2,...,c_n)$ and $\rank{\bar{Q}_{1}}=N-1$ which implies that $\bar{Q}_{1}$ has full column rank. Hence, by taking inverse and rearrange, we obtain
    \bea 
    \bar{t} = & \parentheses{\bar{A} \kron \eye_3} r
    \eea 
    where
    \bea
    \bar{A} = & -(\bar{Q}_1 \tran \bar{Q}_1)\inv \bar{Q}_1 \tran V \tran
    \eea
    Together with $t_1 = 0$,
    \bea
    t = & \parentheses{A \kron \eye_3} r
    \eea
    with
    \bea
    A = & \bmat{c} \zero_{1 \times 3N} \\ -(\bar{Q}_1 \tran \bar{Q}_1)\inv \bar{Q}_1 \tran V \tran \emat \in \Real{N \times 3N}
    \eea
    Now we have a closed form solution of $t$. Plug in the solution of $t$ into \eqref{eq:app-lossrt}, we obtain:
    \bea
    L(r) = & \displaystyle r \tran \parentheses{\parentheses{A \tran Q_1 A + VA + A\tran V\tran + Q_2} \kron \eye_3} r
    \eea
    Note that
    \bea
    r = & \vectorize{R}
    \eea
    Then $L(r)$ is equivalent to
    \bea
    \label{loss_R}
    L(R) = & \displaystyle \vectorize{R} \tran \parentheses{\parentheses{A \tran Q_1 A + VA + A\tran V\tran + Q_2} \kron \eye_3} \vectorize{R}
    \eea
    Rewrite (\ref{loss_R}) in a more compact matricized form gives:
    \bea
    \rho^\star = \min_{R} & \trace{QR \tran R} \\
    Q \coloneqq  & A \tran Q_1 A + VA + A\tran V\tran + Q_2 \in \sym{3N},
    \eea
    concluding the proof.
\end{proof}

\section{Proof of Proposition \ref{prop:qcqp}}
\label{proof_prop3}
\begin{proof}
    We first prove \eqref{eq:sOthreequadratic}. 

    ``$\Longrightarrow$'': If $\bar{R} \in \sOthree$, then $\bar{R} = s R$ for some $s \geq 0$ and $R \in \Othree$. Therefore,
    $$
    \bmat{ccc} 
    c_1\tran c_1 & c_1\tran c_2 & c_1\tran c_3 \\
    * & c_2\tran c_2 & c_2\tran c_3 \\
    * & * & c_3\tran c_3 \emat  = \bar{R}\tran \bar{R} = (sR)\tran (sR) = s^2 R\tran R = s^2 \eye_3,
    $$
    which shows the quadratic constraints in \eqref{eq:sOthreequadratic} must hold. 
    
    ``$\Longleftarrow$'':
    Suppose the quadratic constraints in \eqref{eq:sOthreequadratic} hold. (i) If $c_i\tran c_i = 0,i=1,2,3$, then $\bar{R} = \zero \in \sOthree$ trivially holds. (ii) If $c_i\tran c_i = \alpha > 0,i=1,2,3$, then $\bar{R}$ can be written as 
    $$
    \bar{R} = \sqrt{\alpha} \bmat{ccc} u_1 & u_2 & u_3 \emat,
    $$
    where $u_i,i=1,2,3$ are unit vectors. However, 
    $$
    c_i\tran c_j = 0 \Longleftrightarrow \alpha u_i\tran u_j = 0 \Longleftrightarrow u_i\tran u_j = 0, \quad \forall i \neq j,
    $$
    which means $(u_1,u_2,u_3)$ are orthogonal to each other and therefore $[u_1,u_2,u_3] \in \Othree$.

    We then show that if an optimal solution of \eqref{eq:simsync-qcqp} satisfies \eqref{eq:determinant}, then it must be an optimizer of \eqref{eq:scaledRonly}. First note that \eqref{eq:simsync-qcqp} is a relaxation of \eqref{eq:scaledRonly} and $\rho_{\qcqp}^\star \leq \rho^\star$. This is because any feasible solution to \eqref{eq:scaledRonly} must also be feasible to \eqref{eq:simsync-qcqp}, due to the fact that any scaled rotation must also lie in $\sOthree$ by its definition \eqref{eq:sOthreedef}. However, if $R^\star$ satisfies \eqref{eq:determinant}, then we claim that $R^\star$ is also feasible to \eqref{eq:scaledRonly} (hence also optimal to \eqref{eq:scaledRonly}), \ie each $\bar{R}_i^\star$ can be written as a scaled rotation. In fact, $\bar{R}_i^\star \in \sOthree$ implies $\bar{R}_i^\star = s_i^\star R_i^\star$ for some $s_i^\star \geq 0$ and $R_i^\star \in \Othree$. However,
    $$
    \det{\bar{R}_i^\star} = (s_i^\star)^3 \det R_i^\star > 0 
    $$
    implies $s_i^\star \neq 0$ and $R_i^\star \in \SOthree$, because any matrix in $\Othree$ is either a rotation (with $+1$ determinant) or a reflection (with $-1$ determinant).
\end{proof}

\section{Weighted Scaled Point Cloud Registration}
\label{weighted_umeyama}
Consider the optimization problem where $\{(Y_i, X_i), i=1,...,n\}$ are matching scaled point clouds and we seek the best similarity transformation between them
\begin{equation}\label{eq:scaledpcr}
\min_{\substack{s > 0, R \in \SOthree, t \in \Real{3} \\ i=1,\dots,n } }  \sum_{i=1}^{n}  w_{i} \norm{ sRX_i + t - Y_i }^2,
\end{equation} 
where $w_i,i=1,\dots,n$ are known weights. We will show that problem~\eqref{eq:scaledpcr} admits a closed-form solution.

Let the objective function be 
\bea
f = \sum_{i=1}^{n}  w_{i} \norm{ sRX_i + t - Y_i }^2.
\eea
Firstly, take the derivative with respect to $t$:
\bea
\frac{\partial f}{\partial t} = 2\sum_{i} w_i(sRX_i+t - Y_i)
\eea
Set it to zero, we obtain:
\bea
\label{t_sol}
t = \mu_y - sR\mu_x
\eea
where
\bea
\mu_y = \frac{\sum_{i} w_i y_i}{\sum_{i} w_i} \\
\mu_x = \frac{\sum_{i} w_i x_i}{\sum_{i} w_i}
\eea
Substitute $t$ with (\ref{t_sol}) in original optimization, we obtain:
\bea
f = & \sum_{i=1}^{n}  w_{i} \norm{ sR \bracket{X_i - \mu_x} - (Y_i - \mu_y)  }^2 \\
= & \sum_{i=1}^{n} \norm{  \sqrt{w_{i}} sR \bracket{X_i - \mu_x} - \sqrt{w_{i}}(Y_i - \mu_y)  }^2 \\
= & \sum_{i=1}^{n} \norm{  R \bracket{\sqrt{w_{i}} s (X_i - \mu_x)} - \sqrt{w_{i}}(Y_i - \mu_y)  }^2 
\eea
Let
\bea
A = \bmat{ccc} \sqrt{w_{1}} (Y_1 - \mu_y) & \cdots & \sqrt{w_{n}} (Y_n - \mu_y)  \emat \in \Real{3 \times n}, \\
B = \bmat{ccc} \sqrt{w_{1}} (X_1 - \mu_x) & \cdots & \sqrt{w_{n}} (X_n - \mu_x)  \emat \in \Real{3 \times n}
\eea
Then $f$ is simplified as:
\bea
f = \norm{R\cdot(sB) - A}^2
\eea
Now solve $R$ (taken $s$ as known), the optimization is Wabha's problem. When $\rank{AB\tran} \geq 2$, the optimal rotation matrix $R$ can be uniquely determined as a function of $s$
\bea
g = \min_{R} \norm{R\cdot(sB) - A}^2 = \norm{A}^2 + s^2\norm{B}^2 - 2 \trace{sDS}
\eea
where $AB\tran$ can be computed as $UDV\tran$ by singular value decomposition and
\bea
S = 
\begin{cases}
    \eye_3 & \text{if } \det(U) \det(V) = 1, \\
    \diag{1,\ldots,1,-1} & \text{if } \det(U) \det(V) = -1.
\end{cases}
\eea
And the optimizer $R$ is:
\bea\label{eq:app-optimalR}
R = USV\tran
\eea
Now take derivative of $g$ with respect to $s$, we get:
\bea
\frac{\partial g}{\partial s} = 2s\norm{B}^2-2\trace{DS}
\eea
To minimize $g$, we have
\bea\label{eq:app-optimals}
s = \frac{\trace{DS}}{\norm{B}^2}.
\eea
In summary, we first compute $s$ by \eqref{eq:app-optimals}, and then $R$ according to \eqref{eq:app-optimalR}, finally $t$ from \eqref{t_sol}.

\section{Noise Analysis}
There are several places that involve noise analysis in the paper. (i) \sesync needs uncertainty estimation of the solution returned by Arun's method. We provide analysis in Section \ref{scale-free noise}. (ii) In \simsyncgnc, \gnc needs a noise bound $\beta$ (\cf \eqref{eq:simsyncgnc}). We provide analysis in Section \ref{noise-scale-simsync}. (iii) In \gncsimsync and \teasersimsync, \gnc and \teaser need edge-wise noise bounds (\cf \eqref{eq:scaledpairregistration}). We provide analysis in Section \ref{noise with scale}.

\subsection{Covariance Estimation of Arun's Method}
\label{scale-free noise}

Consider $(R_i, t_i), (R_j, t_j) \in \SEthree$ as camera poses in frame $i$ and $j$ respectively. For point clouds $P$ in world frame, we generate point clouds $P_i$ and $P_j$ by corrupting noises $\epsilon_i \sim \mathcal{N}(0, \sigma^2 \eye_3)$ and $\epsilon_j \sim \mathcal{N}(0, \sigma^2 \eye_3)$ (assuming $\epsilon_i$ and $\epsilon_j$ are independent):
\bea
P_i = & R_i P + t_i + \epsilon_i\\
P_j = & R_j P + t_j + \epsilon_j
\eea
Remove variable $P$, we obtain:
\bea
P_i = R_i R_j \tran P_j + t_i - R_i R_j \tran t_j + \epsilon_i -R_i R_j \tran \epsilon_j
\eea
Reparametrize the variables:
\bea
R_{ij} = & R_i R_j \tran \in \SOthree\\
t_{ij} = & t_i - R_j R_j \tran t_j \in \Real{3}\\
\epsilon_{ij} = & \epsilon_i -R_i R_j \tran \epsilon_j \sim \mathcal{N}(0, 2\sigma^2 \eye_3)
\eea
We obtain:
\bea
\label{optimization_rel_pose}
P_i = R_{ij} P_j +t_{ij} + \epsilon_{ij}
\eea
Arun's method estimates $R_{ij}$ and $t_{ij}$. We want to estimate the uncertainty of the solution computed by Arun's method.

Our strategy is to firstly utilize Arun's method to find the optimal solution for noise free (\ref{optimization_rel_pose}), and then form a Maximum Likelihood Estimator by disturbing the rotation and translation around optimizer. Denote the optimizer of $R_{ij}$ and $t_{ij}$ in Arun's method as $R_{ij}^*$ and $t_{ij}^*$.
Then we rewrite (\ref{optimization_rel_pose}) as
\bea
\label{optimization_rel_pose_disturb}
P_i = R_{ij}^* \exp({\hat{\omega}_{ij}}) P_j +t_{ij}^* + \delta_{ij} + \epsilon_{ij}
\eea
where \(\hat{\omega}\) is a skew-symmetric matrix in the Lie algebra \(\mathfrak{so}(3)\) and $\delta_{ij} \in \Real{3}$. In (\ref{optimization_rel_pose_disturb}), we reparameterize the rotation matrix by compositioning on a rotation action and a logarithm map onto the Tangent Space $T_{R_{ij}^*} \SOthree$.
The exponential map is used to map \(\hat{\omega}\) to a 3D rotation matrix \(R\) in the Lie group $\SOthree$:
\bea
R = \exp(\hat{\omega})
\eea
The hat operator maps this vector to a skew-symmetric matrix \(\hat{\omega}\) in \(\mathfrak{so}(3)\), given by:
\bea
\hat{\omega} = 
\begin{bmatrix}
0 & -\omega_z & \omega_y \\
\omega_z & 0 & -\omega_x \\
-\omega_y & \omega_x & 0
\end{bmatrix}
\eea
Specifically, for the $k$-th measurement, (\ref{optimization_rel_pose_disturb}) is:
\bea
P_{ik} = R_{ij}^* \exp({\hat{\omega}_{ij}}) P_{jk} +t_{ij}^* + \delta_{ij} + \epsilon_{ijk}
\eea
where $\epsilon_{ijk} \sim \mathcal{N}(0, \Sigma_k)$ and further assume that $\epsilon_{ijk}$ are i.i.d for $k=1,...n_{ij}$. With $P_{ik}, P_{jk}, R_{ij}^*, t_{ij}^*$ known for all $k = 1,...,n_{ij}$, rewrite (\ref{optimization_rel_pose_disturb}) as optimization problem on $x_{ij} = ({\omega}_{ij}, \delta_{ij})$:
\bea
\label{least-square}
(\omega_{ij}^*, \delta_{ij}^*) = &\argmin \sum_{k=1}^{n_{ij}} \norm{ \underbrace{R_{ij}^* \exp({\hat{\omega}_{ij}}) P_{jk} +t_{ij}^* + \delta_{ij}}_{=: f(x_{ij})} - P_{ik}}_{\Sigma_{k}^{-1}}
\eea
Since $f(x_{ij})$ is nonlinear function of $x_{ij}$ and the distribution of $x_{ij}$ can be non-Gaussian and arbitrarily complex, we can only compute a Cramer-Rao lower bound on the posterior distribution by linearizing $f(x_{ij})$ around optimal $x_{ij}^*$. By the optimality of $R_{ij}^*, t_{ij}^*$, the optimizer is $x^* = (\omega_{ij}, \delta_{ij})=0$. 
\bea
(\omega_{ij}^*, \delta_{ij}^*) = &\argmin \sum_{k=1}^{n_{ij}} \norm{ \underbrace{R_{ij}^* \exp({\hat{\omega}_{ij}}) P_{jk} +t_{ij}^* + \delta_{ij}}_{=: f(x_{ij})} - P_{ik}}_{\Sigma_{k}^{-1}}
\eea
Note that 
\bea
f(x_{ij}) = & f(0) + \left. \frac{df}{dx_{ij}} \right|_{x_{ij}=0} (x_{ij} - 0) \\
= & R_{ij}^* P_{jk} +t_{ij}^* + \left. \frac{df}{dx_{ij}} \right|_{x_{ij}=0} x_{ij}
\eea
where
\bea
\left. \frac{df}{dx_{ij}} \right|_{x_{ij}=0} = \begin{bmatrix}-R_{ij}^*\hat{P}_{jk} & \eye_3 \end{bmatrix} 
\eea
Plug $f(x_{ij})$ into (\ref{least-square}). We obtain:
\bea
(\omega_{ij}^*, \delta_{ij}^*) \approx &\argmin \sum_{k=1}^{n_{ij}} \norm{  \underbrace{\left. \frac{df}{dx_{ij}} \right|_{x_{ij}=0}}_{=:H_k} x_{ij} - \underbrace{(P_{ik}-R_{ij}^* P_{jk} - t_{ij}^*)}_{=:\tilde{y}_k} }_{\Sigma_{k}^{-1}} \\
= &\argmin \sum_{k=1}^{n_{ij}} \norm{ H_k x_{ij} - \tilde{y}_k}_{\Sigma_{k}^{-1}}
\eea
This forms a linear psedo-measurement equation:
\bea
y_k = H_k x_{ij} + \epsilon_{k}
\eea
with $\epsilon_{k} \sim \mathcal{N}(0, \Sigma_k) =\mathcal{N}(0, 2\sigma^2 \eye_3)$.
With the assumption that $\epsilon_{ijk}$ are i.i.d for $k=1,...n_{ij}$, we obtain the posterior covariance of $x_{ij}$:
\bea
\Sigma_{x_{ij}} = & \text{Cov} (\omega_{ij}, t_{ij}) \nonumber \\
= & \left( \sum_{k=1}^{n_{ij}} H_k^T \Sigma_k^{-1} H_k \right)^{-1} \nonumber \\
= & \sigma^2\left( \sum_{k=1}^{n_{ij}} H_k \tran  H_k\right)^{-1} \label{eq:aruncrlb}
\eea
We feed the covariance matrix in \eqref{eq:aruncrlb} to \sesync.

\subsection{Noise Bound for \simsyncgnc}
\label{noise-scale-simsync}

Consider $(R_i, t_i), (R_j, t_j) \in \SEthree$ as camera poses in frame $i$ and $j$ respectively. For point cloud $P \in \Real{3}$ in world frame, we generate point clouds $P_i$ and $P_j$ by corrupting noises $\epsilon_i \sim \mathcal{N}(0, \sigma^2 \eye_3)$ and $\epsilon_j \sim \mathcal{N}(0, \sigma^2 \eye_3)$:
\bea
P_i = & \frac{1}{s_i}(R_i P + t_i + \epsilon_i)\\
P_j = & \frac{1}{s_j}(R_j P + t_j + \epsilon_j)
\eea
Remove variable $P$, we obtain:
\bea
s_iR_i \tran P_i  - R_i \tran t_i - (s_jR_j \tran P_j - R_j \tran t_j) =  R_i \tran \epsilon_i - R_j \tran \epsilon_j
\eea
$(R_i, t_i), (R_j, t_j) \in \SEthree$ are transformations from world frame to camera frame $i$ and $j$. If we represent the above equation using $(R_i', t_i'), (R_j', t_j') \in \SEthree$ that are transformations from camera frame $i$ and $j$ to world frame. We have:
\bea
s_iR_i' P_i  + t_i' - (s_j R_j' P_j + t_j') =  R_i' \epsilon_i - R_j' \epsilon_j \sim \mathcal{N}(0, {2\sigma^2} \eye_3)
\eea
Then we obtain:
\bea
\norm{s_iR_i' P_i  + t_i' - (s_j R_j' P_j + t_j')}^2_{\frac{1}{2\sigma^2} \eye_3} \sim {\chi}^{2}_{n=3}
\eea
For a probability value of 0.9999 with 3 degrees of freedom, a threshold value of 21.11 is computed from the Chi-square distribution table. Statistically, it suggests that we have 99.99\% confidence that any samples that fall outside of this threshold
are outliers. Together with covariance normalization factor $\sqrt{2\sigma^2}$, $\beta = \sqrt{21.11} \cdot \sqrt{2}\sigma$.

\subsection{Noise Bound for \teaser and \gnc}
\label{noise with scale}

Consider $(R_i, t_i), (R_j, t_j) \in \SEthree$ as camera poses in frame $i$ and $j$ respectively. For point clouds $P$ in world frame, we generate point clouds $P_i$ and $P_j$ by corrupting noises $\epsilon_i \sim \mathcal{N}(0, \sigma^2 \eye_3)$ and $\epsilon_j \sim \mathcal{N}(0, \sigma^2 \eye_3)$:
\bea
P_i = & \frac{1}{s_i}(R_i P + t_i + \epsilon_i)\\
P_j = & \frac{1}{s_j}(R_j P + t_j + \epsilon_j)
\eea
Remove variable $P$, we obtain:
\bea
P_i = \frac{s_j}{s_i}R_i R_j \tran P_j + \frac{1}{s_i}t_i - \frac{1}{s_i}R_i R_j \tran t_j + \frac{1}{s_i}(\epsilon_i -R_i R_j \tran \epsilon_j)
\eea
We obtain:
\bea
\label{optimization_rel_pose_umeyama}
P_i = s_{ij}R_{ij} P_j +t_{ij} + \epsilon_{ij}
\eea
by reparametrizing the variables:
\bea
s_{ij} = & \frac{s_j}{s_i} \\
R_{ij} = & R_i R_j \tran \in \SOthree\\
t_{ij} = & \frac{1}{s_i}t_i - \frac{1}{s_i}R_j R_j \tran t_j \in \Real{3}\\
\epsilon_{ij} = & \frac{1}{s_i}(\epsilon_i -R_i R_j \tran \epsilon_j) \sim \mathcal{N}(0, \frac{2\sigma^2}{s_i^2} \eye_3)
\eea
Then we obtain:
\bea
\norm{s_{ij}R_{ij} P_j +t_{ij} - P_i}^2_{\frac{s_i^2}{2\sigma^2} \eye_3} \sim {\chi}^{2}_{n=3}
\eea
For a probability value of 0.9999 with 3 degrees of freedom, a threshold value of 21.11 is computed from the Chi-square distribution table. Statistically, it suggests that we have 99.99\% confidence that any samples that fall outside of this threshold
are outliers. Together with covariance normalization factor $\sqrt{\frac{2\sigma^2}{s_i^2}}$, $\beta = \sqrt{21.11} \cdot \frac{\sqrt{2}\sigma}{s_i}$.

\bibliography{refs}
\bibliographystyle{plainnat}

\end{document}